
\documentclass{article}

\usepackage{microtype}
\usepackage{graphicx}
\usepackage{subfigure}
\usepackage{wrapfig}
\usepackage{pgfplots}
\usepackage{multirow,tabularx}
\pgfplotsset{compat = newest}
\usetikzlibrary{positioning, arrows.meta}
\usepgfplotslibrary{fillbetween}
\usepackage{booktabs} 

\usepackage{hyperref}
\usepackage{listings}



\usepackage[accepted]{icml2024}

\usepackage{amsmath}
\usepackage{bbm}
\usepackage{amssymb}
\usepackage{mathtools}
\usepackage{amsthm}
\def\thickhline{\noalign{\hrule height.8pt}}
\usepackage{multicol}
\newcommand\indicfcn{{\mathbbm{1}}}
\usepackage[capitalize,noabbrev]{cleveref}

\theoremstyle{plain}
\newtheorem{theorem}{Theorem}[section]

\newtheorem{lemma}[theorem]{Lemma}

\theoremstyle{definition}
\newtheorem{definition}[theorem]{Definition}
\newtheorem{assumption}[theorem]{Assumption}
\theoremstyle{remark}

\usepackage[textsize=tiny]{todonotes}

\icmltitlerunning{Riemannian Preconditioned LoRA}

\begin{document}

\twocolumn[
\icmltitle{Riemannian Preconditioned LoRA for Fine-Tuning Foundation Models}



\icmlsetsymbol{equal}{*}

\begin{icmlauthorlist}
\icmlauthor{Fangzhao Zhang}{yyy}
\icmlauthor{Mert Pilanci}{yyy}
\end{icmlauthorlist}

\icmlaffiliation{yyy}{Department of Electrical Engineering, Stanford University}

\icmlcorrespondingauthor{Fangzhao Zhang}{zfzhao@stanford.edu}

\icmlkeywords{Machine Learning, ICML}

\vskip 0.3in
]



\printAffiliationsAndNotice{}  

\begin{abstract}
Low-Rank Adaptation (LoRA) emerges as a popular parameter-efficient fine-tuning (PEFT) method, which proposes to freeze pretrained model weights and update an additive low-rank trainable matrix. In this work, we study the enhancement of LoRA training by introducing an $r\times r$  preconditioner in each gradient step where $r$ is the LoRA rank. We theoretically verify that the proposed preconditioner stabilizes feature learning  with LoRA under infinite-width NN setting. Empirically, the implementation of this new preconditioner requires a small change to existing optimizer code and creates virtually minuscule storage and runtime overhead. Our experimental results with both large language models and text-to-image diffusion models show that with this new preconditioner, the convergence and reliability of SGD and AdamW can be significantly enhanced. 
Moreover, the training process becomes much more robust to hyperparameter choices such as  learning rate. The new preconditioner can be derived from a novel Riemannian metric in low-rank matrix field. Code can be accessed at \url{https://github.com/pilancilab/Riemannian_Preconditioned_LoRA}.
\end{abstract}

\section{Introduction}\label{intro}
With the expanding scale of neural network models in both vision and language domains, training a neural network from scratch to match the performance of existing large models has become almost infeasible. As a result, fine-tuning has emerged as a prevalent approach for downstream tasks. Traditional full parameter fine-tuning demands extensive storage, making it impractical for many applications. In contrast, recent advances in Parameter-Efficient Fine-Tuning (PEFT) methods offer a more storage-efficient solution while still delivering strong performance in downstream tasks. A widely-used PEFT method is Low Rank Adaptation, also known as LoRA \cite{hu2021lora}, which proposes to add low-rank matrices to existing model weights and only train these additive components. Simply speaking, for a pretrained model weight matrix $W$ of dimension $m$ by $n$, LoRA replaces $W$ with $W+BA$ where $B,A$ are trainable weight matrices of dimension $m$ by $r$ and $r$ by $n$ for some small rank $r$. In the fine-tuning procedure, $W$ is frozen and we are only optimizing over $A$'s and $B$'s. Compared to full fine-tuning, LoRA introduces fewer trainable parameters. Effectiveness of LoRA has been  empirically verified in different fields.  Here we note that optimizing LoRA parameters falls into optimizing over low-rank matrices which form a quotient manifold. This motivates the idea of enhancing LoRA training via tools from the field of Riemannian optimization.

Recent work such as LoRA+ \cite{hayou2024lora} has drawn attention to the optimization paradigm of LoRA and reveals that the learning rate of LoRA parameter $B$ should be set larger than that of $A$ to achieve stable feature learning  under the infinite-width NN setting, making the learning rate tuning a joint hyperparameter search. More specifically, denote learning rates for $A$ and $B$ by $\eta_A$ and $\eta_B$ respectively, LoRA+ proposes to use a heuristic $\eta_B/\eta_A=2^4$ in practice and tune only $\eta_A$.  In this work, we propose an improvement of LoRA training with the introduction of an $r\times r$ preconditioner in its optimization step, and we show that with this simple  preconditioner, LoRA learns stable features without the need of setting different learning rates for $A$ and $B$. As an illustration, we propose modifying the gradient updates for the LoRA parameters as follows
\begin{equation}
\begin{aligned}
A_{t+1}=A_t-\alpha (B_t^TB_t)^{-1}(\nabla_{A_t}\mathcal{L}),\\
B_{t+1}=B_t-\alpha(\nabla_{B_t}\mathcal{L})(A_tA_t^T)^{-1},
\end{aligned}
\end{equation}\label{formula}
 \begin{figure*}[ht!]
 \vspace{-0.2cm}
 \centering
\includegraphics[width=0.85\linewidth]{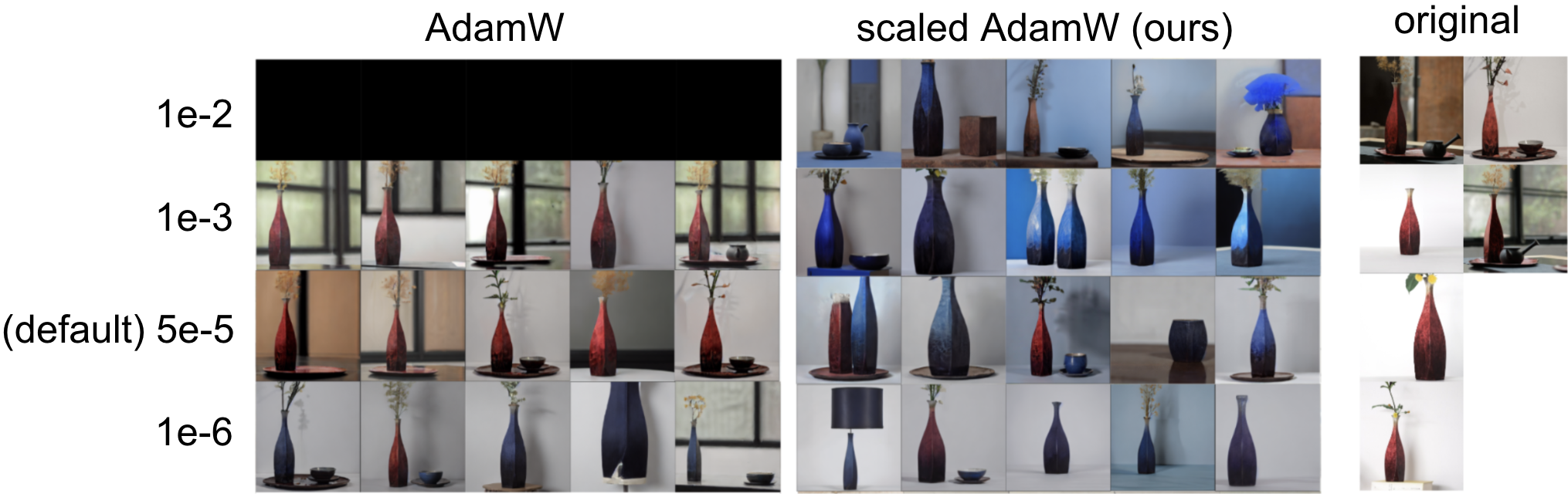}
\caption{
Generation results for prompt ``a blue $\langle V_{\text{vase}} \rangle$'' after fine-tuning on $6$ red vase images  of the Stable Diffusion V1.5 model. No black images are observed for our method (scaled AdamW)'s generation and AdamW generates only black images for large learning rates. Our method generates photos better capturing the prompt and is more robust to learning rate changes. See Section \ref{object_section} for experimental details. 
}\label{vase_main}
\end{figure*}
\vspace{-0.35cm}

where $(A_tA_t^T)^{-1}$ and $(B_t^TB_t)^{-1}$ are  the preconditioners we introduce and $\mathcal{L}$ is the training loss objective. This scaled gradient method (scaled GD) can be derived from a novel Riemannian metric studied in \cite{mishra2012riemannian} 
 which takes into consideration both the objective function and constraints and has been shown to have better convergence rate for  conventional low-rank matrix optimization problems involving matrix sensing and robust PCA \cite{candes2009robust}. Our work shows for the first time that this preconditioner enables LoRA to achieve stable feature learning with Adam optimizer, revealing the superiority of preconditioning in fine-tuning deep learning models. Empirically, we verify the validity of this new preconditioner for LoRA training via extensive fine-tuning tasks for both language models and text-to-image diffusion models. The experimental results show that the convergence of both SGD and AdamW is significantly enhanced by preconditioning (gradient scaling). Despite the accelerated convergence, the optimization procedure also becomes   more robust to learning rate changes with this new preconditioner. Figure \ref{vase_main} compares the generation results for diffusion model fine-tuning with AdamW optimizer with and without gradient scaling, from which it can be observed that  gradient scaling significantly improves the generation quality as well as training robustness against learning rate changes, see Algorithm \ref{algo:code} for the formal algorithm for scaled AdamW. Most importantly, our preconditioner is only of dimension $r$ by $r$ thus the storage overhead is negligible for small LoRA rank $r$. Also, unlike most second-order preconditioners based on Hessian inverses, inverting an $r$ by $r$ matrix for small $r$ introduces negligible runtime overhead and runs as fast as unpreconditioned optimizers. In practice, small values, e.g., $r=4$, are usually used as default for LoRA fine-tuning, see Figure \ref{small_GPU} for runtime comparison when fine-tuning GPT-2 with both scaled and unscaled optimizers.

To summarize, we study the application of  preconditioned gradient updates (\ref{formula}) in LoRA training. Theoretically, we show that the proposed preconditioner provides an elegant solution to learning rate choices for stable feature learning with LoRA under infinite-width NN setting. We also provide a convergence guarantee for fine-tuning a reparameterized model which is equivalent to a two-layer ReLU network via convexification. See Section \ref{stable_feature} and Section \ref{theory} respectively for details. Empirically, we apply this method for LoRA fine-tuning for both large language and  diffusion models and observe that scaled optimizers significantly improve the performance of unscaled optimizers. See Section \ref{simulation} for the experimental details. To the best of our knowledge, this is the first work to apply Riemannian optimization in designing preconditioners for fine-tuning large foundation models. This approach is particularly appropriate given the matrix factorization characteristics of the LoRA model.

Next, we first provide some intuition on our method in Section \ref{intuition}. We then show that LoRA training with the proposed preconditioners achieves stable feature learning in Section \ref{stable_feature}. We include an overview of basic Riemannian optimization concepts as well as the derivation of this preconditioner by introduction of a new Riemannian metric  in Section \ref{metric_sec}. We offer a pseudocode and present extensive experimental results in Section \ref{simulation}.  Finally we state our convergence results for related problems in Section \ref{theory} and review prior literature in Section \ref{prior_work}.

\section{Notation}
We adopt same notation convention as in LoRA+ \cite{hayou2024lora}. Specifically, for any sequence $s_n\in\mathbb{R}$ and any given $c_n\in\mathbb{R}^+$, we use $s_n=\mathcal{O}(c_n)$ and $s_n=\Omega(c_n)$ to represent $s_n<\kappa c_n$ and $s_n>\kappa c_n$ respectively for some $\kappa>0.$ We use $s_n=\Theta(c_n)$ when both $s_n=\mathcal{O}(c_n)$ and $s_n=\Omega(c_n)$. For vector and matrix sequences, the notation is applied entrywise. For a sequence that is a vector of random variables, convergence in second moment, i.e., $L_2$ convergence, is adopted.
\section{Theoretical Insights}\label{intuition}
We first offer an intuitive explanation for the effectiveness of the  preconditioner (\ref{formula}) before we move on to its stable learning properties and  review its rigorous derivation from a Riemannian metric formulation. Consider in the $t$-th iteration  pretrained model weight $W$ and its additive low-rank component $B_tA_t$. Let $X_t=W+B_tA_t$ denote the whole weight matrix and let $\mathcal{L}$ denote the loss function, i.e., $\mathcal{L}(A,B):=\mathcal{L}(W+BA)$. For the plain gradient descent method, when step size $\eta$ is small, the updated weight is approximately given by 
\begin{equation*}
\begin{aligned}
X_{t+1}&=W+B_{t+1}A_{t+1}\\
&\approx W+B_tA_t-\eta \nabla_{B_t} \mathcal{L}A_t-\eta B_t\nabla_{A_t} \mathcal{L} \\
&=X_t-\eta \nabla_{X_t} \mathcal{L}(A_t^TA_t)-\eta (B_tB_t^T)\nabla_{X_t} \mathcal{L},
\end{aligned}
\end{equation*}
where we ignore the second-order term in the second line since when $\eta$ is small, $\eta^2$ is negligible. The derivation from the second line to the third line comes from the simple fact $\nabla_{B_t} \mathcal{L}=(\nabla_{X_t} \mathcal{L})A_t^T$ and $\nabla_{A_t} \mathcal{L}=B_t^T(\nabla_{X_t} \mathcal{L}) $. Thus the LoRA update in gradient descent step is approximately  constrained to the column space of $B_t$ and the row space of $A_t$. If we scale $\nabla_{B_t} \mathcal{L}$ right by $(A_tA_t^T)^{-1}$ and scale $\nabla_{A_t} \mathcal{L}$ left by $(B_t^TB_t)^{-1}$, which is exactly the preconditioner (\ref{formula}), the scaled update becomes
\begin{equation*}
\begin{aligned}
\tilde X_{t+1}&\approx X_t-\eta (\nabla_{X_t} \mathcal{L})A_t^T(A_tA_t^T)^{-1}A_t\\& \qquad ~~-\eta B_t(B_t^TB_t)^{-1}B_t^T(\nabla_{X_t} \mathcal{L}) \\
&=X_t-\eta \mathrm{Proj}_{\text{row}(A_t)}(\nabla_{X_t} \mathcal{L})-\eta \mathrm{Proj}_{\text{col}(B_t)}(\nabla_{X_t} \mathcal{L})^T,
\end{aligned}
\end{equation*}
where the update is done according to projection of the full matrix gradient onto the row space of $A_t$ and the column space of $B_t$, which better approximates full fine-tuning compared to the unscaled gradient descent step. Therefore, our preconditioner (\ref{formula}) effectively serves as a gradient projector.
\section{Stable Feature Learning}\label{stable_feature}
Given the current trend of increasing model sizes, there has been a focus on analyzing the asymptotic training behavior of neural networks as the number of neurons approaches infinity \cite{schoenholz2017deep,hayou2019impact,yang2020scaling}. Under this infinite-width NN setting, we naturally expect that both the NN prediction $f^t(x)$ in the $t$-th iteration and the increment $\Delta f^t:=f^t(x)-f^{t-1}(x)$ to be of constant magnitude, which ensures that neither the NN predictions nor the increments explode or vanish as the NN size increases, thereby leading to stable training dynamics. We refer to this behavior as \emph{stable feature learning}, formally defined in Definition \ref{thm3_def} in the Appendix. The authors of LoRA+ \cite{hayou2024lora} observe that the learning rate of LoRA parameter $B$ should be set to be larger than that of $A$ for asymptotic stable feature learning. Their analysis mostly focuses on the order of magnitude of iterates and the conclusion that $\eta_B\gg \eta_A$ does not immediately offer practical guidance. Here we show that our preconditioned updates (\ref{formula}) introduce an elegant and practical solution to learning rate choices for stable feature learning in the infinite-width NN regime. Specifically, we will see that LoRA trained with Adam optimizer scaled by our preconditioner as in \eqref{formula} leads to stable feature learning when the same learning rate is used for training $A$ and $B$. In contrast, LoRA trained with the unpreconditioned Adam optimizer requires different learning rate settings for $A$ and $B$ to achieve stable feature learning. 

We first illustrate our key point via a simple toy example and we then proceed to our main theorem. 
 Consider the simple linear model 
\[f(x)=(W+ba^T)x,\]
where $W\in\mathbb{R}^{1\times n}$ is the pretrained model weight and $b\in\mathbb{R},a\in\mathbb{R}^n$ are trainable LoRA parameters. Consider the quadratic loss function $\mathcal{L}(a,b)=(f(x)-y)^2/2$ with some scalar label $y.$ We adopt Gaussian initialization $a_i\sim\mathcal{N}(0,\sigma_a^2),b\sim\mathcal{N}(0,\sigma_b^2)$. Conventionally, $ba^T$ is initialized at zero for LoRA, and we thus consider setting $\sigma_a^2=0,\sigma_b^2=\Theta(1)$.

\textbf{Analysis of unpreconditioned training. }Assume the model is trained with gradient descent with learning rate $\eta=\Theta(n^c)$ for some $c\in\mathbb{R}.$ Since the training procedure involves only elementary algebraic operations, the quantities there should be of powers of $n.$  In iteration $t$, the feature update without preconditioning is given by
\begin{equation*}
\begin{aligned}
\Delta f_t&:=f_t(x)-f_{t-1}(x)\\&=-\eta b_{t-1}^2(f_{t-1}(x)-y)\|x\|^2\\&\quad-\eta(a_{t-1}^Tx)^2(f_{t-1}(x)-y)\\&\quad +\eta^2(f_{t-1}(x)-y)^2b_{t-1}(a_{t-1}^Tx)\|x\|^2.
\end{aligned}
\end{equation*}
We denote $\delta_t^1=\eta b_{t-1}^2(f_{t-1}(x)-y)\|x\|^2,\delta_t^2=\eta(a_{t-1}^Tx)^2(f_{t-1}(x)-y),$ and $\delta_t^3=\eta^2(f_{t-1}(x)-y)^2b_{t-1}(a_{t-1}^Tx)\|x\|^2.$ For stable feature learning, we would like $\delta_t^1,\delta_t^2,\delta_t^3\in\Theta(1)$ and further $f_{t-1}(x)\in\Theta(1).$ Note that $\delta_t^3\in\Theta(1)$ is guaranteed as long as $\delta_t^1,\delta_t^2\in\Theta(1).$ Thus for stable feature learning, it suffices to have $\delta_t^1,\delta_t^2,f_{t-1}(x)\in\Theta(1).$ For the sake of notational clarity, we introduce new notation $\gamma$ such that 
 $v=\Theta(n^{\gamma[v]})$ captures the polynomial behavior for any $v$. We refer readers to Section A.2 in \cite{hayou2024lora} for additional properties of $\gamma[\cdot].$ Stable feature learning is thus equal to the following linear constraints 
\begin{equation*}
\begin{cases}
c+2\gamma[b_{t-1}]+1=0 \quad (\text{for } \delta_t^1=\Theta(1)),\\
c+2\gamma[a_{t-1}^Tx]=0\quad (\text{for } \delta_t^2=\Theta(1)),\\
\gamma[b_{t-1}]+\gamma[a_{t-1}^Tx]=0 \quad (\text{for } f_{t-1}(x)=\Theta(1)),
\end{cases}
\end{equation*}
from which we can derive $c=-1/2$ and thus the learning rate should be set to $\eta=\Theta(n^{-1/2}).$ With $\gamma[b_1]=\gamma[b_0]=0$ and $\gamma[a_1^Tx]=\gamma[\eta b_0y\|x\|^2]=1/2,$ one can inductively deduce that $\gamma[b_t]=0$ and $\gamma[a_t^Tx]=1/2$ for all $t$ and thus $\gamma[f_t]=1/2$, contradicting to $ f_t=\Theta(1).$ Instead, stable feature learning requires to set separately $\eta_a=\Theta(1/n)$ and $\eta_b=\Theta(1)$  as shown in Proposition 2 in \cite{hayou2024lora}. 

\textbf{Analysis of preconditioned training. } We now proceed to show that with our preconditoned parameter updates (\ref{formula}), $c=-1$ would generate stable feature learning. After the injection of preconditioner (\ref{formula}), the training dynamics are given by
\begin{equation*}
\begin{aligned}
\Delta f_t^{\text{}}&:=f_t^{\text{}}(x)-f_{t-1}^{\text{}}(x)\\
& =(b_{t-1}-\eta (f_{t-1}^{\text{}}(x)-y)a_{t-1}^Tx\underbrace{\|a_{t-1}^{\text{}}\|^{-2}}_{\text{ preconditioner}})(a_{t-1}^T\\&\qquad -\eta (f_{t-1}^{\text{}}(x)-y)b_{t-1}^{\text{}}x^T\underbrace{b_{t-1}^{-2}}_{\text{ preconditioner}})x\\
&=-\eta (f_{t-1}(x)-y)\|x\|^2\\&\qquad -\eta(a_{t-1}^Tx)^2(f_{t-1}(x)-y)\|a_{t-1}\|^{-2}\\&\qquad +\eta^2(f_{t-1}(x)-y)^2b_{t-1}^{-1}\|a_{t-1}\|^{-2}(a_{t-1}^Tx)\|x\|^2.
\end{aligned}
\end{equation*}
We similarly define $\delta_t^{1 \text{scaled}}=\eta(f_{t-1}(x)-y)\|x\|^2,$ $\delta_t^{2 \text{scaled}}=\eta(a_{t-1}^Tx)^2(f_{t-1}(x)-y)\|a_{t-1}\|^{-2},$ $\delta_t^{3 \text{scaled}}=\eta^2(f_{t-1}(x)-y)^2b_{t-1}^{-1}\|a_{t-1}\|^{-2}(a_{t-1}^Tx)\|x\|^2$ as scaled version of $\delta_t^1,\delta_t^2,\delta_t^3.$ For stable feature learning, we thus have the below modified linear constraints 
\begin{equation*}
\begin{cases}
c+1=0 \quad (\text{for } \delta_t^{1 \text{scaled}}=\Theta(1)),\\
c+2\gamma[a_{t-1}^Tx]-\gamma[\|a_{t-1}\|^2]=0\quad (\text{for } \delta_t^{1 \text{scaled}}=\Theta(1)),\\
\gamma[b_{t-1}]+\gamma[a_{t-1}^Tx]=0 \quad (\text{for } f_{t-1}^{\text{}}(x)=\Theta(1)),
\end{cases}
\end{equation*}
from which we can derive $c=-1.$ With $\eta^{\text{}}=\Theta(n^{-1}),$ we get $\gamma[b_1^{\text{}}]=\gamma[b_0]=0$ and $\gamma[a_1^{\text{}T}x]=\gamma[\eta^{\text{}}b_0^{-1}y\|x\|^2]=0.$ One can recursively derive $b_t^{\text{}},a_t^{\text{}T}x,\delta_1^{\text{scaled}},\delta_2^{\text{scaled}},\delta_3^{\text{scaled}}\in\Theta(1)$  for all $t$, which preserves $f_t\in\Theta(1)$ and $\Delta f_t\in\Theta(1)$, i.e., stability is achieved.

The above toy example illustrates that our preconditioned LoRA updates achieve stable feature learning with learning rates for $A$ and $B$ of same order of magnitude, while for unpreconditioned LoRA, different learning rates are required to obtain the same stability. Nevertheless, the toy example is limited to linear model with LoRA rank $r=1$ and also gradient updates without a momentum term. Indeed, the  stability persists for arbitrary LoRA ranks and for the Adam optimizer when aided with our preconditioner (\ref{formula}), which we formalize as our theorem below.  
\begin{theorem}\label{stable_thm}[Stable Feature Learning (Informal)] Assume LoRA parameters $A$ and $B$ are trained with Adam scaled by our preconditioner as in \eqref{formula}. Further assume that $BAx$ has dimension $\Theta(n).$ Then the LoRA model achieves  stable feature learning with $\eta=\Theta(1).$ While for unscaled Adam, $\eta_A=\Theta(n^{-1})$ and $\eta_B=\Theta(1)$ are required for stable feature learning.
\end{theorem}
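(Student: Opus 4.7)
The plan is to generalize the scalar argument of the toy example to arbitrary LoRA rank $r$ and to the Adam optimizer, using the same polynomial-order calculus $v = \Theta(n^{\gamma[v]})$ already adopted in the excerpt. The crucial new ingredient for handling Adam is its sign-like normalization: under generic gradient statistics, the entries of the raw Adam output $\hat m_t/(\sqrt{\hat v_t}+\epsilon)$ lie in $\Theta(1)$ regardless of the magnitude of the underlying gradient, so that after multiplication by the learning rate each entry of an unscaled Adam step sits in $\Theta(\eta)$. This is the mechanism relied upon in LoRA+'s Adam analysis and I would import it as a black box. I would then apply the $r\times r$ preconditioners on the appropriate sides,
\[
\Delta A_t = -\eta\,(B_t^T B_t)^{-1}\,U_A^t, \qquad \Delta B_t = -\eta\,U_B^t\,(A_t A_t^T)^{-1},
\]
and track how the two preconditioner factors shift the $\gamma$-orders of $\Delta A_t$ and $\Delta B_t$ relative to the unpreconditioned case.

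Under the standing assumption that $BAx$ has dimension $\Theta(n)$, $B$ is of height $\Theta(n)$ with $\gamma[B]=0$ entry-wise, which gives $\gamma[B_t^T B_t]=1$ and hence $\gamma[(B_t^T B_t)^{-1}]=-1$ entry-wise. A routine first-step calculation starting from $A_0=0$ then fixes the entry-wise $\gamma$-order of $A_t$ for $t\geq 1$, and thereby of $(A_t A_t^T)^{-1}$, in a manner symmetric to $B$. I would then expand the feature increment as
\[
\Delta f_t = \Delta B_t\, A_t x + B_t\,\Delta A_t\, x + \Delta B_t\,\Delta A_t\, x,
\]
read off the $\gamma$-orders of each summand, and impose the stable-learning requirements $\Delta f_t = \Theta(1)$ and $f_{t-1} = \Theta(1)$. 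The resulting linear system in $c = \gamma[\eta]$ collapses to $c = 0$ (i.e.\ $\eta=\Theta(1)$) in the scaled case, while the analogous unscaled system produces the split $\gamma[\eta_A]=-1$, $\gamma[\eta_B]=0$, matching the second half of the theorem. An induction on $t$ mirroring the closing step of the toy example propagates the $\gamma$-orders of $B_t$, $A_t$, and $A_t x$ forward, closing both claims.

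The hardest step will be controlling the preconditioners uniformly in $t$. Because $(B_t^T B_t)^{-1}$ and $(A_t A_t^T)^{-1}$ mix the $r$ rows or columns of the corresponding Adam output, I must rule out leading-order cancellations in that mixing and ensure that $B_t^T B_t$ and $A_t A_t^T$ remain well-conditioned throughout training rather than drifting toward singularity as the iteration progresses. I would fold both properties into the inductive hypothesis as genericity assumptions on the data, the initialization, and the Adam sign patterns, in the spirit of the assumptions used in LoRA+. A second subtlety is that the zero initialization of $A_0$ makes $(A_0 A_0^T)^{-1}$ ill-defined, so the argument for the $B$-update has to begin at $t\geq 1$, after $A$ has received one scaled-Adam step from the $B_0$-carried gradient; handling $t=0$ separately is a small bookkeeping item that I would dispatch by direct inspection. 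Granted these, the remainder is routine $\gamma$-calculus applied term by term.
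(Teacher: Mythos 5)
Your overall toolkit matches the paper's: the same three-term decomposition $\Delta f_t=\Delta B_t A_t x+B_t\Delta A_t x+\Delta B_t\Delta A_t x$, the same $\gamma$-order calculus, the Adam normalization imported from LoRA+ (the paper encodes it as the assumption $g_A^t x=\Theta(n)$), and a lemma-level fact that $\gamma[(M M^T)^{-1}]=-\gamma[\|\text{row of }M\|^2]$ under well-conditioning (the paper proves this via adjugate/determinant orders; you fold it into genericity, which is acceptable in spirit). The paper also does not re-derive the unscaled split $\gamma[\eta_A]=-1,\gamma[\eta_B]=0$ but simply cites Theorem 1 of LoRA+, so that part of your plan is if anything more self-contained.

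However, there is a genuine gap in the step you dismiss as ``a routine first-step calculation starting from $A_0=0$.'' The paper's vector-output proof uses the opposite initialization, $B_0=0$ and $A_0$ Gaussian with $\sigma_a^2=\Theta(n^{-1})$, and this choice is what makes the single learning rate $\eta=\Theta(1)$ come out: it gives $\gamma[(A_tA_t^T)^{-1}]=0$ and $\gamma[(B_t^TB_t)^{-1}]=-1$ for $t\ge 1$, and the $-1$ is exactly offset by the $+1$ from $g_A^t x=\Theta(n)$, so all three constraints ($\delta^1,\delta^2,f_{t-1}\in\Theta(1)$) are satisfied simultaneously. Under your setup ($A_0=0$, $B_0$ with $\Theta(1)$ entries and height $\Theta(n)$), the bookkeeping does not balance: after the first scaled step the entries of $A_1$ are $\Theta(\eta/n)$, so its rows have squared norm $\Theta(\eta^2/n)$ and $\gamma[(A_1A_1^T)^{-1}]=1-2c$ with $c=\gamma[\eta]$; the term $\eta\, g_B (A A^T)^{-1} A x$ then has order $c+(1-2c)+c=1$ \emph{independently of} $c$, i.e.\ the feature increment is $\Theta(n)$ for every constant-order learning rate, and the $\Theta(n^{1-c})$ preconditioned $B$-update inflates $B_t$ as well. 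So with your initialization the linear system you expect to ``collapse to $c=0$'' in fact has no solution, and the claimed conclusion cannot be recovered by handling $t=0$ as a bookkeeping afterthought. (The toy example in the paper does use $a_0=0$, but that is the scalar-output regime with $\eta=\Theta(n^{-1})$; the appendix explicitly explains that the $\eta=\Theta(1)$ statement hinges on the vector output together with the $B_0=0$, $\sigma_a^2=\Theta(1/n)$ initialization.) To repair your argument you must either switch to the paper's initialization or explicitly rescale $\sigma_b$, in which case the admissible learning rate changes order and no longer matches the theorem as stated.
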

\vspace{-0.2cm}
Note here we require $\eta=\Theta(1)$ instead of $\eta=\Theta(n^{-1})$ as in the toy example, this discrepancy is due to our assumption about vector output and  Adam optimizer's gradient processing.  See Section \ref{stable_proof} for the detailed statement and the proof of Theorem \ref{stable_thm} as well as an explanation of this difference. Theorem 1 in \cite{hayou2024lora} shows that $\eta_A=\Theta(n^{-1})$ and $\eta_B=\Theta(1)$ are required for LoRA training with unpreconditioned Adam to obtain stability, revealing that without preconditioning we need to tune $\eta_A$ and $\eta_B$ separately while our preconditioner elegantly fixes this imbalance between the learning rates.

\begin{algorithm*}[ht!]
\caption{Pseudocode of scaled AdamW in  PyTorch.}
\label{algo:code}

\algcomment{\fontsize{7.2pt}{0em}\selectfont \texttt{pairwise}: read every two elements in a list

}
\definecolor{codeblue}{rgb}{0.25,0.5,0.5}
\definecolor{codered}{rgb}{0.8,0.2,0.2}
\lstset{
  backgroundcolor=\color{white},
  basicstyle=\fontsize{7.2pt}{7.2pt}\ttfamily\selectfont,
  columns=fullflexible,
  breaklines=true,
  captionpos=b,
  commentstyle=\fontsize{7.2pt}{7.2pt}\color{codeblue},
  keywordstyle=\fontsize{7.2pt}{7.2pt},
  emph={{,},]}, 
emphstyle=\color{codered}, 
escapeinside={<@}{@>}
}
\begin{lstlisting}[language=python]
# group trainable parameters into LoRA pairs in train.py. 
<@\textcolor{codered}{for LoRA$\_$A, LoRA$\_$B in pairwise(trainable$\_$parameter):}@>
     <@\textcolor{codered}{param$\_$groups.append($\{$"params": [LoRA$\_$A,LoRA$\_$B], "lr": learning$\_$rate$\}$)}@>
# apply preconditioner in optimizer.py
for group in param_groups:
      A, B = group["params"]  
      dA, dB = group["params"].grad 
      # update parameter A 
      <@\textcolor{codered}{dA$\_$scaled =inverse(B.T@B+delta*torch.eye(r)).mm(dA) }@> # precondition gradient of A
      A_m = beta1*A_m+(1-beta1)*dA_scaled; A_m_hat = A_m/(1-beta1**t)  # update first momentum of A
      A_v = beta2*A_v+(1-beta2)*dA_scaled**2; A_v_hat = A_v/(1-beta2**self.t)  # update second momentum of A
      A.add_(A_m_hat/(sqrt(A_v_hat)+eps), -group['lr']) # update A
      # update parameter B similarly
      # ...
\end{lstlisting}
\end{algorithm*}
\section{\textcolor{black}{A Riemannian Metric Formulation }}\label{metric_sec}
After discussing the motivation and superiority for stabilizing feature learning of our proposed preconditioner in prior sections, we now review how the proposed preconditioner is derived from a new Riemannian metric. Optimization over matrix with rank constraint is a common example for optimization over Riemannian submanifolds. Specifically, matrices with fixed rank form a quotient manifold of general matrix field. Let $\mathcal{M}$ denote any Riemannian submanifold, then  Riemannian gradient descent usually takes form \[X_{t+1}=\mathcal{R}(X_t-\eta \nabla_{M_r}f(X_t)),\]
where $\mathcal{R}$ is a retraction operator that maps to $\mathcal{M}$. Here, $f$ is the objective function and  $\nabla_{M_r}f(X_t)$ denotes Riemannian gradient defined by \[Df(x)[\eta_x]=g_x(\nabla_{M_r}f(x),\eta_x) \text{ for all }\eta_x\in T_x\mathcal{M},\]
where $Df(x)[\eta_x]$ is the conventional Euclidean directional derivative
of $f$ in the direction $\eta_x$. In this definition, $g_x$ is a Riemannian metric which maps two elements in the tangent space $T_x\mathcal{M}$ to a real number and might not be unique, we will see that the innovative design of $g_x$ is the key to derive our preconditioned updates (\ref{formula}). Before proceeding to the derivation of our preconditioner, we need one more piece of knowledge about quotient manifold.

When it comes to a quotient space $\mathcal{M}/\sim$ where each element $[x]=\{y\in\mathcal{M}:y\sim x\}$ represents an equivalence class, for low-rank matrix problems where $AB^T$ are considered, each $(A,B)$ pair is equivalent to $(AO,BO^{-1})$ for any $O\in GL(r)$ in the sense that they obtain the same objective value, where $GL(r)$ stands for the general linear group over $r\times r$ invertible matrices.  Tangent space $T_{[x]}\mathcal{M}$ respects the equivalence relation $\sim$ by the introduction of horizontal and vertical spaces at each element, i.e., we decompose $T_x\mathcal{M}=\mathcal{V}_x\oplus\mathcal{H}_x$ where $\mathcal{V}_x$ is the tangent space of the equivalence class $[x]$ and $\mathcal{H}_x$ is its complement. Then each $\eta_{[x]}\in T_{[x]}\mathcal{M}$ corresponds to a unique element in $\mathcal{H}_x$ which is called the horizontal lift of $x.$ A Riemannian metric for $\mathcal{M}/\sim$ satisfies 
\[g_{[x]}(\eta_{[x]},\xi_{[x]})=g_x(\eta_x,\xi_x),\]
where $\eta_x$ and $\xi_x$ are horizontal lifts of $\eta_{[x]}$ and $\xi_{[x]}$ at $x$. Thus a Riemannian metric on quotient space is invariant along equivalence classes of the quotient space.

In \cite{Mishra_2016}, Mishra et al. describe a new Riemannian metric that draws motivation from regularized Lagrangian and involves both objectives and constraints. When specialized to least squares matrix decomposition  problem of form $\|AB^T-Y\|_F^2/2$, following derivation (33) in \cite{Mishra_2016}, we get the following metric on quotient space $[x]=(A,B)$,
\begin{equation}\label{new_metric}
g_{[x]}(\eta_{[x]},\xi_{[x]})=\langle\eta_A,\xi_AB^TB\rangle +\langle\eta_B,\xi_B A^TA\rangle.
\end{equation}
Under this new metric, the Riemannian gradient descent effectively replaces the  gradient operators via the map
\begin{equation*}\label{preconditioner}
\Big(\frac{\partial}{\partial A}\Big)\rightarrow \Big(\frac{\partial}{\partial A}\Big)(B^TB)^{-1}, \Big(\frac{\partial}{\partial B}\Big)\rightarrow \Big(\frac{\partial}{\partial B}\Big)(A^TA)^{-1},
\end{equation*}
which then corresponds to the preconditioned updates (\ref{formula}) we propose in this work. For details in the design of the new metric (\ref{new_metric}) and its connection with sequential quadratic programming (SQP), we point readers to \cite{Mishra_2016} and \cite{mishra2012riemannian} which include a thorough explanation and visualization of Riemannian optimization concepts.
\section{Empirical Results}\label{simulation}
\subsection{Algorithms and Simple Implementation}
In this section, we describe the optimization algorithms and the software  implementation we use for accelerating LoRA training in practice. Let $\mathcal{L}$ denote the loss function and $(A^{(t)},B^{(t)})$ denote the pair of LoRA parameters in the $t$-th iteration. 

\textbf{Scaled GD.} To apply gradient scaling to SGD, we follow exactly (\ref{formula}) and use $(B^{(t-1)^T}B^{(t-1)}+\delta I)^{-1}$ to scale gradient  $\nabla_{A^{(t-1)}} \mathcal{L}$ and vice versa. Note here a small $\delta>0$ is used to tackle the case when either $B^{(t-1)^T}B^{(t-1)}$ or $A^{(t-1)}A^{(t-1)^T}$ is not invertible. See Appendix \ref{sgd_sec} for the complete algorithm.

\textbf{Scaled AdamW.} The conventional scaled GD method studied for classic low-rank matrix optimization problems is only based on the gradient descent method. Our Theorem \ref{stable_thm} introduces this preconditioner for Adam for the first time and reveals its advantages for Adam and its variants. We note that AdamW is more popular than SGD for fine-tuning due to its fast convergence. To extend preconditioning to AdamW, one could apply the preconditioner at each individual gradient computation step or  apply the scaling to the processed gradient.  Though our proof of Theorem \ref{stable_thm} adopts the latter version and preconditions the processed gradient,  we empirically  find that scaling the gradient in each single iteration behaves better than scaling the processed gradient, thus we scale each single gradient in AdamW algorithm for our practical implementation and dub it \text{scaled AdamW} method. 
We outline the pseudocode of our scaled AdamW in Algorithm \ref{algo:code}. Remarkably, our method only requires four lines change of existing optimizer code, which is simple to implement. We highlight the changed code in red color.

\subsection{Runtime Comparison}\label{runtime_section}
\begin{table*}[ht!]
\centering
{
\scalebox{1.0}{
\begin{tabular}{c|cccccc}
\toprule
\multirow{2}{*}{Method}  
 & \multicolumn{5}{c}{E2E}  \\ 
& BLEU & NIST &MET & ROUGE-L & CIDEr \\
  \midrule 
SGD$_{r=4}$ & 66.6 & 8.54& 44.2 & 68.2 & 2.32 \\ 
scaled GD (ours)$_{r=4}$ & 69.2 & 8.71 & 46.3 & 70.9 & 2.48 \\ 
\midrule
  AdamW$_{r=4}$ & 68.9 & 8.69 & 46.5&  71.3 & 2.51  \\ 
  scaled AdamW (ours)$_{r=4}$ & \textbf{69.6} & \textbf{8.77} & \textbf{46.6} & \textbf{71.8} & \textbf{2.52} \\
   \bottomrule 
\end{tabular}}
}
\caption{Scores for LoRA fine-tuning of GPT-2 medium model on E2E Natural Language Generation challenge 
 with different optimizers. Our scaled optimizers outperform unscaled optimizers on all evaluation metrics and scaled GD closes the performance gap between SGD and AdamW.  See Section \ref{gpt2_section} for experimental details.}\label{table1}
\end{table*} 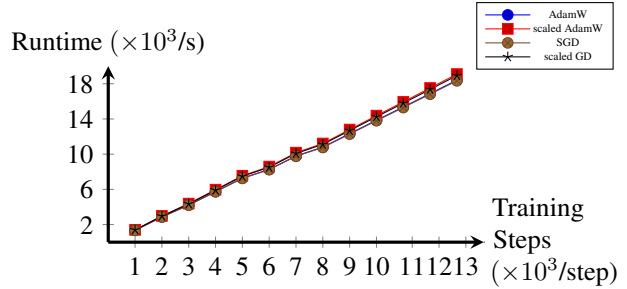
\begin{figure}
\begin{center}
\begin{tikzpicture}
\begin{axis}[
scale = 0.53,
xmin = 0, xmax = 21,
ymin = 0, ymax = 10,
axis lines* = left,
xtick = \empty, ytick = \empty,
clip = false,
axis line style ={very thick},
width=11cm,height=6cm,
axis lines = middle,
xtick={1.5,3,4.5,6,7.5,9,10.5,12,13.5,15,17,18.5,20},
xticklabels={1,2,3,4,5,6,7,8,9,10,11,12,13},
ytick = {1,3,5,7,9},
yticklabels = {2,6,10,14,18},
legend style={anchor=south west, nodes={scale=0.43, transform shape}}
]
 \addplot coordinates{
        (1.5,10*1345.1330564022064/20000)
        (3,10*2869.9201140403748/20000)
        (4.5,10*4220.749669790268/20000)
        (6,10*5746.169511318207/20000)
        (7.5,10*7273.201901912689/20000)
        (9,10*8270.15891122818/20000)
        (10.5, 10*9796.859780311584/20000)
        (12,10*10802.082231283188/20000)
        (13.5,10*12327.054862976074/20000)
        (15,10*13853.072108268738/20000)
        (16.5,10*15358.31376695633/20000)
        (18,10*16885.04076051712/20000)
        (19.5,10*18410.837102651596/20000)
   };
   \addlegendentry{AdamW};

   \addplot coordinates{
      (1.5,10*1402.005631685257/20000)
        (3,10*2982.695142030716/20000)
        (4.5,10*4382.921616315842/20000)
        (6,10*5966.059769392014/20000)
        (7.5,10*7549.9703640937805/20000)
        (9,10* 8589.653812408447/20000)
        (10.5, 10*10171.349878072739/20000)
        (12,10*11212.527932882309/20000)
        (13.5,10*12796.21893620491/20000)
        (15,10*14379.597594499588/20000)
        (16.5,10*15938.074256896973/20000)
        (18,10*17519.454837083817/20000)
        (19.5,10*19102.238866090775/20000)
   };
   \addlegendentry{scaled AdamW };

   \addplot coordinates{
      (1.5,10*1339.1964378356934/20000)
        (3,10*2857.334985256195/20000)
        (4.5,10*4204.037150144577/20000)
        (6,10*5723.812128782272/20000)
        (7.5,10*7243.765990495682/20000)
        (9,10*8236.28890967369/20000)
        (10.5, 10*9755.06168961525/20000)
        (12,10*10756.093092441559/20000)
        (13.5,10*12275.77213048935/20000)
        (15,10*13795.70438528061/20000)
        (16.5,10*15293.724370241165/20000)
        (18,10*16811.67221570015/20000)
        (19.5,10*18330.58613562584/20000)
   };
   \addlegendentry{SGD};

   \addplot coordinates{
      (1.5,10*1387.3638546466827/20000)
        (3,10*2953.948998451233/20000)
        (4.5,10*4344.553594350815/20000)
        (6,10*5912.983655452728/20000)
        (7.5,10*7480.2507157325745/20000)
        (9,10*8508.386858463287/20000)
        (10.5, 10*10076.736971616745/20000)
        (12,10*11109.620853185654/20000)
        (13.5,10*12678.96171617508/20000)
        (15,10*14246.523627758026/20000)
        (16.5,10*15790.976345539093/20000)
        (18,10*17359.99435400963/20000)
        (19.5,10*18926.015352249146/20000)
   };
   \addlegendentry{scaled GD };




\node [right] at (current axis.right of origin)  (A) {};
    \node        (B) [right of=A,text width=2cm]{Training Steps
$(\times 10^3$/step)};;
    \path[-] (A) edge (B);
\node [above] at (current axis.above origin) {Runtime $(\times 10^3$/s)};
\end{axis}
\end{tikzpicture}
\end{center}
\caption{Runtime for LoRA fine-tuning GPT-2 medium model with different optimizers. Our scaled methods introduce negligible runtime overhead and train as fast as unscaled methods. See Section \ref{runtime_section} for experimental details. Here we set $r=4.$}\label{small_GPU}
\end{figure}
A main concern of our method may arise from the common stereotype about the cumbersomeness and heavy computation complexity of usual preconditioning methods. This is likely due to Hessian-inverse type second-order preconditioner usually involves large size preconditioners and complex computation procedures, which is not the case for the preconditioners we consider. In each iteration,  we use current value of $(AA^T)^{-1}$ to precondition gradient of $B$. The preconditioner is easily obtained and is of small size. We present a runtime comparison between scaled optimizers and their unscaled counterparts for fine-tuning a GPT-2 medium model on E2E NLG challenge, see Section \ref{gpt2_section} for experimental details. Figure \ref{small_GPU} shows the runtime used for different optimizers for the fine-tuning task trained on NVIDIA A100 GPUs. Note there is little difference between the scaled optimizers and unscaled ones, which verifies that our preconditioner is practical.  See runtime comprisons for larger rank $r=256$ in Appendix \ref{runtime_sec}.
\subsection{LLM Fine-Tuning}
In this section, we study the fine-tuning task for GPT-2 model and Mistral 7B model with our  scaled optimizers. Empirically, we observe that our scaled optimizers outperform unscaled optimizers by a large margin for various tasks, datasets, LoRA ranks,  model sizes, model types, and benchmarks, which demonstrates the superiority of gradient scaling in training LoRA models. See below and also Appendix \ref{simulation_supp} for all our experiments.
\subsubsection{GPT-2}\label{gpt2_section}
We exploit the new preconditioner for LoRA fine-tuning of GPT-2 models. We follow exactly the same experimental  setup as  \cite{hu2021lora} except that here we tune learning rate individually using grid search for different methods being tested, see Appendix \ref{gpt2_supp} for experimental details and training hyperparameters. Table \ref{table1} shows the final score for fine-tuning GPT-2 medium model with LoRA rank $4$ on E2E \cite{novikova2017e2e} natural language generation challenge. It can be observed that  scaled GD method closes the gap between SGD and AdamW and behaves comparable to AdamW while demanding smaller optimizer  storage. Scaled AdamW method improves score of  AdamW method on all evaluation metrics, which reveals that the new preconditioner is advantageous even for gradient computation normalized by gradient variance in AdamW method.  See also  Appendix \ref{gpt2_supp} for experimental results for different LoRA ranks, different datasets, and different model sizes. Our scaled optimizers show significant and uniform improvements over unscaled optimizers for almost all tests.
\subsubsection{\textcolor{black}{Mistral 7B}}\label{mistral_mainsec}
Mistral 7B is a recent language model released by the Mistral AI team \cite{jiang2023mistral} which has been shown to outperform Llama 2-13B on all benchmarks and Llama 1-34B on many benchmarks \cite{Mistral_Announcement}, and thus is considered the most powerful language models for its size to date. We experiment our scaled optimizers with this new language model on the GLUE  \cite{wang-etal-2018-glue} benchmark for natural language understanding problems. 
Table \ref{mistral_result} shows
 the final fine-tuning results. Notably that our scaled optimizers outperform unscaled optimizers on all evaluation metrics, which demonstrates the effectiveness of gradient scaling for fine-tuning Mistral 7B model. See Appendix \ref{mistral_append_sec} for training hyperparameter selection. 
\begin{table*}
\centering
{
\scalebox{1.0}{\begin{tabular}{l|cccccccccc}
 \toprule
\multirow{2}{*}{Method} & 
  \multicolumn{10}{c}{GLUE}  \\ 
 &    \textcolor{black}{MNLI} & \textcolor{black}{SST-2} & \textcolor{black}{MRPC} & \textcolor{black}{CoLA} & \textcolor{black}{QNLI} & \textcolor{black}{QQP} & \textcolor{black}{RTE} & \textcolor{black}{STS-B} & \textcolor{black}{WNLI} & Avg.\\
 \midrule
  
  SGD$_{r=16}$  & 88.15 & 96.10 & 70.10 & 55.89 & 94.22 & 88.59 & 50.90 & 47.64 & 49.30 & 71.21  \\ 
scaled GD (ours)$_{r=16}$ & 90.21 & 96.90 & 81.62 & 68.17 & 94.40 & 91.15 & 54.15 & 90.31 & 56.34 &80.36 \\ 
  AdamW$_{r=16}$  & 89.86 & 96.79 & 88.48 & 71.05 & 94.42& 91.24 & 90.61 & 90.42 & 81.69 & 88.28\\ 
  scaled AdamW (ours)$_{r=16}$  & \textbf{90.68} & \textbf{97.25} & \textbf{89.46} & \textbf{71.30} & \textbf{94.67}& \textbf{92.22} & \textbf{91.34}& \textbf{91.10}& \textbf{83.10} & \textbf{89.01}\\ 
  \bottomrule
\end{tabular}}
}
\caption{Scores for LoRA fine-tuning of 4-bit quantized Mistral 7B model on GLUE benchmark for Natural Language Understanding (NLU) challenges with different optimizers. Our scaled optimizers outperform unscaled optimizers on all evaluation metrics. See Section \ref{mistral_mainsec} for experimental details.}\label{mistral_result}
\end{table*}
\subsection{Diffusion Model Fine-Tuning}
Diffusion models are now used for various image generation tasks and   LoRA has been widely used for fine-tuning diffusion models. Here we start with the commonly used    Stable Diffusion V1.5 model and show the effectiveness of applying our new preconditioner in LoRA fine-tuning for object generation. Then, we  experiment with the Mix-of-Show model \cite{gu2023mixofshow} which can generate high-quality face images. We observe that with gradient scaling, image generation becomes much more robust to learning rate changes, which is a reflection of the fact that our new preconditioner stabilizes the training process against learning rate variations. This has important practical benefits since learning rate choices can be crucial in image generation problems where small difference in learning rate can produce images of very different quality. This can be observed from Figures \ref{vase_main} and  \ref{diffusion_main}. Furthermore, it's widely  observed that training loss is useless in monitoring image generation quality when training diffusion models. Thus a better optimizer which is more robust to learning rate choices is very important. See Appendix \ref{diffusion_supp} for experimental details for diffusion model fine-tuning tasks.
\subsubsection{\textcolor{black}{Object Generation}}\label{object_section}
We build our object generation experiments on a popular stable diffusion fine-tuning repository \cite{Ryu2023} with Stable Diffusion V1.5 as the base model. We follow the default settings there and tune both the U-Net and the text encoder where LoRA parameters are injected. For all experiments, we fix the U-Net fine-tuning learning rate as default value $1e-4$ which we find important for generating recognizable images. After fine-tuning on $6$ images of a red vase  titled ``a photo of $\langle V_{\text{vase}} \rangle$'',    Figure \ref{vase_main} shows the generation results for prompt ``a blue $\langle V_{\text{vase}} \rangle$". With large learning rate as $1e-2$ for text encoder fine-tuning, AdamW produces out-of-distribution results while our method produces satisfactory images. With default learning rate setting, AdamW still fails to capture the prompt information and generates only red vases. Instead,  scaled AdamW with default learning rate is able to produce the desired blue vase.  AdamW turns out to be able to generate the desired blue vase for learning rate value such as $1e-6$. See Appendix \ref{stable_diffusion_append} for other target object generation including chairs and dogs. Scaled AdamW improves AdamW for all experiments.
\subsubsection{Face 
Generation}\label{face_gen}
\begin{figure*}[ht!]
 \hspace{0.8cm}
\includegraphics[width=0.75\linewidth]{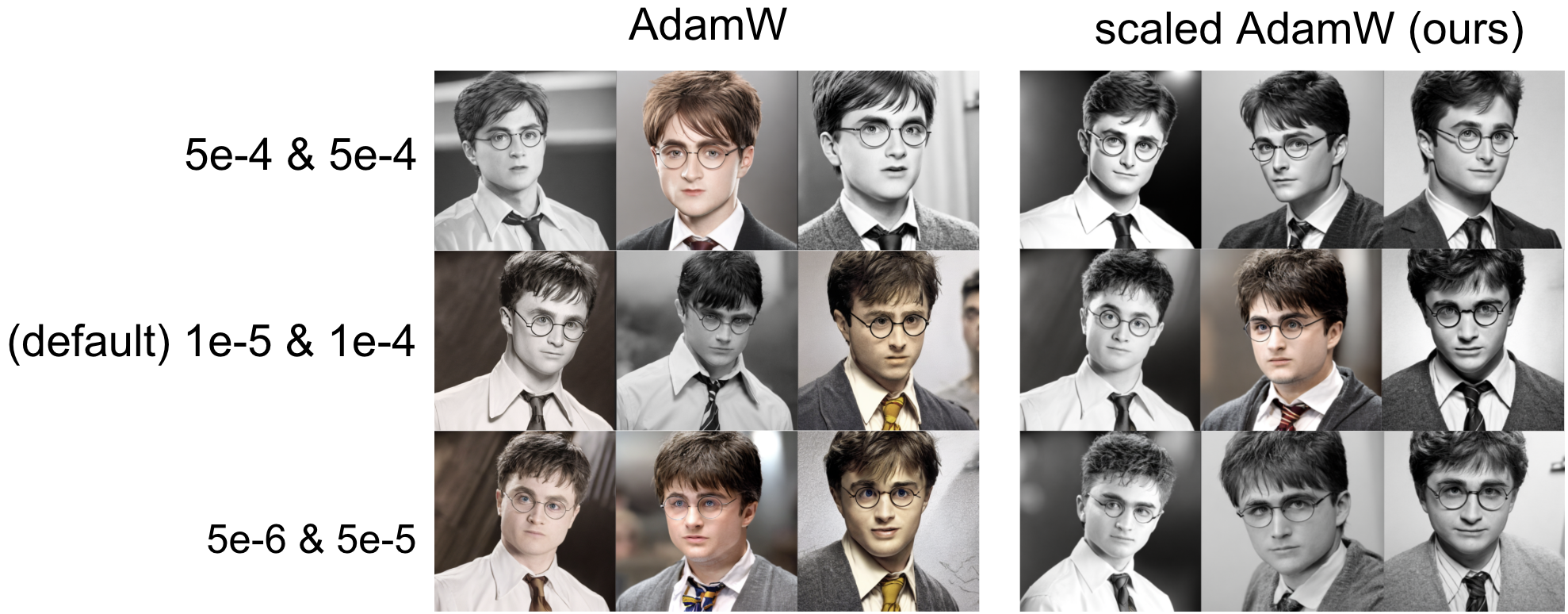}
\caption{Generation results for prompt ``a pencil sketch of $\langle V_{\text{potter}} \rangle$" by Mix-of-Show model with different optimizers and various learning rates. Our method
(scaled AdamW) generates photos better capturing the prompt, i.e., a pencil sketch, and is more robust to learning rate choices. See Section \ref{face_gen} for experimental details.}\label{diffusion_main}
\end{figure*}
\vspace{-0.2cm}
Face generation is a more challenging task compared to object generation and we thus switch to Mix-of-Show \cite{gu2023mixofshow} variant of custom diffusion model which is originally designed for multi-concept LoRA  and has been recognized to be able to generate high-quality face images.  For better visualization of differences between different LoRA optimization methods, we turn off embedding fine-tuning and tune only text-encoders and U-Nets where LoRA factors are injected. 
We use $14$ images of Potter provided in the original project repository where the character name is replaced with $\langle V_{\text{potter}} \rangle$ in captions of the training images. Figure \ref{diffusion_main} shows generation results for prompt ``a pencil sketch of $\langle V_{\text{potter}} \rangle$" for various step sizes. Our method (scaled AdamW) generates visually better images more resemble a pencil sketch, which demonstrates its effectiveness in generating images of higher quality and also its robustness to learning rate changes. See Appendix \ref{gen_quality} for generation results for more prompts and also for the Hermoine character with different LoRA parameter fusion coefficients. See also Appendix \ref{lr_compare} for generation results including SGD and scaled GD methods for varying learning rates. Our observations are similar in all these generation results.

\section{Convergence Theory}\label{theory}
In this section, we further verify the superiority of scaled GD method applied to a reparameterized two-layer ReLU NN tuning problem, i.e., we show scaled GD method has convergence rate independent of data condition number of this specific problem and is thus advantageous compared to plain gradient descent. When scaled GD is applied to deep learning models,  nonlinearities in such models typically render theoretical analysis intractable. To tackle this, we instead study an equivalent problem to the two-layer ReLU neural network. We first introduce the concept of hyperplane arrangement matrices. For a data matrix $X\in\mathbb{R}^{n\times d}$  and any arbitrary vector $u\in\mathbb{R}^d$, We consider the set of diagonal matrices 
 \[\mathcal{D}:=\{\text{diag}(\indicfcn\{Xu\geq 0\})\},\] 
 which takes value $1$ or $0$ along the diagonals that indicate the set of possible arrangement activation patterns for the ReLU activation. Indeed, we can enumerate the set of sign patterns as $\mathcal{D}=\{D_{i}\}_{i=1}^{P}$ where $P$ is bounded by
 \[P\leq 2r\left(\frac{e(n-1)}{r}\right)^r\]
for $r=\text{rank}(X)$ \citep{pilanci2020neural,stanley2004introduction}. The two-layer ReLU model is equivalent to the problem below for squared loss through convexification under mild conditions\footnote{The equivalence holds when the number of hidden neurons is greater than or equal to $2Pd.$} \cite{mishkin2022fast},
\[\min_{W_i}\frac{1}{2}\|\sum_{i=1}^P D_iXW_i-Y\|_F^2.\]
Therefore, we base our analysis on fine-tuning the above model and show that the convergence rate of problem below with scaled GD method (Algorithm \ref{alg:scaled_gd}) has no dependence on condition number of the data matrix $X$. We focus on
\begin{equation}\label{main_prob}
\min_{A_i,B_i} \frac{1}{2}\|\sum_{i=1}^P D_iX(W_i+A_iB_i^T)-Y\|_F^2,
\end{equation}
where $X\in\mathbb{R}^{n\times d}, A_i\in\mathbb{R}^{d\times r},B_i\in\mathbb{R}^{c\times r},Y\in\mathbb{R}^{n\times c}.$ We consider the response model {$Y=\sum_{i=1}^P D_iX(W_i+A_\star^i{B_\star^i}^T)$}. Here, $X_\star^i=A_\star^i{B_\star^i}^T=U_\star^i\Sigma_\star^i{V_\star^i}^T$ are fixed and unknown matrices with $U_\star^i\Sigma_\star^i{V_\star^i}^T$ being the singular value decomposition of $X_\star^i$. Denote $F_\star^i=[A_\star^i, B_\star^i]^T$ and $F_t^i=[A_t^i,B_t^i]^T$ with $A_t^i$ and $B_t^i$ denote the value of $(A_i,B_i)$ at $t$-th iteration. Let $\sigma_r(\cdot)$ denote the $r$-th largest singular value. 

We first introduce the definition of Restricted Isometry Property (RIP) and illustrate the assumptions required for our theorem to hold,
\begin{definition} (RIP \cite{Recht_2010})
The matrix $A\in\mathbb{R}^{n\times d}$ satisfies rank-$r$ RIP with a constant $\delta_r\in[0,1)$ if for all matrices $M\in\mathbb{R}^{d\times c}$ of rank at most $r$, the below holds,
\[(1-\delta_r)\|M\|_F^2\leq \|AM\|_F^2\leq (1+\delta_r)\|M\|_F^2.\]
\vspace{-0.3cm} 
\end{definition}

\textcolor{black}{\vspace{-1cm} \begin{assumption}\label{main_assum}
Suppose that $D_iX$ obeys the $2r$-RIP with a constant $\delta_{2r}^i$ for each $i$, and $\|X^T{D_i}^TD_jX\|_2\leq \min(\frac{\delta_{2r}^i\|X_\star^i\|_F}{P\|X_\star^j\|_F},\frac{0.12}{7P(P+1)})$ for any $j\neq i.$
\end{assumption}
Note for matrix $X$ with i.i.d Gaussian entries $\mathcal{N}(0,1/d\|D_i\|_0)$, $D_iX$ satisfies RIP for a constant $\delta$ when $\|D_i\|_0$ is on the order of $r(d+c)/(d\delta^2)$. See \cite{Recht_2010} for other measurement ensembles satisfying the RIP property.
}  Note also $ \|X^T{D_i}^TD_jX\|_2\leq \|X^TX\|_2$ for all $(i,j)$'s. Thus bounding $\|X^T{D_i}^TD_jX\|_2$ amounts to bounding largest singular value of empirical covariance matrix. 
We consider a specific initialization strategy here which is an extension of spectral initialization for multiple terms as below,
\begin{definition}\label{spectral_init} (Extended Spectral Initialization) Let $A_0^i{B_0^i}^T$ be the best rank-$r$ approximation of $(D_iX)^T(Y-\sum_{j=1}^P D_jXW_j)$ for each $i.$
\end{definition}

Now,  we are ready to state our main convergence result as follows, 
\begin{theorem}\label{main_thm}
Under Assumption \ref{main_assum} with $\delta_{2r}^i\leq 0.01$ for each $i$. With extended spectral initialization described  in Definition \ref{spectral_init},   $\|A_t^i{B_t^i}^T-X_\star^i\|_F\leq 1.5\text{dist}(F_t^i,F_\star^i)$. In addition, if the step size $0<\eta\leq 2/3$, then the $(t+1)$-th iteration $F_{t+1}^i$ satisfies
\[\max_i(\text{dist}(F_{t+1}^i, F_\star^i))\leq (1-0.5\eta)\max_i(\text{dist}(F_{t}^i, F_\star^i)).\]
\end{theorem}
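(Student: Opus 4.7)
The plan is to follow the scaled gradient descent analysis framework from the low-rank matrix sensing literature (Tong--Ma--Chi style), generalized to handle the sum of $P$ coupled terms indexed by the activation patterns $D_i$. I would work in terms of the quotient-invariant distance
\[
\mathrm{dist}^2(F,F_\star)=\min_{Q\in GL(r)}\bigl\|(A-A_\star Q^{-T})\Sigma_\star^{1/2}\bigr\|_F^2+\bigl\|(B-B_\star Q)\Sigma_\star^{1/2}\bigr\|_F^2,
\]
since this is the natural Lyapunov function for the preconditioned updates $A\leftarrow A-\eta(\nabla_A\mathcal{L})(B^TB)^{-1}$, $B\leftarrow B-\eta(\nabla_B\mathcal{L})(A^TA)^{-1}$. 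The minimizer $Q_t^i$ is the ``best alignment'' matrix and will play the role of a gauge fixing throughout.

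First I would handle initialization. Writing $Y-\sum_j D_jXW_j=\sum_j D_jX X_\star^j$, extended spectral initialization sets $A_0^i B_0^{iT}$ to the best rank-$r$ approximation of $(D_iX)^T\sum_j D_jX X_\star^j$. Using rank-$2r$ RIP on $D_iX$ I would show that $(D_iX)^T D_iX X_\star^i$ is close to $X_\star^i$, while the cross bound $\|X^TD_i^TD_jX\|_2\le \delta_{2r}^i\|X_\star^i\|_F/(P\|X_\star^j\|_F)$ controls the spurious contributions from $j\ne i$. Eckart--Young then yields $\|A_0^iB_0^{iT}-X_\star^i\|_F\le c_0\sigma_r(X_\star^i)$ for small $c_0$, which via a standard lemma (as in Tong--Ma--Chi) translates into $\mathrm{dist}(F_0^i,F_\star^i)\le C\sigma_r(X_\star^i)$ and the desired proximity $\|A_0^iB_0^{iT}-X_\star^i\|_F\le 1.5\,\mathrm{dist}(F_0^i,F_\star^i)$.

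Next I would carry out the one-step contraction inductively. Letting $Q_t^i$ be the optimal alignment at step $t$, I would expand
\[
\mathrm{dist}^2(F_{t+1}^i,F_\star^i)\le \bigl\|(A_{t+1}^i-A_\star^i(Q_t^i)^{-T})\Sigma_\star^{1/2}\bigr\|_F^2+\bigl\|(B_{t+1}^i-B_\star^iQ_t^i)\Sigma_\star^{1/2}\bigr\|_F^2,
\]
substitute the scaled gradient updates, and split the residual $\sum_j D_jX(X_t^j-X_\star^j)$ into the self part ($j=i$) and the cross part ($j\ne i$). The self part, after using rank-$2r$ RIP so that $(D_iX)^T(D_iX)\approx I$ on rank-$2r$ matrices, reproduces the scaled-GD contraction argument and contributes a term of the form $(1-0.7\eta)\mathrm{dist}^2(F_t^i,F_\star^i)$. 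For the cross part, the bound $\|X^TD_i^TD_jX\|_2\le 0.12/[7P(P+1)]$ together with the proximity inequality $\|X_t^j-X_\star^j\|_F\le 1.5\,\mathrm{dist}(F_t^j,F_\star^j)$ lets me absorb the coupling into at most $0.2\eta\max_j\mathrm{dist}^2(F_t^j,F_\star^j)$. Summing over the two summands and taking the square root yields
\[
\mathrm{dist}(F_{t+1}^i,F_\star^i)\le (1-0.5\eta)\max_j\mathrm{dist}(F_t^j,F_\star^j),
\]
so maximizing the left-hand side over $i$ closes the induction.

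The main obstacle, in my view, is the cross-term bookkeeping: the preconditioners $(B_t^{iT}B_t^i)^{-1}$ and $(A_t^iA_t^{iT})^{-1}$ multiply each summand of the residual on different sides, so the clean orthogonality that simplifies the single-term scaled GD proof is broken. I expect to need a careful use of the alignment $Q_t^i$ together with the quantitative proximity $\|X_t^i-X_\star^i\|_F\le 1.5\,\mathrm{dist}(F_t^i,F_\star^i)$ to bound terms like $(D_iX)^TD_jX(X_t^j-X_\star^j)B_t^i(B_t^{iT}B_t^i)^{-1}$ without losing factors of the condition number, which is precisely what the uniform cross-bound $0.12/[7P(P+1)]$ is calibrated for. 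A secondary obstacle is verifying that the proximity invariant $\|A_t^iB_t^{iT}-X_\star^i\|_F\le 1.5\,\mathrm{dist}(F_t^i,F_\star^i)$ propagates, which I would handle by a perturbation argument using $\delta_{2r}^i\le 0.01$ to keep the balanced factorization near the initialization throughout the trajectory.
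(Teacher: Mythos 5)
Your proposal follows essentially the same route as the paper: a Tong--Ma--Chi style scaled-GD local convergence analysis with the same $GL(r)$-aligned distance, spectral initialization controlled via RIP, Eckart--Young, and the cross-term bound on $\|X^TD_i^TD_jX\|_2$, followed by a one-step contraction that splits the residual into the RIP-controlled self term and the coupling terms absorbed by the $0.12/[7P(P+1)]$ bound, with the $1.5\,\mathrm{dist}$ proximity maintained inductively. The paper's appendix carries out exactly this bookkeeping (with slightly different intermediate constants), so your plan is aligned with its proof.
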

\begin{proof}
See Appendix \ref{proof}.
\end{proof}
Our result mainly builds on results from \cite{tong2021accelerating} and can be viewed as an extension of matrix sensing to ReLU neural networks.

\vspace{-0.2cm}
\section{Literature Review}\label{prior_work}
Our work is closely related to low-rank matrix optimization and we briefly review some basic knowledge and related work in Section \ref{riemannian_overview}. Our work applies preconditioners for accelerating the LoRA fine-tuning process, which falls into  preconditioning methods for PEFT, and related prior work there is discussed in Section \ref{precond_deep} and Section \ref{peft}.
\vspace{-0.1cm}
\subsection{Riemannian Optimization}\label{riemannian_overview}
\vspace{-0.1cm}
Several recent studies have made theoretic contributions to the convergence rate of scaled GD method which employs the preconditioner (\ref{formula}). Specifically, in \cite{Tong_2021, tong2021accelerating}, the authors show local convergence of scaled GD method with better convergence rate independent of data condition number compared to plain gradient descent method for some classic low-rank matrix optimization problem including matrix sensing, robust PCA, etc. The authors of \cite{jia2023preconditioning} show global convergence of scaled GD method with rate independent of data condition number for least squares matrix decomposition problem $\|AB^T-Y\|_F^2/2$. Different variants of scaled GD have been proposed and studied. In \cite{zhang2023preconditioned}, the authors suggest to use $(A^TA+\lambda I)^{-1}$ and $(B^TB+\lambda I)^{-1}$ with some fixed $\lambda>0$ in replace of (\ref{formula}) for tackling overparametrization and ill-conditioness in matrix sensing problems. In \cite{zhang2023fast}, the authors suggest using $(A^TA+\lambda_t I)^{-1}$ where $\lambda_{t+1}=\beta \lambda_t$ (similar change for $B$), i.e., using an exponentially decay regularization term.  \cite{zhang2023preconditioned} proposes precGD method which sets $\lambda_t=\sqrt{f(A_tB_t^T)}$. \cite{tong2022scaling, ma2023provably} present extension of scaled GD method to tensor optimization problem. \cite{jia2023preconditioning} analyzes AltScaledGD method which updates $A$ and $B$ alternatively and shows that such method has better convergence rate for larger step size.
\vspace{-0.1cm}
\subsection{Preconditioners in Deep Learning}\label{precond_deep}
\vspace{-0.1cm}
Current deep learning training is dominated by gradient-based method which follows a  descent direction to update parameters for decreasing objective value. For accelerating such training procedure, more advanced techniques such as Adagrad \cite{JMLR:v12:duchi11a} proposes to scale gradient based on their variance. Specifically, $G_t^{-1/2}$ is used as gradient preconditioner where $G_t$ is accumulated outer product of historic 
 subgradients. More practical optimizers such as Adam \cite{kingma2017adam} and AdamW \cite{loshchilov2019decoupled} perform like a diagonal version of Adagrad and are the main training tools for most deep learning models in various fields. More recently, Shampoo \cite{gupta2018shampoo} has been proposed which uses a left preconditioner and a right preconditioner for a weight matrix.  Shampoo is in spirit close to Adagrad but requires much less storage. In contrast with the preconditioners designed for accelerating optimization procedure for general deep learning models. We study a specific preconditioner designed for LoRA fine-tuning model which exploits its low rank matrix factorization property and borrows from Riemannian optimization knowledge.
 \subsection{PEFT Fine-Tuning Review}\label{peft}
\vspace{-0.1cm}
Current commonly-used deep learning models are growing larger and larger, making full fine-tuning for downstream tasks nearly impossible. A line of parameter-efficient fine-tuning methods emerges and has been used in various fields. These methods aim at achieving low fine-tuning loss with fewer trainable parameters. One popular PEFT method is LoRA \cite{hu2021lora}, which proposes to add a low-rank adaptation to each existing weight matrix. By factorizing the update into two low-rank matrices, LoRA is able to achieve similar fine-tuning result as full fine-tuning with 10,000 times fewer parameters. LoRA has shown good performance in both language model fine-tuning and vision model fine-tuning. Variants of LoRA method involve DyLoRA \cite{valipour2023dylora}, IncreLoRA \cite{Zhang2023IncreLoRAIP}, and AdaLoRA \cite{zhang2023adaptive}, all focus on dynamically adjusting the rank 
 hyperparameter. GLoRA \cite{chavan2023oneforall} generalizes LoRA by introducing a prompt module; Delta-LoRA \cite{zi2023deltalora} proposes to simultaneously update pretrained model weights by difference of LoRA weights. QLoRA \cite{dettmers2023qlora} exploits quantized LoRA model which further reduces the model size. Besides such additive methods, there are also multiplicative PEFT methods such as the orthogonal fine-tuning method (OFT) \cite{qiu2023controlling} and its variant BOFT \cite{liu2023parameterefficient}.

Though LoRA has become very popular and different variants emerge,  current LoRA training mainly exploits gradient based optimizers and we are unaware of any prior work studying  acceleration of LoRA training given its special low-rank matrix factorization nature. Our work shows that by regrouping trainable parameters and applying an $r\times r$ preconditioner, the optimization procedure of LoRA can be significantly enhanced with negligible storage and runtime overhead.

\vspace{-0.1cm}
\section{\textcolor{black}{Conclusion}}\label{conclusion}
\vspace{-0.1cm}
In this work, we borrow tools from Riemannian optimization to enhance LoRA fine-tuning. Specifically, we study the application of a Riemannian gradient preconditioning method which introduces a new $r\times r$ preconditioner to LoRA fine-tuning procedure. Empirically, we observe that the gradient scaling boosts performance of both SGD and AdamW methods and theoretically we show that LoRA trained with preconditioned Adam method achieves stable feature learning under infinite-width NN setting while unpreconditioned training would require tuning learning rates for LoRA parameters separately.  Prior to our work, theoretic convergence for the proposed gradient scaling scheme has only been established for classic low-rank matrix optimization problems and only with gradient descent method, we first time introduces it to deep learning regime considering the low-rank nature of LoRA model and reveals its superiority beyond SGD.

\section*{Acknowledgements}
\vspace{-0.1cm}
This work was supported in part by the National Science Foundation (NSF) under Grant DMS-2134248; in part by the NSF CAREER Award under Grant CCF-2236829; in part by the U.S. Army Research Office Early Career Award under Grant W911NF-21-1-0242; and in part by the Office of Naval Research under Grant N00014-24-1-2164.
\vspace{-0.3cm}
\section*{Impact Statement}
\vspace{-0.1cm}
This paper aims to advance the field of machine learning through innovative research. While our work holds significant potential for societal impact, we do not identify any specific consequences that needs to be highlighted here.



\vspace{-0.2cm}
\bibliography{example_paper}
\bibliographystyle{icml2024}

\newpage
\appendix
\onecolumn
\section{Details of Theorem \ref{stable_thm}}\label{stable_proof}
\subsection{Assumptions and Technical Lemmas}
\begin{definition}\label{thm3_def} (Stable Feature Learning) 
Consider any general LoRA layer $BAx$ with $B\in\mathbb{R}^{m\times r}$ and $A\in\mathbb{R}^{r\times n}$ being LoRA parameters. Denote $\Delta^t=B_tA_tx-B_{t-1}A_{t-1}x$ for fine-tuning step $t$. We say that LoRA model achieves Stable Feature Learning when $x,Ax,BAx\in\Theta(1)$ for all LoRA layers and $\Delta^t\in\Theta(1)$ for all fine-tuning step $t$.
\end{definition}
\begin{assumption}\label{assum}
We assume that the Adam gradient processing step satisfies $g_A^tx=\Theta(n)$ for all $t$ where $g_A^t$ is the normalized gradient of $A$ in $t$-th iteration.
\end{assumption}
\textbf{Explanation of Assumption \ref{assum}.} We adopt the same assumption as Assumption 1 in \cite{hayou2024lora} where the authors provide a proof for Adam with no momentum, i.e., for SignSGD method. This assumption should hold for general Adam variants as long as the processed gradient preserves the sign($x$) direction.

\begin{lemma}\label{thm3_lemma}
For any matrix $A\in\mathbb{R}^{m\times n}$, where $m$ being powers of $n$, such that $A^TA$ is invertible and $\gamma[A_{ij}]=c$  for all $(i,j)$, we have $\gamma[(A^TA)^{-1}]=-\gamma[\|a\|^2]$ with $a$ being any column of $A.$
\end{lemma}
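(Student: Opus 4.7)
The goal is to show that, entrywise, $(A^T A)^{-1}$ scales like the inverse of the common column norm. My plan is to factor out this scale from $A^T A$ and then argue that the inverse of the normalized matrix is of order one.

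Since $\gamma[A_{ij}] = c$ for every entry and a column $a$ contains $m$ such entries with $m$ a power of $n$, we have
\[
\|a\|^2 \;=\; \sum_{i=1}^m A_{i\cdot}^2 \;=\; \Theta\bigl(m\, n^{2c}\bigr), \qquad \gamma[\|a\|^2] \;=\; \gamma[m] + 2c.
\]
Each entry of $A^T A$ is an inner product of two columns of $A$, hence is of magnitude at most $\Theta(\|a\|^2)$, and the diagonal entries $(A^T A)_{ii} = \|a_i\|^2$ realize this order exactly. I therefore propose to set $M := A^T A / \|a\|^2$, an $n \times n$ matrix with $\Theta(1)$ diagonal and $O(1)$ off-diagonal entries, so that
\[
(A^T A)^{-1} \;=\; \|a\|^{-2}\, M^{-1}.
\]
The claim then reduces to showing $\gamma[M^{-1}_{ij}] = 0$ for every $(i,j)$.

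The delicate step is to bound $M^{-1}$ from both sides. By Cramer's rule, $(M^{-1})_{ij} = (-1)^{i+j}\det(M_{\widehat{ji}})/\det(M)$, where $M_{\widehat{ji}}$ is $M$ with row $j$ and column $i$ deleted. Because $M$ has $O(1)$ entries and the cofactor is a polynomial with a bounded number of terms in those entries, the numerator is $O(1)$, giving the upper bound $\gamma[M^{-1}_{ij}]\le 0$. The matching lower bound is the main obstacle and requires $\det(M) = \Theta(1)$, i.e.\ $M$ is well-conditioned at leading order. In the LoRA regime in which the lemma is invoked, $A$ has independent entries with $m$ a high power of $n$, so standard concentration places $M$ within $o(1)$ of a fixed positive-definite matrix whose smallest eigenvalue is bounded away from zero; this yields $\det(M) = \Theta(1)$ and hence $\gamma[M^{-1}_{ij}] \ge 0$. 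Combining the two directions gives $\gamma[M^{-1}]=0$ entrywise, and plugging back into $(A^T A)^{-1} = \|a\|^{-2}\, M^{-1}$ yields $\gamma[(A^T A)^{-1}] = -\gamma[\|a\|^2]$, as required.
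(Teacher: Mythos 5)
Your route is genuinely different from the paper's. The paper works with the unnormalized Gram matrix: it writes $(A^TA)^{-1}=\mathrm{adj}(A^TA)/\det(A^TA)$, asserts $\det(A^TA)=\Theta((mn^{2c})^n)$, and then uses $\det(\mathrm{adj}(A^TA))=(\det(A^TA))^{n-1}$ to read off that the adjugate's entries are $\Theta((mn^{2c})^{n-1})$, hence the inverse is $\Theta((mn^{2c})^{-1})$. You instead factor out the scale, set $M=A^TA/\|a\|^2$, and reduce to showing $M^{-1}=\Theta(1)$, with the upper bound from Cramer's rule and the non-degeneracy from concentration. Your decomposition has the merit of isolating the real crux explicitly: the argument needs $\det(M)=\Theta(1)$ (equivalently $\lambda_{\min}(M)=\Omega(1)$), a well-conditioning property that the lemma's stated hypotheses (invertibility plus $\gamma[A_{ij}]=c$) do not by themselves guarantee and that the paper's determinant bookkeeping relies on silently.

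There is, however, a gap in your last step. The inference ``$\det(M)=\Theta(1)$, hence $\gamma[(M^{-1})_{ij}]\geq 0$ for every $(i,j)$'' does not hold: well-conditioning bounds $M^{-1}$ from above entrywise and from below only in an operator-norm sense; it cannot force every individual entry of $M^{-1}$ to be $\Omega(1)$. Indeed, in exactly the regime your concentration step produces (independent mean-zero entries, $m$ a large power of $n$) one has $M=I+O(m^{-1/2})$ off the diagonal, so the off-diagonal entries of $M^{-1}$ are $O(m^{-1/2})$ and the entrywise claim $\gamma[(M^{-1})_{ij}]=0$ fails there. Since the paper declares $\gamma[\cdot]$ to act entrywise on matrices, the lemma read literally has this issue too, and the paper's own proof shares the analogous weakness (the entrywise order of $\mathrm{adj}(A^TA)$ cannot be deduced from its determinant alone); what is actually used downstream is only the overall scale $\gamma$ of $(A^TA)^{-1}$ as a preconditioner, and that much your argument does deliver once $\lambda_{\min}(M)=\Omega(1)$ and $\lambda_{\max}(M)=O(1)$ are in hand. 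One further caveat: the lemma is invoked not only at initialization but also at later iterates ($B_1$, $A_2$, and so on), whose entries are no longer independent, so your concentration step as phrased covers only the first application; for the iterates one needs a perturbation argument (they differ from the initial, well-conditioned matrices by lower-order corrections, as in the theorem's recursion) to keep $\lambda_{\min}(M)$ bounded away from zero.
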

\begin{proof}
First we note that $(A^TA)^{-1}=\text{adj}(A^TA)/\text{det}(A^TA)$ and $\text{det}(A^TA)=\Theta((mn^{2c})^n).$ Furthermore, by property of adjugate matrix,  \[\text{det}(\text{adj}(A^TA))=(\text{det}(A^TA))^{n-1}=\Theta((mn^{2c})^{n(n-1)}),\]
from which we deduce
\[\text{adj}(A^TA)=\Theta((mn^{2c})^{n-1}).\]
Therefore, 
\[(A^TA)^{-1}=\Theta((mn^{2c})^{-1}).\]
Note that $\|a\|^2=\Theta(mn^{2c}),$ we thus conclude $\gamma[(A^TA)^{-1}]=-\gamma[\|a\|^2]$, as desired.
\end{proof}
\subsection{Statement and Proof of Theorem \ref{stable_thm}}
Now, we state the formal version of our Theorem \ref{stable_thm} below,

\begin{theorem} (Stable Feature Learning (Formal)) Let $g_A$ and $g_B$ denote the processed gradient of $A$ and $B$ respectively. Consider LoRA parameters $A$ and $B$ trained with Adam scaled by preconditioner (\ref{formula}). Assume Assumption \ref{assum} is satisfied with Adam gradient processing and $g_A,g_B\in\Theta(1)$ after the gradient processing. Further assume $BAx$ has dimension of $\Theta(n).$  Then the LoRA model achieves  stable feature learning with $\eta=\Theta(1).$ While for unscaled Adam, $\eta_A=\Theta(n^{-1})$ and $\eta_B=\Theta(1)$ are required for stable feature learning.
\end{theorem}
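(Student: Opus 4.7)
The plan is to extend the toy-example computation in Section~\ref{intuition} in three directions simultaneously: arbitrary LoRA rank $r\geq 1$, vector outputs $BAx$ of dimension $\Theta(n)$, and Adam's processed gradients $g_A^t, g_B^t$ in place of raw gradients. I would start by writing the one-step feature increment
\[
\Delta f^t \;=\; B_tA_tx - B_{t-1}A_{t-1}x
\]
after substituting the preconditioned Adam updates
\[
A_t = A_{t-1} - \eta (B_{t-1}^TB_{t-1})^{-1}g_A^{t-1},\qquad
B_t = B_{t-1} - \eta g_B^{t-1}(A_{t-1}A_{t-1}^T)^{-1},
\]
and expand into three pieces: a term from $B$'s update acting on $A_{t-1}x$, a term from $B_{t-1}$ acting on $A$'s update, and the quadratic $\eta^2$ cross term. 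Each piece will then be analyzed with the $\gamma[\cdot]$ exponent calculus inherited from \cite{hayou2024lora}.

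The key technical ingredient is Lemma~\ref{thm3_lemma}, applied once to $B_{t-1}$ (an $m\times r$ matrix with $m=\Theta(n)$) and once to $A_{t-1}^T$ (an $n\times r$ matrix), reducing $\gamma[(B_{t-1}^TB_{t-1})^{-1}]$ to $-\gamma[\|b\|^2]$ for any column $b$ of $B_{t-1}$, and $\gamma[(A_{t-1}A_{t-1}^T)^{-1}]$ to $-\gamma[\|a\|^2]$ for any row $a$ of $A_{t-1}$. Combined with Assumption~\ref{assum} (which yields $g_A^t x = \Theta(n)$), the scaling hypothesis $g_A^t, g_B^t = \Theta(1)$, and the standing assumption $BAx = \Theta(n)$, each of the three contributions to $\Delta f^t$ translates into one linear constraint on the exponents $c$ (with $\eta = \Theta(n^c)$), $\gamma[A_{ij}]$, and $\gamma[B_{ij}]$. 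Requiring all three pieces together with $f^t$ to lie in $\Theta(n)$ as a vector (so that the per-coordinate increment is $\Theta(1)$, which is the stable-feature-learning condition in Definition~\ref{thm3_def}) yields a linear system whose unique solution is $c=0$, i.e.\ $\eta = \Theta(1)$.

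I would then close the argument by induction on $t$: starting from the LoRA initialization of the paper ($\sigma_a^2=0$, $\sigma_b^2=\Theta(1)$), verify that the exponents $\gamma[A_t], \gamma[B_t], \gamma[\|b_t\|^2], \gamma[\|a_t\|^2]$ computed for the first scaled-Adam step persist under every subsequent step, so that $\Delta f^t$ and $f^t$ remain $\Theta(1)$ componentwise for all $t$. The second half of the theorem---that unscaled Adam requires $\eta_A=\Theta(n^{-1})$ and $\eta_B=\Theta(1)$---is already established as Theorem~1 of LoRA+~\cite{hayou2024lora}, so I would simply cite it and not re-derive it.

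The main obstacle is the bookkeeping inside the induction: with matrix parameters, data-dependent $r\times r$ Gram-matrix preconditioners, and Adam's per-coordinate normalization, one must track the exponents of $A_t$, $B_t$, $A_tA_t^T$, $B_t^TB_t$, and cross-quantities such as $g_B^{t-1}(A_{t-1}A_{t-1}^T)^{-1}A_{t-1}x$ carefully, invoking Assumption~\ref{assum} at each step to handle the Adam gradient processing. A secondary subtlety is reconciling the discrepancy between the toy example's $\eta=\Theta(n^{-1/2})$ and the theorem's $\eta=\Theta(1)$: this shift arises from (i) the vector output $BAx$ scaling as $\Theta(n)$ rather than $\Theta(1)$, and (ii) Adam's normalization turning raw gradients of widely varying magnitude into $\Theta(1)$-scale updates, and the proof must combine both effects consistently to land on $c=0$.
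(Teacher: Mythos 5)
Your overall strategy matches the paper's proof: decompose $\Delta^t = B_tA_tx - B_{t-1}A_{t-1}x$ into three pieces after substituting the preconditioned Adam updates, invoke Lemma~\ref{thm3_lemma} to reduce $\gamma[(B^TB)^{-1}]$ and $\gamma[(AA^T)^{-1}]$ to column/row-norm exponents, combine with Assumption~\ref{assum} and the $g_A,g_B=\Theta(1)$ hypothesis to set up linear exponent constraints, induct on $t$ to propagate them, and cite Theorem~1 of LoRA+ for the unscaled half. That is exactly the architecture of the argument in Appendix~\ref{stable_proof}.

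There is, however, a concrete gap in your base case: you propose to initialize with $\sigma_a^2=0,\ \sigma_b^2=\Theta(1)$, quoting the toy example of Section~\ref{stable_feature}. The theorem's proof in the appendix deliberately uses the \emph{opposite} choice, $\sigma_a^2=\Theta(n^{-1})$ and $\sigma_b^2=0$, and this is not a cosmetic difference. With $\sigma_a^2=0$ you have $A_0=0$, so $(A_0A_0^T)^{-1}$ is undefined and the very first preconditioned update of $B$ cannot be written down; moreover $A_0x=0$ immediately violates the $Ax\in\Theta(1)$ requirement of Definition~\ref{thm3_def}. The choice $\sigma_a^2=\Theta(n^{-1})$ is what makes $\|a_0\|^2=\Theta(1)$, hence $\gamma[(A_0A_0^T)^{-1}]=0$ by the lemma, while $B_t$ acquires $\Theta(1)$ entries and $m=\Theta(n)$ rows so that $\gamma[(B_t^TB_t)^{-1}]=-1$; this \emph{asymmetry} between the two Gram-matrix exponents is precisely what cancels the $g_A^tx=\Theta(n)$ factor from Assumption~\ref{assum} and lets the linear system resolve to $c=0$. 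Starting from the toy-example initialization you would either get a singular preconditioner at $t=0$ or, if you perturbed it, the wrong exponent pattern, and the induction would not close. A smaller point: you phrase the target condition as ``$f^t$ lies in $\Theta(n)$ as a vector,'' but Definition~\ref{thm3_def} and the paper's $\Theta(\cdot)$ convention are \emph{entrywise}---what is required is that each coordinate of $BAx$ and $\Delta^t$ is $\Theta(1)$, which corresponds to a Euclidean norm of order $\sqrt{n}$, not $n$. Fixing the initialization and tracking exponents entrywise, the rest of your plan goes through as written.
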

\begin{proof}
 We consider Gaussian initialization with $A_{ij}\sim\mathcal{N}(0,\sigma_a^2)$ and $B_{ij}\sim\mathcal{N}(0,\sigma_b^2).$ Conventionally, we want $BA$ to be initialized as zero and $Ax$ does not explode with NN width, thus we proceed with $\sigma_a^2=\Theta(n^{-1})$ and $\sigma_b^2=0.$   We first decompose the LoRA increment as
 \begin{equation*}
 \begin{aligned}
 \Delta^t&=B_tA_tx-B_{t-1}A_{t-1}x\\
 &=(B_{t-1}-\eta g_B^{t-1}(A_{t-1}A_{t-1}^T)^{-1})(A_{t-1}-\eta(B_{t-1}^TB_{t-1})^{-1}g_A^{t-1})x-B_{t-1}A_{t-1}x\\
 &=-\eta B_{t-1}(B_{t-1}^TB_{t-1})^{-1}g_A^{t-1}x-\eta g_B^{t-1}(A_{t-1}A_{t-1}^T)^{-1}A_{t-1}x+\eta^2g_B^{t-1}(A_{t-1}A_{t-1}^T)^{-1}(B_{t-1}^TB_{t-1})^{-1}g_A^{t-1}x.
 \end{aligned}
 \end{equation*}
 We write
\[
\begin{cases}
\delta_t^1=\eta B_{t-1}(B_{t-1}^TB_{t-1})^{-1}g_A^{t-1}x,\\
\delta_t^2=\eta g_B^{t-1}(A_{t-1}A_{t-1}^T)^{-1}A_{t-1}x,\\
\delta_t^3=\eta^2g_B^{t-1}(A_{t-1}A_{t-1}^T)^{-1}(B_{t-1}^TB_{t-1})^{-1}g_A^{t-1}x.
\end{cases}
\]
Following Assumption \ref{assum}, we know $g_A^{t-1}x\in\Theta(n),$ thus having $\delta_t^1,\delta_t^2,B_{t-1}A_{t-1}x\in\Theta(1)$ equates to 
\begin{equation}\label{eq1}
\begin{cases}
\gamma[\eta]+\gamma[B_{t-1}]+\gamma[(B_{t-1}^TB_{t-1})^{-1}]+1=0,\\
\gamma[\eta]+\gamma[(A_{t-1}A_{t-1}^T)^{-1}]+\gamma[A_{t-1}x]=0,\\
\gamma[B_{t-1}]+\gamma[A_{t-1}x]=0.
\end{cases}
\end{equation}
For gradient update, we have 
\begin{equation*}
\begin{aligned}
B_t&=B_{t-1}-\eta g_B^{t-1}(A_{t-1}A_{t-1}^T)^{-1}\\
A_tx&=A_{t-1}x-\eta (B_{t-1}^TB_{t-1})^{-1}g_A^{t-1}x,
\end{aligned}
\end{equation*}
and thus
\begin{equation*}
\begin{aligned}
\gamma[B_t]&=\max(\gamma[B_{t-1}],\gamma[\eta]+\gamma[(A_{t-1}A_{t-1}^T)^{-1}])\\
\gamma[A_tx]&=\max(\gamma[A_{t-1}x],\gamma[\eta]+\gamma[(B_{t-1}^TB_{t-1})^{-1}]+1).
\end{aligned}
\end{equation*}
Note $A_1=A_0$ and thus $\gamma[A_1x]=\gamma[A_0x]=0.$ Furthermore, $\gamma[B_1]=\gamma[\eta]+\gamma[(A_0A_0^T)^{-1}].$ Since $\gamma[\|a_0\|_2^2]=0$ for any row $a_0$ of $A_0,$ $ \gamma[(A_0A_0^T)^{-1}]=0$ by Lemma \ref{thm3_lemma}. Therefore $\gamma[B_1]=0.$ Since $\gamma(\|b_1\|_2^2)=1$ for any column $b_1$ of $B_1$, $\gamma[(B_1^TB_1)^{-1}]=-1$ by Lemma \ref{thm3_lemma} and thus $\gamma[A_2x]=0$ by the above recursion.  Since $A_2=A_1-\eta (B_1^TB_1)^{-1}g_A^1=A_0-\Theta(n^{-1}), \gamma[\|a_2\|^2]=0$ for any row $a_2$ of $A_2$ and again by Lemma \ref{thm3_lemma} we know $\gamma[(A_2A_2^T)^{-1}]=0.$ Therefore $\gamma[B_2]=0.$ The recursion persists and we know $\gamma[B_t]=\gamma[A_tx]=0$ for all $t.$ Since $\gamma[(B_{t-1}^TB_{t-1})^{-1}]=-1$ and $\gamma[(A_{t-1}A_{t-1}^T)^{-1}]=0$, all equations in (\ref{eq1}) are satisfied. One can check that $\delta_t^3\in\Theta(1)$ and therefore stable feature learning is achieves with $\eta=\Theta(1).$ Theorem 1 in \cite{hayou2024lora} 
 shows that $\eta_A=\Theta(n^{-1})$ and $\eta_B=\Theta(1)$ are required for unpreconditioned LoRA training to achieve stable feature learning.
\subsection{Explanation of Different Learning Rates}
Here we note that learning rate $\eta=\Theta(1)$ is required for Theorem \ref{stable_thm} while learning rate $\eta=\Theta(n^{-1})$ is used for our toy example described in Section \ref{stable_feature}. This discrepancy arises from different settings being considered in these two regimes. Specifically, Theorem \ref{stable_thm} deals with vector output, i.e., we assume $BAx$ is of dimension $\Theta(n)$. This is core to our preconditioner since we then have $\gamma[(B_{t-1}^TB_{t-1})^{-1}]=-1$ from $\gamma[B_{t-1}]=0.$ For scalar output as considered in the toy example, when $\gamma[B_{t-1}]=0$, we will have $\gamma[(B_{t-1}^TB_{t-1})^{-1}]=0$ and thus fail to scale the statics correctly. Then one would wonder for scalar output, whether setting $\eta=\Theta(n^{-1})$ would be the correct choice as in the toy example. This is no longer true due to our Assumption \ref{assum}. In the toy example, we have $g_a^t=(f_t(x)-y)bx$ and ${g_a^t}^Tx=\Theta(n)$ is not guaranteed since it also scales with $f_t(x)$ and $b$. Instead, Theorem \ref{stable_thm} would hold for scalar output with $\eta_1=\Theta(1)$ and $\eta_t=\Theta(n^{-1})$ for $t>1$ which we do not include in the theorem statement for simplicity and one can deduce following our proof technique of Theorem \ref{stable_thm}. The takeaway is that for scalar output,  our preconditioner can still achieve stable feature learning with same order of magnitude learning rate for both $A$ and $B$ though one may need to tune learning rates across iterations, which is the current convention of learning rate scheduling.
 
\end{proof}

\section{Proof of Theorem \ref{main_thm}}\label{proof}
Note problem (\ref{main_prob}) is equivalent to the problem below up to a change of labels,
\begin{equation}\label{opt_prob}
\min_{A_i,B_i} {\|\sum_{i=1}^P C_iA_iB_i^T-Y\|_F^2},
\end{equation}
where $C_i\in\mathbb{R}^{n\times d}, A_i\in\mathbb{R}^{d\times r},B_i\in\mathbb{R}^{c\times r},Y\in\mathbb{R}^{n\times c}.$ Consider \textcolor{black}{$Y=\sum_{i=1}^P C_iA_\star^i{B_\star^i}^T$}. Denote $X_\star^i=A_\star^i{B_\star^i}^T=U_\star^i\Sigma_\star^i{V_\star^i}^T$ where $U_\star^i\Sigma_\star^i{V_\star^i}^T$ is the singular value decomposition of $X_\star^i$. Denote $F_\star^i=[A_\star^i, B_\star^i]^T.$ For any $F=[A,B]^T$, consider the following distance metric
\begin{equation}\label{dist_metric}
\text{dist}^2(F,F_\star^i)=\inf_{Q\in \textcolor{black}{GL(r)}}\left\|(AQ-A_\star^i){\Sigma_\star^i}^{1/2}\right\|_F^2+\left\|(BQ^{-T}-B_\star^i){\Sigma_\star^i}^{1/2}\right\|_F^2,
\end{equation}
where $GL(r)$ denotes the set of invertible matrix in $\mathbb{R}^{r\times r}$. Let $\sigma_r(\cdot)$ denote the $r$th largest singular value and $\kappa_i$ denote condition number of $X_\star^i$. Consider the following scaled GD step:
\[A_{t+1}^i=A_t^i-\eta C_i^T (\sum_j C_jA_t^j{B_t^j}^T-Y)B_t^i({B_t^i}^TB_t^i)^{-1},\]
\[B_{t+1}^i=B_t^i-\eta (C_i^T (\sum_j C_jA_t^j{B_t^j}^T-Y))^TA_t^i({A_t^i}^TA_t^i)^{-1}.\]


Before we begin the main proof, we need the following partial Frobenius norm which has been introduced in Section A.3 in \cite{tong2021accelerating} with some important properties studied there.
\begin{definition} (Partial Frobenius norm) For any matrix $X$, its partial Frobenius norm of order $r$ is given by $l_2$ norm of vectors composed by its top-$r$ singular values, 
\[\|X\|_{F,r}=\sqrt{\sum_{i=1}^r \sigma_i^2(X)}.\]
\end{definition}

Now we start the proof by first proving some useful lemmas.
\begin{lemma}\label{lemma1}
Under Assumption \ref{main_assum}, let $F_0^i=[A_0^i,B_0^i]^T$, then the extented \textcolor{black} spectral initialization in Definition
\ref{spectral_init} satisfies
\[\text{dist}(F_0^i,F_\star^i)\leq 10\delta_{2r}^i \sqrt{r} \kappa_i\sigma_r(X_\star^i).\]
\end{lemma}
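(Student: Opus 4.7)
The plan is to show that the matrix $\tilde M_i := C_i^T Y$, whose best rank-$r$ approximation defines $A_0^i{B_0^i}^T$, sits within $O(\delta_{2r}^i\|X_\star^i\|_F)$ of $X_\star^i$ in a partial Frobenius norm, and then to invoke a spectral-initialization conversion lemma to bound $\mathrm{dist}(F_0^i, F_\star^i)$. Substituting the response model $Y = \sum_{j=1}^P C_j X_\star^j$ produces the natural decomposition
\[
\tilde M_i - X_\star^i \;=\; (C_i^T C_i - I)\,X_\star^i \;+\; \sum_{j \neq i} C_i^T C_j\, X_\star^j,
\]
so it suffices to control the diagonal RIP term and the off-diagonal cross terms separately.

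First I would bound $\|\tilde M_i - X_\star^i\|_{F,2r}$, which is the correct quantity because $A_0^i{B_0^i}^T - X_\star^i$ has rank at most $2r$, so a standard Weyl-type bound gives $\|A_0^i{B_0^i}^T - X_\star^i\|_F \leq 2\,\|\tilde M_i - X_\star^i\|_{F,2r}$. The $2r$-RIP of $C_i = D_iX$ with constant $\delta_{2r}^i$, combined with a rank-splitting argument that writes any rank-$2r$ test matrix as the sum of two rank-$r$ pieces so that the bilinear RIP consequence applies, yields $\|(C_i^TC_i - I)X_\star^i\|_{F,2r} \leq \sqrt{2}\,\delta_{2r}^i\|X_\star^i\|_F$. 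For the cross terms, Assumption \ref{main_assum} gives $\|C_i^TC_j\|_2 \leq \delta_{2r}^i\|X_\star^i\|_F/(P\|X_\star^j\|_F)$, hence $\|C_i^T C_j X_\star^j\|_F \leq \delta_{2r}^i\|X_\star^i\|_F/P$; summing over the $P-1$ indices $j\neq i$ and bounding the partial norm by the full Frobenius norm keeps the cross contribution of the same order as the diagonal RIP error. Combining and absorbing constants produces $\|A_0^i{B_0^i}^T - X_\star^i\|_F \lesssim \delta_{2r}^i\|X_\star^i\|_F$.

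Next I would invoke the spectral-initialization conversion lemma of \cite{tong2021accelerating} (in the style of their Lemma 29), which, once $\|A_0^i{B_0^i}^T - X_\star^i\|_F$ is a sufficiently small multiple of $\sigma_r(X_\star^i)$ (guaranteed by the hypothesis $\delta_{2r}^i \leq 0.01$), upgrades the Frobenius bound into $\mathrm{dist}(F_0^i, F_\star^i) \leq C\,\|A_0^i{B_0^i}^T - X_\star^i\|_F$ with an absolute constant $C$; here the balanced SVD-based factorization produced by the truncation supplies the candidate $Q \in GL(r)$ for the infimum in \eqref{dist_metric}. Bounding $\|X_\star^i\|_F \leq \sqrt{r}\,\sigma_1(X_\star^i) = \sqrt{r}\,\kappa_i\,\sigma_r(X_\star^i)$ and tracking numerical constants through each step then yields the claimed $\mathrm{dist}(F_0^i, F_\star^i) \leq 10\,\delta_{2r}^i\sqrt{r}\,\kappa_i\,\sigma_r(X_\star^i)$.

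The main obstacle is handling the cross terms: unlike standard single-operator matrix sensing, here $P$ low-rank blocks are coupled through the operators $C_i^T C_j$, and the precise scaling in Assumption \ref{main_assum} is exactly what keeps these cross contributions from dominating the diagonal RIP error once summed over $j\neq i$ (this is why the assumption mixes $\|X_\star^i\|_F$ and $\|X_\star^j\|_F$ in the denominator, so that multiplying by $\|X_\star^j\|_F$ produces a bound scaling with $\|X_\star^i\|_F$, not with $\|X_\star^j\|_F$). A secondary delicate point is matching the constants produced by the three steps, namely the rank-splitting factor for $2r$-RIP, the factor of $2$ from best-rank-$r$ truncation, and the conversion from a Frobenius error to the factor-level distance in \eqref{dist_metric}, so that the final numerical factor sits at $10$ rather than being dominated by any single step.
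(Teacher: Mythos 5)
Your proposal follows essentially the same route as the paper's proof: the identical decomposition $C_i^TY-X_\star^i=(C_i^TC_i-I)X_\star^i+\sum_{j\neq i}C_i^TC_jX_\star^j$, the cross-term bound $\|C_i^TC_jX_\star^j\|_F\leq \delta_{2r}^i\|X_\star^i\|_F/P$ from Assumption \ref{main_assum}, the factor-$2$ best rank-$r$ truncation argument in a partial Frobenius norm, the RIP bound on the diagonal term, and a product-to-factor conversion from \cite{tong2021accelerating}, finished with $\|X_\star^i\|_F\leq\sqrt{r}\,\kappa_i\sigma_r(X_\star^i)$. The only (harmless) deviation is that the paper uses the unconditional conversion (its Lemma 11, constant $\sqrt{\sqrt{2}+1}$, applied since $A_0^i{B_0^i}^T-X_\star^i$ has rank at most $2r$), so the proximity condition you extract from $\delta_{2r}^i\leq 0.01$ is not actually needed and the lemma holds under Assumption \ref{main_assum} alone.
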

\begin{proof}
According to Lemma 11 in \cite{tong2021accelerating}, since \textcolor{black}{$A_0^i{B_0^i}^T-X_\star^i$} has rank at most $2r$, 
\begin{equation*}
\begin{aligned}
\text{dist}(F_0^i,F_\star^i)&\leq \sqrt{\sqrt{2}+1} \|A_0^i{B_0^i}^T-X_\star^i\|_F,\\
&\leq \sqrt{2(\sqrt{2}+1)}\|A_0^i{B_0^i}^T-X_\star^i\|_{\textcolor{black}{F,r}}.
\end{aligned}
\end{equation*}
Since $A_0^i{B_0^i}^T$ is the best rank $r$ approximation of 
$C_i^TY=\sum_{j=1}^P C_i^TC_jX_\star^j$.
Then 
\begin{equation*}
\begin{aligned}
\|A_0^i{B_0^i}^T-X_\star^i\|_{F,r}&\leq \|\sum_{j=1}^P C_i^TC_jX_\star^j-A_0^i{B_0^i}^T\|_{F,r}+\|\sum_{j=1}^P C_i^TC_jX_\star^j-X_\star^i\|_{F,r},\\
&\leq 2\|(C_i^T C_i-I)X_\star^i+\sum_{j\neq i}C_i^TC_jX_\star^j\|_{F,r},\\
&\leq \textcolor{black}{2\delta_{2r}^i \|X_\star^i\|_F}+2\sum_{j\neq i}\|C_i^TC_jX_\star^j\|_{F},
\end{aligned}
\end{equation*}
where the last inequality follows Lemma 15 and inequality (50) in  \cite{tong2021accelerating}. Therefore, 
\begin{equation*}
\begin{aligned}\text{dist}(F_0^i,F_\star^i)&\leq 5\delta_{2r}^i \|X_\star^i\|_F+5\sum_{j\neq i}\textcolor{black}{\|C_i^T C_jX_\star^j\|_{F}},\\
&\leq 10\delta_{2r}^i \|X_\star^i\|_F \leq 10\delta_{2r}^i \sqrt{r}\kappa_i \sigma_r(X_\star^i).
\end{aligned}
\end{equation*}
\end{proof}

\begin{lemma}\label{lemma2} (Contraction) Under assumption \ref{main_assum} with $\delta_{2r}^i\leq 0.01$. If the $t$-th iterate satisfies $\text{dist}(F_t^i,F_\star^i)\leq 0.1\sigma_r(X_\star^i)$ where $F_t^i=[A_t^i,B_t^i]^T$, then $\|A_t^i{B_t^i}^T-X_\star^i\|_F\leq 1.5\text{dist}(F_t^i,F_\star^i)$. In addition, if the step size $0<\eta\leq 2/3$, then the $(t+1)$-th iteration $F_{t+1}^i$ satisfies
\[\max(\text{dist}(F_{t+1}^i, F_\star^i))\leq (1-0.5\eta)\max(\text{dist}(F_{t}^i, F_\star^i)).\]
\end{lemma}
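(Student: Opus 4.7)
\textbf{Proof Plan for Lemma \ref{lemma2}.}

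The plan is to reduce the multi-term contraction to the single-term scaled-GD analysis of \cite{tong2021accelerating} by treating the interference from indices $j\neq i$ as a perturbation that the assumption on $\|C_i^T C_j\|_2$ is specifically designed to absorb. First I would verify the distance-to-product bound $\|A_t^i{B_t^i}^T - X_\star^i\|_F \le 1.5\,\mathrm{dist}(F_t^i,F_\star^i)$; this is a direct consequence of the closeness hypothesis $\mathrm{dist}(F_t^i,F_\star^i)\le 0.1\,\sigma_r(X_\star^i)$ combined with Lemma 24 of \cite{tong2021accelerating}, which controls $\|A B^T - A_\star B_\star^T\|_F$ by the factor distance plus a quadratic correction that is negligible in the basin of attraction. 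This bound is then a standing tool for the rest of the argument and in particular is what allows cross-interference to be expressed in factor-distance units.

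Next I would set up the proper alignment. Let $Q_t^i$ be the minimizer in \eqref{dist_metric} and write $\tilde A_t^i = A_t^i Q_t^i$, $\tilde B_t^i = B_t^i (Q_t^i)^{-T}$; since the scaled-GD updates are equivariant under the right-action of $GL(r)$, they take the clean form
\[
\tilde A_{t+1}^i = \tilde A_t^i - \eta\, C_i^T R_t\, \tilde B_t^i(\tilde B_t^{i\,T}\tilde B_t^i)^{-1},\qquad \tilde B_{t+1}^i = \tilde B_t^i - \eta\,(C_i^T R_t)^T \tilde A_t^i(\tilde A_t^{i\,T}\tilde A_t^i)^{-1},
\]
with residual $R_t = \sum_j C_j(A_t^j{B_t^j}^T - X_\star^j)$. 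I would split $R_t = C_i(A_t^i{B_t^i}^T - X_\star^i) + \sum_{j\neq i} C_j(A_t^j{B_t^j}^T - X_\star^j)$, calling the two pieces the \emph{diagonal} and \emph{cross} parts. Plugging into $\mathrm{dist}^2(F_{t+1}^i,F_\star^i)$ (which is upper-bounded by the expression with this particular $Q_t^i$, not the new optimal $Q_{t+1}^i$), I obtain a one-step recursion of the schematic form
\[
\mathrm{dist}^2(F_{t+1}^i,F_\star^i) \le (1-\eta)^2\,\mathrm{dist}^2(F_t^i,F_\star^i) + 2\eta\,\langle\text{diagonal}\rangle + 2\eta\,\langle\text{cross}\rangle + \eta^2\,\langle\text{second-order}\rangle.
\]

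I would then inherit from \cite{tong2021accelerating} (their proof of Theorem 5 / Lemma 12 for matrix sensing with scaled GD) the bound showing that the diagonal contribution together with the second-order term already yields a single-term contraction $(1-0.7\eta)^2\,\mathrm{dist}^2(F_t^i,F_\star^i)$ under $\delta_{2r}^i\le 0.01$ and $\eta\le 2/3$, leaving a budget of roughly $0.2\eta\,\mathrm{dist}^2(F_t^i,F_\star^i)$ to be consumed by the cross term. The cross term is controlled by Cauchy--Schwarz and the preconditioner bound $\|\tilde B_t^i(\tilde B_t^{i\,T}\tilde B_t^i)^{-1/2}\|\le \|{\Sigma_\star^i}^{-1/2}\|(1 + O(\mathrm{dist}/\sigma_r))$, giving
\[
|\langle\text{cross}\rangle| \le \sum_{j\neq i}\|C_i^T C_j\|_2\cdot \|A_t^j{B_t^j}^T - X_\star^j\|_F \cdot \bigl(\text{preconditioned factor norms of index }i\bigr).
\]
Applying the Step~1 bound to each $j$-factor and the assumption $\|C_i^T C_j\|_2\le 0.12/(7P(P+1))$ yields
\[
|\langle\text{cross}\rangle| \;\le\; C\cdot P\cdot\max_j\|C_i^T C_j\|_2\cdot \max_k\mathrm{dist}(F_t^k,F_\star^k)\cdot \mathrm{dist}(F_t^i,F_\star^i),
\]
which after the numerical choice of constants is dominated by $0.1\eta\,(\max_k\mathrm{dist}(F_t^k,F_\star^k))^2$. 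Taking the maximum over $i$ on both sides of the recursion and using $\sqrt{1-\eta}\le 1-\eta/2$ delivers the claimed $(1-0.5\eta)$ contraction.

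\textbf{Main obstacle.} The single-term arguments in \cite{tong2021accelerating} go through almost verbatim for the diagonal part, so the real work is decoupling the $P$ factors. The difficulty is that the update for index $i$ depends on the \emph{current} errors at all other indices $j$, so a per-index contraction cannot close by itself; one must prove the contraction for $\max_i\mathrm{dist}(F_t^i,F_\star^i)$ and carefully verify that the cumulative cross-interference across the $P-1$ other terms is absorbed by the constants in the assumption. The precise numerical constants $0.12/(7P(P+1))$ and $\delta_{2r}^i\le 0.01$ in Assumption \ref{main_assum} are exactly what make the bookkeeping work out, and tracking them faithfully through the preconditioner inverse factors is the delicate part.
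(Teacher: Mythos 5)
Your plan follows essentially the same route as the paper's proof: establish the product-error bound $\|A_t^i{B_t^i}^T-X_\star^i\|_F\le 1.5\,\mathrm{dist}(F_t^i,F_\star^i)$ inside the basin, align with the current optimal $Q_t^i$ (using it as a suboptimal alignment at step $t+1$), split the residual into a diagonal part handled by the single-matrix scaled-GD bounds of \cite{tong2021accelerating} and a cross part absorbed via $\|C_i^TC_j\|_2\le 0.12/(7P(P+1))$, and close the recursion for $\max_i\mathrm{dist}(F_t^i,F_\star^i)$. Apart from minor bookkeeping differences (the paper's diagonal budget comes out as $(1-0.6\eta)^2$ rather than $(1-0.7\eta)^2$, and it obtains $(1-0.5\eta)^2$ on the squared distances directly rather than via $\sqrt{1-\eta}\le 1-\eta/2$), this is the same argument.
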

\begin{proof}
We first show $\|A_t^i{B_t^i}^T-X_\star^i\|_F\leq 1.5\text{dist}(F_t^i, F_\star^i).$ According to Lemma 9 in \cite{tong2021accelerating}, we know $Q_t^i,$  the optimal alignment matrix between $F_t^i$ and $F_\star^i$ , i.e., the optimal value of problem (\ref{opt_prob}) with $F$ replaced by $F_t^i$ is attained at $Q_t^i$, exists, denote $A^i=A_t^i Q_t^i, B^i=B_t^i{Q_t^i}^{-T},\triangle_A^i=A^i-A_\star^i,\triangle_B^i=B^i-B_\star^i.$ By derivation (45) in \cite{tong2021accelerating}, we further know for $\epsilon=0.1,$
\[\|\triangle_A^i {\Sigma_\star^i}^{-1/2}\|_2\lor \|\triangle_B^i {\Sigma_\star^i}^{-1/2}\|_2\leq \epsilon,\]
where $\lor$ denotes maximum.  Note
\begin{equation}\label{general_point_contraction}
\begin{aligned}
\|A_t^i{B_t^i}^T-X_\star^i\|_F&=\|A^i{B^i}^T-X_\star^i\|_F\\
&=\|\triangle_A^i{B^i}^T+A_\star^i{\triangle_B^i}^T\|_F\\
&=\|\triangle_A^i{\triangle_B^i}^T+\triangle_A^i{B_\star^i}^T+A_\star^i{\triangle_B^i}^T\|_F\\
&\leq \|\triangle_A^i{\triangle_B^i}^T\|_F+\|\triangle_A^i{B_\star^i}^T\|_F+\|A_\star^i{\triangle_B^i}^T\|_F\\
&=\|\triangle_A^i {\Sigma_\star^i}^{1/2}\|_F+\|\triangle_B^i{\Sigma_\star^i}^{1/2}\|_F+\|\triangle_A^i{\triangle_B^i}^T\|_F\\
&\leq \|\triangle_A^i {\Sigma_\star^i}^{1/2}\|_F+\|\triangle_B^i{\Sigma_\star^i}^{1/2}\|_F+\\
&\qquad \frac{1}{2}(\|\triangle_A^i {\Sigma_\star^i}^{-1/2}\|_2\lor\|\triangle_B^i {\Sigma_\star^i}^{-1/2}\|_2)(\|\triangle_A^i {\Sigma_\star^i}^{1/2}\|_F+\|\triangle_B^i {\Sigma_\star^i}^{1/2}\|_F)\\
&\leq (1+\frac{\epsilon}{2})(\|\triangle_A^i {\Sigma_\star^i}^{1/2}\|_F+\|\triangle_B^i {\Sigma_\star^i}^{1/2}\|_F)\\
&\leq (1+\frac{\epsilon}{2})\sqrt{2}\text{dist}(F_t^i,F_\star^i)\leq 1.5\text{dist}(F_t^i,F_\star^i).
\end{aligned}
\end{equation}
Note the second last inequality follows from $\text{dist}(F_t^i,F_\star^i)=\sqrt{\|\triangle_A^i {\Sigma_\star^i}^{1/2}\|_F^2+\|\triangle_B^i{\Sigma_\star^i}^{1/2}\|_F^2}$. We then proceed to show the contraction of distance. By definition, 
\[\text{dist}^2(F_{t+1}^i,F_\star^i)\leq \|(A_{t+1}^iQ_t^i-A_\star^i){\Sigma_\star^i}^{1/2}\|_F^2+\|(B_{t+1}^i{Q_t^i}^{-T}-B_\star^i){\Sigma_\star^i}^{1/2}\|_F^2.\]
Substitute the update rule for $L_{t+1}^i$ we get
\begin{equation*}
\begin{aligned}
(A_{t+1}^iQ_t^i-A_\star^i){\Sigma_\star^i}^{1/2}&=(A_t^iQ_t^i-\eta C_i^T (\sum_j C_jA_t^j{B_t^j}^T-Y)B_t^i({B_t^i}^TB_t^i)^{-1}Q_t^i-A_\star^i){\Sigma_\star^i}^{1/2}\\
&=(\triangle_A^i-\eta C_i^T (\sum_j C_jA_t^j{B_t^j}^T-Y)B_t^i({B_t^i}^TB_t^i)^{-1}Q_t^i){\Sigma_\star^i}^{1/2}\\
&=(\triangle_A^i-\eta C_i^T (\sum_j C_jA_t^j{B_t^j}^T-Y)B^i({B^i}^TB^i)^{-1}){\Sigma_\star^i}^{1/2}\\
&=(\triangle_A^i-\eta C_i^TC_i(A^i{B^i}^T-X_\star^i)B^i({B^i}^TB^i)^{-1}-\eta C B^i({B^i}^TB^i)^{-1}){\Sigma_\star^i}^{1/2}\\
& \text{where }C=\sum_{j\neq i}C_i^TC_j (A_t^j{B_t^j}^T-X_\star^j),\\
&=(\triangle_A^i-\eta (A^i{B^i}^T-X_\star^i)B^i({B^i}^TB^i)^{-1}-\eta(C_i^TC_i-I)(A^i{B^i}^T-X_\star^i)B^i({B^i}^TB^i)^{-1}\\&\qquad -\eta C B^i({B^i}^TB^i)^{-1}){\Sigma_\star^i}^{1/2}\\
&\text{since } A^i{B^i}^T-X_\star^i= \triangle_A^i {B^i}^T+A_\star^i{\triangle_B^i}^T,\\
&=(\triangle_A^i-\eta \triangle_A^i-\eta A_\star^i{\triangle_B^i}^TB^i({B^i}^TB^i)^{-1}-\eta(C_i^TC_i-I)(A^i{B^i}^T-X_\star^i)B^i({B^i}^TB^i)^{-1}\\&\qquad -\eta C B^i({B^i}^TB^i)^{-1}){\Sigma_\star^i}^{1/2}\\
&=(1-\eta)\triangle_A^i{\Sigma_\star^i}^{1/2}-\eta A_\star^i{\triangle_B^i}^TB^i({B^i}^TB^i)^{-1}{\Sigma_\star^i}^{1/2}-\\&\qquad \eta(C_i^TC_i-I)(A^i{B^i}^T-X_\star^i)B^i({B^i}^TB^i)^{-1}{\Sigma_\star^i}^{1/2}-\eta C B^i({B^i}^TB^i)^{-1}{\Sigma_\star^i}^{1/2}.
\end{aligned}
\end{equation*}
Therefore, 
\begin{equation}\label{general_left}
\begin{aligned}
\|(A_{t+1}^iQ_t^i-A_\star^i){\Sigma_\star^i}^{1/2}\|_F^2&=\|(1-\eta)\triangle_A^i{\Sigma_\star^i}^{1/2}-\eta A_\star^i{\triangle_B^i}^TB^i({B^i}^TB^i)^{-1}{\Sigma_\star^i}^{1/2}\|_F^2\\
&\qquad +\eta^2\|(C_i^TC_i-I)(A^i{B^i}^T-X_\star^i)B^i({B^i}^TB^i)^{-1}{\Sigma_\star^i}^{1/2}+C B^i({B^i}^TB^i)^{-1}{\Sigma_\star^i}^{1/2}\|_F^2\\
&\qquad -2\eta\text{tr}(((1-\eta){\Sigma_\star^i}^{1/2}{\triangle_A^i}^T-\eta{\Sigma_\star^i}^{1/2}({B^i}^TB^i)^{-1}{B^i}^T\triangle_B^i{A_\star^i}^T)\\&\qquad \quad((C_i^TC_i-I)(A^i{B^i}^T-X_\star^i)B^i({B^i}^TB^i)^{-1}{\Sigma_\star^i}^{1/2}+C B^i({B^i}^TB^i)^{-1}{\Sigma_\star^i}^{1/2})).
\end{aligned}
\end{equation}
Follow derivation (46) in \cite{tong2021accelerating}, we can bound
\begin{equation}\label{general_bb1}
\begin{aligned}
&\|(1-\eta)\triangle_A^i{\Sigma_\star^i}^{1/2}-\eta A_\star^i{\triangle_B^i}^TB^i({B^i}^TB^i)^{-1}{\Sigma_\star^i}^{1/2}\|_F^2\\&\leq \left((1-\eta)^2+\frac{2\epsilon\eta(1-\eta)}{1-\epsilon}\right)\|\triangle_A^i{\Sigma_\star^i}^{1/2}\|_F^2+\frac{\eta^2(2\epsilon+\epsilon^2)}{(1-\epsilon)^2}\|\triangle_B^i{\Sigma_\star^i}^{1/2}\|_F^2.
\end{aligned}
\end{equation}
Next, we want to bound
\begin{equation*}
\begin{aligned}
\|(C_i^TC_i-I)(A^i{B^i}^T-X_\star^i)B^i({B^i}^TB^i)^{-1}{\Sigma_\star^i}^{1/2}+C B^i({B^i}^TB^i)^{-1}{\Sigma_\star^i}^{1/2}\|_F^2\\
=\|(C_i^TC_i-I)(A^i{B^i}^T-X_\star^i)B^i({B^i}^TB^i)^{-1}{\Sigma_\star^i}^{1/2}\|_F^2+\|C B^i({B^i}^TB^i)^{-1}{\Sigma_\star^i}^{1/2}\|_F^2\\
+2\text{tr}(((C_i^TC_i-I)(A^i{B^i}^T-X_\star^i)B^i({B^i}^TB^i)^{-1}{\Sigma_\star^i}^{1/2})^TC B^i({B^i}^TB^i)^{-1}{\Sigma_\star^i}^{1/2}).
\end{aligned}
\end{equation*}
By the bound for $\mathfrak{S}_4$ in Lemma 1 in  \cite{tong2021accelerating}, we can bound 
\[\|(C_i^TC_i-I)(A^i{B^i}^T-X_\star^i)B^i({B^i}^TB^i)^{-1}{\Sigma_\star^i}^{1/2}\|_F^2\leq \frac{{\delta_{2r}^i}^2(2+\epsilon)^2}{2(1-\epsilon)^2}(\|\triangle_A^i{\Sigma_\star^i}^{1/2}\|_F^2+\|\triangle_B^i{\Sigma_\star^i}^{1/2}\|_F^2)\]
We now proceed to bound $\|C B^i({B^i}^TB^i)^{-1}{\Sigma_\star^i}^{1/2}\|_F^2$. According to Lemma 9 in \cite{tong2021accelerating}, we know \textcolor{black}{$Q_t^j$ exists}, the optimal alignment matrix between $F_t^j$ and $F_\star^j$ exists. Denote $A^j=A_t^jQ_t^j,B^j=B_t^j{Q_t^j}^{-T},\triangle_A^j=A^j-A_\star^j,\triangle_B^j=B^j-B_\star^j$. Since
\begin{equation}\label{general_eq1}
\begin{aligned}
&\|C_i^TC_j (A_t^j{B_t^j}^T-X_\star^j)B^i({B^i}^TB^i)^{-1}{\Sigma_\star^i}^{1/2}\|_F\\
&=\|C_i^TC_j (\triangle_A^j{\triangle_B^j}^T+\triangle_A^j{B_\star^j}^T+A_\star^j{\triangle_B^j}^T)B^i({B^i}^TB^i)^{-1}{\Sigma_\star^i}^{1/2}\|_F\\
&\leq \textcolor{black}{\|C_i^TC_j\|_2}(\|\triangle_A^j{\triangle_B^j}^T\|_F+\|\triangle_A^j{B_\star^j}^T\|_F+\|A_\star^j{\triangle_B^j}^T\|_F)\|B^i({B^i}^TB^i)^{-1}{\Sigma_\star^i}^{1/2}\|_2\\
&\text{by Lemma 12 in \cite{tong2021accelerating} and (\ref{general_point_contraction}),}\\
&\leq \frac{(2+\epsilon)\textcolor{black}{\|C_i^TC_j\|_2}}{2(1-\epsilon)}(\|\triangle_A^j{\Sigma_\star^j}^{1/2}\|_F+\|\triangle_B^j{\Sigma_\star^j}^{1/2}\|_F).
\end{aligned}
\end{equation}
Thus
\begin{equation*}\begin{aligned}\|C B^i({B^i}^TB^i)^{-1}{\Sigma_\star^i}^{1/2}\|_F^2&\leq (P-1)\sum_{j\neq i} \|C_i^TC_j (A_t^j{B_t^j}^T-X_\star^j)B^i({B^i}^TB^i)^{-1}{\Sigma_\star^i}^{1/2}\|_F^2\\
&\leq (P-1)\sum_{j\neq i} \frac{(2+\epsilon)^2\|C_i^TC_j\|_2^2}{2(1-\epsilon)^2}(\|\triangle_A^j{\Sigma_\star^j}^{1/2}\|_F^2+\|\triangle_B^j{\Sigma_\star^j}^{1/2}\|_F^2).
\end{aligned}\end{equation*}
Next we bound
\begin{equation*}
\begin{aligned}
&|\text{tr}(((C_i^TC_i-I)(A^i{B^i}^T-X_\star^i)B^i({B^i}^TB^i)^{-1}{\Sigma_\star^i}^{1/2})^TC B^i({B^i}^TB^i)^{-1}{\Sigma_\star^i}^{1/2})|\\
&\leq \|B^i({B^i}^TB^i)^{-1}{\Sigma_\star^i}^{1/2}\|_2^2|\text{tr}(((C_i^TC_i-I)(A^i{B^i}^T-X_\star^i))^TC)|\\
&\text{ by Lemma 12 in \cite{tong2021accelerating},}\\
&\leq \frac{1}{(1-\epsilon)^2}|\text{tr}(((C_i^TC_i-I)(A^i{B^i}^T-X_\star^i))^TC)|\\
&\text{by Lemma 17 in \cite{tong2021accelerating}, since $C_i$ is $2r$-RIP,}\\
&\leq \frac{{\delta_{2r}^i}}{(1-\epsilon)^2}\|C\|_F\|A^i{B^i}^T-X_\star^i\|_F\\
&\text{by derivation in (\ref{general_eq1}),}\\
&\leq \sum_{j\neq  i} \frac{(1+\frac{\epsilon}{2})^2\delta_{2r}^i}{(1-\epsilon)^2}\textcolor{black}{\|C_i^TC_j\|_2}(\|\triangle_A^j{\Sigma_\star^j}^{1/2}\|_F+\|\triangle_B^j{\Sigma_\star^j}^{1/2}\|_F)(\|\triangle_A^i{\Sigma_\star^i}^{1/2}\|_F+\|\triangle_B^i{\Sigma_\star^i}^{1/2}\|_F).
\end{aligned}
\end{equation*}
To summarize,
\begin{equation}\label{general_bb2}
\begin{aligned}
&\|(C_i^TC_i-I)(A^i{B^i}^T-X_\star^i)B^i({B^i}^TB^i)^{-1}{\Sigma_\star^i}^{1/2}+C B^i({B^i}^TB^i)^{-1}{\Sigma_\star^i}^{1/2}\|_F^2\\
&\leq \frac{{\delta_{2r}^i}^2(2+\epsilon)^2}{2(1-\epsilon)^2}(\|\triangle_A^i{\Sigma_\star^i}^{1/2}\|_F^2+\|\triangle_B^i{\Sigma_\star^i}^{1/2}\|_F^2)+(P-1)\sum_{j\neq i}\frac{\textcolor{black}{\|C_i^TC_j\|_2^2}(2+\epsilon)^2}{2(1-\epsilon)^2}(\|\triangle_A^j{\Sigma_\star^j}^{1/2}\|_F^2+\|\triangle_B^j{\Sigma_\star^j}^{1/2}\|_F^2)\\
&\qquad +\sum_{j\neq i}\frac{2(1+\frac{\epsilon}{2})^2\sigma_{2r}^i}{(1-\epsilon)^2}\textcolor{black}{\|C_i^TC_j\|_2}(\|\triangle_A^j{\Sigma_\star^j}^{1/2}\|_F+\|\triangle_B^j{\Sigma_\star^j}^{1/2}\|_F)(\|\triangle_A^i{\Sigma_\star^i}^{1/2}\|_F+\|\triangle_B^i{\Sigma_\star^i}^{1/2}\|_F).
\end{aligned}
\end{equation}
Finally, we move on to bound
\begin{align}
&|\text{tr}(((1-\eta){\Sigma_\star^i}^{1/2}{\triangle_A^i}^T-\eta{\Sigma_\star^i}^{1/2}({B^i}^TB^i)^{-1}{B^i}^T\triangle_B^i{A_\star^i}^T)((C_i^TC_i-I)(A^i{B^i}^T-X_\star^i)B^i({B^i}^TB^i)^{-1}{\Sigma_\star^i}^{1/2} \label{general_eq2}\\
&+C B^i({B^i}^TB^i)^{-1}{\Sigma_\star^i}^{1/2}))| \nonumber\\
&\leq|\text{tr}((1-\eta){\Sigma_\star^i}^{1/2}{\triangle_A^i}^T(C_i^TC_i-I)(A^i{B^i}^T-X_\star^i)B^i({B^i}^TB^i)^{-1}{\Sigma_\star^i}^{1/2})| \nonumber\\
&\qquad +|\text{tr}(\eta {\Sigma_\star^i}^{1/2}({B^i}^TB^i)^{-1}{B^i}^T\triangle_B^i{A_\star^i}^T(C_i^TC_i-I)(A^i{B^i}^T-X_\star^i)B^i({B^i}^TB^i)^{-1}{\Sigma_\star^i}^{1/2})| \nonumber\\
&\qquad +|\text{tr}((1-\eta){\Sigma_\star^i}^{1/2}{\triangle_A^i}^TCB^i({B^i}^TB^i)^{-1}{\Sigma_\star^i}^{1/2})|+|\text{tr}(\eta{\Sigma_\star^i}^{1/2}({B^i}^TB^i)^{-1}{B^i}^T\triangle_B^i{A_\star^i}^TCB^i({B^i}^TB^i)^{-1}{\Sigma_\star^i}^{1/2})|. \nonumber
\end{align}
First notice by the bound for  $|\mathfrak{S}_2|$  in Lemma 1 in \cite{tong2021accelerating},
\begin{equation}\label{general_b1}
\begin{aligned}
&|\text{tr}((1-\eta){\Sigma_\star^i}^{1/2}{\triangle_A^i}^T(C_i^TC_i-I)(A^i{B^i}^T-X_\star^i)B^i({B^i}^TB^i)^{-1}{\Sigma_\star^i}^{1/2})|\\
&\leq \frac{(1-\eta)\delta_{2r}^i(2+\epsilon)}{2(1-\epsilon)}(\frac{3}{2}\|\triangle_A^i{\Sigma_\star^i}^{1/2}\|_F^2+\frac{1}{2}\|\triangle_B^i{\Sigma_\star^i}^{1/2}\|_F^2).
\end{aligned}
\end{equation}
Similarly, by the bound for $|\mathfrak{S}_3|$  in Lemma 1 in \cite{tong2021accelerating},
\begin{equation}\label{general_b2}
\begin{aligned}
&|\text{tr}(\eta {\Sigma_\star^i}^{1/2}({B^i}^TB^i)^{-1}{B^i}^T\triangle_B^i{A_\star^i}^T(C_i^TC_i-I)(A^i{B^i}^T-X_\star^i)B^i({B^i}^TB^i)^{-1}{\Sigma_\star^i}^{1/2})|\\
&\leq \frac{\eta \delta_{2r}^i (2+\epsilon)}{2(1-\epsilon)^2}(\frac{3}{2}\|\triangle_B^i{\Sigma_\star^i}^{1/2}\|_F^2+\frac{1}{2}\|\triangle_A^i{\Sigma_\star^i}^{1/2}\|_F^2).
\end{aligned}
\end{equation}
Next, consider bounding
\begin{equation}\label{general_b3}
\begin{aligned}
&|\text{tr}((1-\eta){\Sigma_\star^i}^{1/2}{\triangle_A^i}^TCB^i({B^i}^TB^i)^{-1}{\Sigma_\star^i}^{1/2})|\\
&\leq (1-\eta)\sum_{j\neq i}|\text{tr}({\Sigma_\star^i}^{1/2}{\triangle_A^i}^TC_i^TC_j (A_t^j{B_t^j}^T-X_\star^j)B^i({B^i}^TB^i)^{-1}{\Sigma_\star^i}^{1/2})|\\
&\leq (1-\eta)\sum_{j\neq i}\textcolor{black}{\|C_i^TC_j\|_2}\|A_t^j{B_t^j}^T-X_\star^j\|_F\|B^i({B^i}^TB^i)^{-1}{\Sigma_\star^i}{\triangle_A^i}^T\|_F\\
&\text{by Lemma 12 in \cite{tong2021accelerating},}\\
&\leq \frac{1-\eta}{1-\epsilon}\sum_{j\neq i}\|C_i^TC_j\|_2\|A_t^j{B_t^j}^T-X_\star^j\|_F\|\triangle_A^i {\Sigma_\star^i}^{1/2}\|_F\\
&\text{by derivation in (\ref{general_point_contraction}),}\\
&\leq \frac{(2+\epsilon)(1-\eta)}{2(1-\epsilon)}\sum_{j\neq i} \|C_i^TC_j\|_2(\|\triangle_A^j {\Sigma_\star^j}^{1/2}\|_F+\|\triangle_B^j {\Sigma_\star^j}^{1/2}\|_F)\|\triangle_A^i {\Sigma_\star^i}^{1/2}\|_F,
\end{aligned}
\end{equation}
and
\begin{equation}\label{general_b4}
\begin{aligned}
&|\text{tr}(\eta{\Sigma_\star^i}^{1/2}({B^i}^TB^i)^{-1}{B^i}^T\triangle_B^i{A_\star^i}^TCB^i({B^i}^TB^i)^{-1}{\Sigma_\star^i}^{1/2})|\\
&=\eta|\text{tr}(\sum_{j\neq i}C_i^TC_j (A_t^j{B_t^j}^T-X_\star^j)B^i({B^i}^TB^i)^{-1}{\Sigma_\star^i}({B^i}^TB^i)^{-1}{B^i}^T\triangle_B^i{A_\star^i}^T)|\\
& \leq \eta\sum_{j\neq i}\|C_i^TC_j\|_2\|A_t^j{B_t^j}^T-X_\star^j\|_F\|B^i({B^i}^TB^i)^{-1}{\Sigma_\star^i}({B^i}^TB^i)^{-1}{B^i}^T\triangle_B^i{A_\star^i}^T\|_F\\
&\leq \eta\sum_{j\neq i}\|C_i^TC_j\|_2\|A_t^j{B_t^j}^T-X_\star^j\|_F\|B^i({B^i}^TB^i)^{-1}{\Sigma_\star^i}^{1/2}\|_2^2\|\triangle_B^i{L_\star^i}^T\|_F\\
&\text{by Lemma 12 in \cite{tong2021accelerating},}\\
&\leq \frac{\eta}{(1-\epsilon)^2}\sum_{j\neq i}\|C_i^TC_j\|_2\|A_t^j{B_t^j}^T-X_\star^j\|_F\|\triangle_B^i{A_\star^i}^T\|_F\\
&\text{by derivation in (\ref{general_point_contraction}),}\\
&\leq \frac{\eta(2+\epsilon)}{2(1-\epsilon)^2}\sum_{j\neq i}\|C_i^TC_j\|_2(\|\triangle_A^j {\Sigma_\star^j}^{1/2}\|_F+\|\triangle_B^j {\Sigma_\star^j}^{1/2}\|_F)\|\triangle_B^i{A_\star^i}^T\|_F.
\end{aligned}
\end{equation}
Combine (\ref{general_b1}), (\ref{general_b2}), (\ref{general_b3}), (\ref{general_b4}) to bound (\ref{general_eq2})
\begin{equation}\label{general_bb3}
\begin{aligned}
&|\text{tr}(((1-\eta){\Sigma_\star^i}^{1/2}{\triangle_A^i}^T-\eta{\Sigma_\star^i}^{1/2}({B^i}^TB^i)^{-1}{B^i}^T\triangle_B^i{A_\star^i}^T)((C_i^TC_i-I)(A^i{B^i}^T-X_\star^i)B^i({B^i}^TB^i)^{-1}{\Sigma_\star^i}^{1/2} \\
&+C B^i({B^i}^TB^i)^{-1}{\Sigma_\star^i}^{1/2}))| \\
&\leq \frac{(1-\eta)\delta_{2r}^i(2+\epsilon)}{2(1-\epsilon)}(\frac{3}{2}\|\triangle_A^i{\Sigma_\star^i}^{1/2}\|_F^2+\frac{1}{2}\|\triangle_B^i{\Sigma_\star^i}^{1/2}\|_F^2)+\frac{\eta \delta_{2r}^i (2+\epsilon)}{2(1-\epsilon)^2}(\frac{3}{2}\|\triangle_B^i{\Sigma_\star^i}^{1/2}\|_F^2+\frac{1}{2}\|\triangle_A^i{\Sigma_\star^i}^{1/2}\|_F^2)\\
&+\frac{(2+\epsilon)(1-\eta)}{2(1-\epsilon)}\sum_{j\neq i}\|C_i^TC_j\|_2(\|\triangle_A^j {\Sigma_\star^j}^{1/2}\|_F+\|\triangle_B^j {\Sigma_\star^j}^{1/2}\|_F)\|\triangle_A^i {\Sigma_\star^i}^{1/2}\|_F\\
&+\frac{\eta(2+\epsilon)}{2(1-\epsilon)^2}\sum_{j\neq i}\|C_i^TC_j\|_2(\|\triangle_A^j {\Sigma_\star^j}^{1/2}\|_F+\|\triangle_B^j {\Sigma_\star^j}^{1/2}\|_F)\|\triangle_B^i{L_\star^i}^T\|_F\\
&\leq \frac{(1-\eta)\delta_{2r}^i(2+\epsilon)}{2(1-\epsilon)}(\frac{3}{2}\|\triangle_A^i{\Sigma_\star^i}^{1/2}\|_F^2+\frac{1}{2}\|\triangle_B^i{\Sigma_\star^i}^{1/2}\|_F^2)+\frac{\eta \delta_{2r}^i (2+\epsilon)}{2(1-\epsilon)^2}(\frac{3}{2}\|\triangle_B^i{\Sigma_\star^i}^{1/2}\|_F^2+\frac{1}{2}\|\triangle_A^i{\Sigma_\star^i}^{1/2}\|_F^2)\\
&+\frac{(2+\epsilon)(1-\eta)}{2(1-\epsilon)}\sum_{j\neq i}\|C_i^TC_j\|_2(\frac{1}{2}\|\triangle_A^j {\Sigma_\star^j}^{1/2}\|_F^2+\frac{1}{2}\|\triangle_B^j {\Sigma_\star^j}^{1/2}\|_F^2+\|\triangle_A^i{\Sigma_\star^i}^{1/2}\|_F^2)\\
&+\frac{\eta(2+\epsilon)}{2(1-\epsilon)^2}\sum_{j\neq i}\|C_i^TC_j\|_2(\frac{1}{2}\|\triangle_A^j {\Sigma_\star^j}^{1/2}\|_F^2+\frac{1}{2}\|\triangle_B^j {\Sigma_\star^j}^{1/2}\|_F^2+\|\triangle_B^i{L_\star^i}^T\|_F^2).
\end{aligned}
\end{equation}
Substituting bounds (\ref{general_bb1}), (\ref{general_bb2}), (\ref{general_bb3}), we derive bound for (\ref{general_left}) as
\begin{equation*}
\begin{aligned}
\|(A_{t+1}^iQ_t^i-A_\star^i){\Sigma_\star^i}^{1/2}\|_F^2&\leq ((1-\eta)^2+\frac{2\epsilon\eta(1-\eta)}{1-\epsilon})\|\triangle_A^i{\Sigma_\star^i}^{1/2}\|_F^2+\frac{\eta^2(2\epsilon+\epsilon^2)}{(1-\epsilon)^2}\|\triangle_B^i{\Sigma_\star^i}^{1/2}\|_F^2\\
&\qquad + \eta^2(\frac{{\delta_{2r}^i}^2(2+\epsilon)^2}{2(1-\epsilon)^2}(\|\triangle_A^i{\Sigma_\star^i}^{1/2}\|_F^2+\|\triangle_B^i{\Sigma_\star^i}^{1/2}\|_F^2)\\
&\qquad +(P-1)\sum_{j\neq i}\frac{\textcolor{black}{\|C_i^TC_j\|_2^2}(2+\epsilon)^2}{2(1-\epsilon)^2}(\|\triangle_A^j{\Sigma_\star^j}^{1/2}\|_F^2 +\|\triangle_B^j{\Sigma_\star^j}^{1/2}\|_F^2) \\
&\qquad +\sum_{j\neq i}\frac{2(1+\frac{\epsilon}{2})^2\delta_{2r}^i}{(1-\epsilon)^2}\textcolor{black}{\|C_i^TC_j\|}(\|\triangle_A^j{\Sigma_\star^j}^{1/2}\|_F+\|\triangle_B^j{\Sigma_\star^j}^{1/2}\|_F)(\|\triangle_A^i{\Sigma_\star^i}^{1/2}\|_F\\
\end{aligned}
\end{equation*}
\begin{equation*}
\begin{aligned}
&\qquad +\|\triangle_B^i{\Sigma_\star^i}^{1/2}\|_F))+2\eta(\frac{(1-\eta)\delta_{2r}^i(2+\epsilon)}{2(1-\epsilon)}(\frac{3}{2}\|\triangle_A^i{\Sigma_\star^i}^{1/2}\|_F^2+\frac{1}{2}\|\triangle_B^i{\Sigma_\star^i}^{1/2}\|_F^2)\\
&\qquad +\frac{\eta \delta_{2r}^i (2+\epsilon)}{2(1-\epsilon)^2}(\frac{3}{2}\|\triangle_B^i{\Sigma_\star^i}^{1/2}\|_F^2+\frac{1}{2}\|\triangle_A^i{\Sigma_\star^i}^{1/2}\|_F^2)\\
&\qquad +\frac{(2+\epsilon)(1-\eta)}{2(1-\epsilon)}\sum_{j\neq i}\|C_i^TC_j\|_2(\frac{1}{2}\|\triangle_A^j {\Sigma_\star^j}^{1/2}\|_F^2 +\frac{1}{2}\|\triangle_B^j {\Sigma_\star^j}^{1/2}\|_F^2+\|\triangle_A^i{\Sigma_\star^i}^{1/2}\|_F^2)\\
&\qquad +\frac{\eta(2+\epsilon)}{2(1-\epsilon)^2}\sum_{j\neq i}\|C_i^TC_j\|_2(\frac{1}{2}\|\triangle_A^j {\Sigma_\star^j}^{1/2}\|_F^2+\frac{1}{2}\|\triangle_B^j {\Sigma_\star^j}^{1/2}\|_F^2+\|\triangle_B^i{A_\star^i}^T\|_F^2)).
\end{aligned}
\end{equation*}
We similarly derive bound for $\|(B_{t+1}^i{Q_t^i}^{-T}-B_\star^i){\Sigma_\star^i}^{1/2}\|_F^2$. Since $\text{dist}^2(F_t^i,F_\star^i)=\text{tr}(\triangle_A^i\Sigma_\star^i{\triangle_A^i}^T)+\text{tr}(\triangle_B^i\Sigma_\star^i{\triangle_B^i}^T)$, we thus get
\begin{equation*}
\begin{aligned}
&\|(A_{t+1}^iQ_t^i-A_\star^i){\Sigma_\star^i}^{1/2}\|_F^2+\|(B_{t+1}^i{Q_t^i}^{-T}-B_\star^i){\Sigma_\star^i}^{1/2}\|_F^2\\
&\leq ((1-\eta)^2+\frac{2\epsilon\eta(1-\eta)}{1-\epsilon})(\|\triangle_A^i{\Sigma_\star^i}^{1/2}\|_F^2+\|\triangle_B^i{\Sigma_\star^i}^{1/2}\|_F^2)+\frac{\eta^2(2\epsilon+\epsilon^2)}{(1-\epsilon)^2}(\|\triangle_B^i{\Sigma_\star^i}^{1/2}\|_F^2+\|\triangle_A^i{\Sigma_\star^i}^{1/2}\|_F^2)\\
&\qquad + \eta^2(\frac{{\delta_{2r}^i}^2(2+\epsilon)^2}{2(1-\epsilon)^2}(2\|\triangle_A^i{\Sigma_\star^i}^{1/2}\|_F^2+2\|\triangle_B^i{\Sigma_\star^i}^{1/2}\|_F^2)+(P-1)\sum_{j\neq i}\frac{\textcolor{black}{\|C_i^TC_j\|_2^2}(2+\epsilon)^2}{2(1-\epsilon)^2}(2\|\triangle_A^j{\Sigma_\star^j}^{1/2}\|_F^2\\
&\qquad +2\|\triangle_B^j{\Sigma_\star^j}^{1/2}\|_F^2) +\sum_{j\neq i}\frac{2(1+\frac{\epsilon}{2})^2\delta_{2r}^i}{(1-\epsilon)^2}\textcolor{black}{\|C_i^TC_j\|}(2\|\triangle_A^j{\Sigma_\star^j}^{1/2}\|_F+2\|\triangle_B^j{\Sigma_\star^j}^{1/2}\|_F)(\|\triangle_A^i{\Sigma_\star^i}^{1/2}\|_F\\
&\qquad +\|\triangle_B^i{\Sigma_\star^i}^{1/2}\|_F))+2\eta(\frac{(1-\eta)\delta_{2r}^i(2+\epsilon)}{2(1-\epsilon)}(2\|\triangle_A^i{\Sigma_\star^i}^{1/2}\|_F^2+2\|\triangle_B^i{\Sigma_\star^i}^{1/2}\|_F^2)\\
&\qquad +\frac{\eta \delta_{2r}^i (2+\epsilon)}{2(1-\epsilon)^2}(2\|\triangle_B^i{\Sigma_\star^i}^{1/2}\|_F^2+2\|\triangle_A^i{\Sigma_\star^i}^{1/2}\|_F^2)+\frac{(2+\epsilon)(1-\eta)}{2(1-\epsilon)}\sum_{j\neq i}\|C_i^TC_j\|_2(\|\triangle_A^j {\Sigma_\star^j}^{1/2}\|_F^2\\
&\qquad +\|\triangle_B^j {\Sigma_\star^j}^{1/2}\|_F^2+\|\triangle_A^i{\Sigma_\star^i}^{1/2}\|_F^2+\|\triangle_B^i{\Sigma_\star^i}^{1/2}\|_F^2)+\frac{\eta(2+\epsilon)}{2(1-\epsilon)^2}\sum_{j\neq i} \|C_i^TC_j\|_2(\|\triangle_A^j {\Sigma_\star^j}^{1/2}\|_F^2+\\&\qquad \|\triangle_B^j {\Sigma_\star^j}^{1/2}\|_F^2+\|\triangle_B^i{A_\star^i}^T\|_F^2+\|\triangle_A^i{A_\star^i}^T\|_F^2))\\
&\leq ((1-\eta)^2+\frac{2\epsilon\eta(1-\eta)}{1-\epsilon}+\frac{\eta^2(2\epsilon+\epsilon^2)}{(1-\epsilon)^2}+\frac{\eta^2{\delta_{2r}^i}^2(2+\epsilon)^2}{(1-\epsilon)^2})\text{dist}^2(F_t^i,F_\star^i)\\
&\qquad+\sum_{j\neq i}\frac{(P-1)\eta^2(2+\epsilon)^2\|C_i^TC_j\|_2^2}{(1-\epsilon)^2}\text{dist}^2(F_t^j,F_\star^j)\\
&\qquad +\sum_{j\neq i}\frac{8\eta^2(1+\frac{\epsilon}{2})^2\delta_{2r}^i}{(1-\epsilon)^2}\textcolor{black}{\|C_i^TC_j\|}\text{dist}(F_t^i,F_\star^i)\text{dist}(F_t^j,F_\star^j)+\frac{2\eta(1-\eta)\delta_{2r}^i(2+\epsilon)}{(1-\epsilon)}\text{dist}^2(F_t^i,F_\star^i)\\
&\qquad +\frac{2\eta^2 \delta_{2r}^i (2+\epsilon)}{(1-\epsilon)^2}\text{dist}^2(F_t^i,F_\star^i) +\sum_{j\neq i}\frac{\eta(2+\epsilon)(1-\eta)}{1-\epsilon}\|C_i^TC_j\|_2\text{dist}^2(F_t^j,F_\star^j)\\
&\qquad +\frac{\eta(2+\epsilon)(1-\eta)\sum_{j\neq i}\textcolor{black}{\|C_i^TC_j\|_2}}{(1-\epsilon)}\text{dist}^2(F_t^i,F_\star^i) +\sum_{j\neq i}\frac{\eta^2(2+\epsilon)}{(1-\epsilon)^2}\|C_i^TC_j\|_2\text{dist}^2(F_t^j,F_\star^j)\\
&\qquad +\sum_{j\neq i}\frac{\eta^2(2+\epsilon)\|C_i^TC_j\|_2}{(1-\epsilon)^2}\text{dist}^2(F_t^i,F_\star^i)\\
&=((1-\eta)^2+\frac{2\epsilon\eta(1-\eta)}{1-\epsilon}+\frac{\eta^2(2\epsilon+\epsilon^2)}{(1-\epsilon)^2}+\frac{\eta^2{\delta_{2r}^i}^2(2+\epsilon)^2}{(1-\epsilon)^2}+\frac{2\eta(1-\eta)\delta_{2r}^i(2+\epsilon)}{(1-\epsilon)}+\frac{2\eta^2 \delta_{2r}^i (2+\epsilon)}{(1-\epsilon)^2}\\
&\qquad +\frac{\eta(2+\epsilon)(1-\eta)\sum_{j\neq i}\textcolor{black}{\|C_i^TC_j\|_2}}{(1-\epsilon)}+\frac{\eta^2(2+\epsilon)\sum_{j\neq i}\|C_i^TC_j\|_2}{(1-\epsilon)^2})\text{dist}^2(F_t^i,F_\star^i)\\
\end{aligned}
\end{equation*}
\begin{equation*}
\begin{aligned}
&\qquad +\sum_{j\neq i}(\frac{(P-1)\eta^2\textcolor{black}{\|C_i^TC_j\|^2}(2+\epsilon)^2}{(1-\epsilon)^2}+\frac{\eta(2+\epsilon)(1-\eta)\|C_i^TC_j\|_2}{(1-\epsilon)}+\frac{\eta^2(2+\epsilon)\|C_i^TC_j\|_2}{(1-\epsilon)^2})\text{dist}^2(F_t^j,F_\star^j)\\
&\qquad +\sum_{j\neq i}\frac{8\eta^2(1+\frac{\epsilon}{2})^2\delta_{2r}^i}{(1-\epsilon)^2}\|C_i^TC_j\|_2\text{dist}(F_t^i,F_\star^i)\text{dist}(F_t^j,F_\star^j)\\
& \text{Let $\sigma_{ij}$ denote $\|C_i^TC_j\|_2$,}
\\
&\leq ((1-0.6\eta)^2+\frac{7}{3}\textcolor{black}{\sum_{j\neq i}\sigma_{ij}}\eta+\frac{7}{27}\textcolor{black}{\sum_{j\neq i}\sigma_{ij}}\eta^2)\text{dist}^2(F_t^i,F_\star^i)+\sum_{j\neq i}(\frac{7}{3}\textcolor{black}{\sigma_{ij}}\eta+(\frac{7}{27}\textcolor{black}{\sigma_{ij}}+\frac{49}{9}(P-1)\textcolor{black}{\sigma_{ij}^2})\eta^2)\text{dist}^2(F_t^j,F_\star^j)\\&\qquad +\sum_{j\neq i}0.3\textcolor{black}{\sigma_{ij}}\eta^2\text{dist}(F_t^i,F_\star^i)\text{dist}(F_t^j,F_\star^j)\\
& \text{Let $\sigma=\max(\sigma_{ij})$ over all $j,$ when $\sigma,\eta\leq 1,$}\\
&\leq ((1-0.6\eta)^2+3(P-1)\textcolor{black}{\sigma}\eta)\text{dist}^2(F_t^i,F_\star^i)+\sum_{j\neq i}(3\textcolor{black}{\sigma}+6(P-1)\textcolor{black}{\sigma^2})\eta \text{dist}^2(F_t^j,F_\star^j)\\&\qquad +\sum_{j\neq i}0.3\textcolor{black}{\sigma}\eta\text{dist}(F_t^i,F_\star^i)\text{dist}(F_t^j,F_\star^j)
\end{aligned}
\end{equation*}
\newline
WLOG assume $\text{dist}^2(F_t^k,F_\star^k)\geq \text{dist}^2(F_t^j,F_\star^j)$
for any $j\neq k.$ Then when $\sigma\leq \min(1,\frac{0.12}{7P(P+1)})$,
\begin{equation*}
\begin{aligned}
&((1-0.6\eta)^2+3(P-1)\textcolor{black}{\sigma}\eta)\text{dist}^2(F_t^k,F_\star^k)+\sum_{j\neq i}(3\textcolor{black}{\sigma}+6(P-1)\textcolor{black}{\sigma^2})\eta \text{dist}^2(F_t^k,F_\star^k)+\sum_{j\neq i}0.3\textcolor{black}{\sigma}\eta\text{dist}^2(F_t^k,F_\star^k)\\&\leq (1-0.5\eta)^2\text{dist}^2(F_t^k,F_\star^k).
\end{aligned}
\end{equation*}
Therefore,
\[\max_i(\text{dist}^2(F_{t+1}^i,F_\star^i))\leq (1-0.5\eta)^2\max_i(\text{dist}^2(F_t^i,F_\star^i)).\]
\end{proof}
The proof of Theorem \ref{main_thm} is a simple combination of Lemma \ref{lemma1} and Lemma \ref{lemma2}.  We build on \cite{tong2021accelerating} for our proof. Note in \cite{jia2023preconditioning}, the authors provide a proof for global convergence for scaled GD method for least squares matrix decomposition problem. Since the proof detail there is closely tailed to the objective, we stick to the local convergence proof in \cite{tong2021accelerating} which more resembles the problem we are interested in. 
\section{SGD with Gradient Scaling}\label{sgd_sec}
\begin{algorithm*}[ht!]
\caption{Pseudocode of scaled GD in  PyTorch.}
\label{alg:scaled_gd}
\algcomment{\fontsize{7.2pt}{0em}\selectfont \texttt{pairwise}: read every two elements in a list 
}
\definecolor{codeblue}{rgb}{0.25,0.5,0.5}
\definecolor{codered}{rgb}{0.8,0.2,0.2}
\lstset{
  backgroundcolor=\color{white},
  basicstyle=\fontsize{7.2pt}{7.2pt}\ttfamily\selectfont,
  columns=fullflexible,
  breaklines=true,
  captionpos=b,
  commentstyle=\fontsize{7.2pt}{7.2pt}\color{codeblue},
  keywordstyle=\fontsize{7.2pt}{7.2pt},
  emph={{,},]}, 
emphstyle=\color{codered}, 
escapeinside={<@}{@>}
}
\begin{lstlisting}[language=python]
# group trainable parameters into LoRA pairs in train.py
<@\textcolor{codered}{for LoRA$\_$A, LoRA$\_$B in pairwise(trainable$\_$parameter):}@>
     <@\textcolor{codered}{param$\_$groups.append($\{$"params": [LoRA$\_$A,LoRA$\_$B], "lr": learning$\_$rate$\}$)}@>
     
# apply preconditioner in optimizer.py
for group in param_groups:
      A, B = group["params"]  
      dA, dB = group["params"].grad
      # precondition gradients
      <@\textcolor{codered}{dA$\_$scaled =inverse(B.T@B+delta*torch.eye(r)).mm(dA) }@> 
      <@\textcolor{codered}{dB$\_$scaled =dB.mm(inverse(A@A.T+delta*torch.eye(r))) }@> 
      # update parameters
      A.add_(dA_scaled, -group['lr']) 
      B.add_(dB_scaled, -group['lr'])
\end{lstlisting}
\end{algorithm*}

\section{More on Runtime Comparison}\label{runtime_sec}
Figure \ref{runtime_256} shows runtime comparison for fine-tuning GPT-2 model with LoRA trained with differernt optimizers, $r=256$ is adopted. Note though the runtime gap between scaled optimizers and unscaled ones increases compared to $r=4$ shown in Section \ref{runtime_section}, the increment is still marginal.
 \begin{figure*}[ht!]
 \centering
\includegraphics[width=0.7\linewidth]{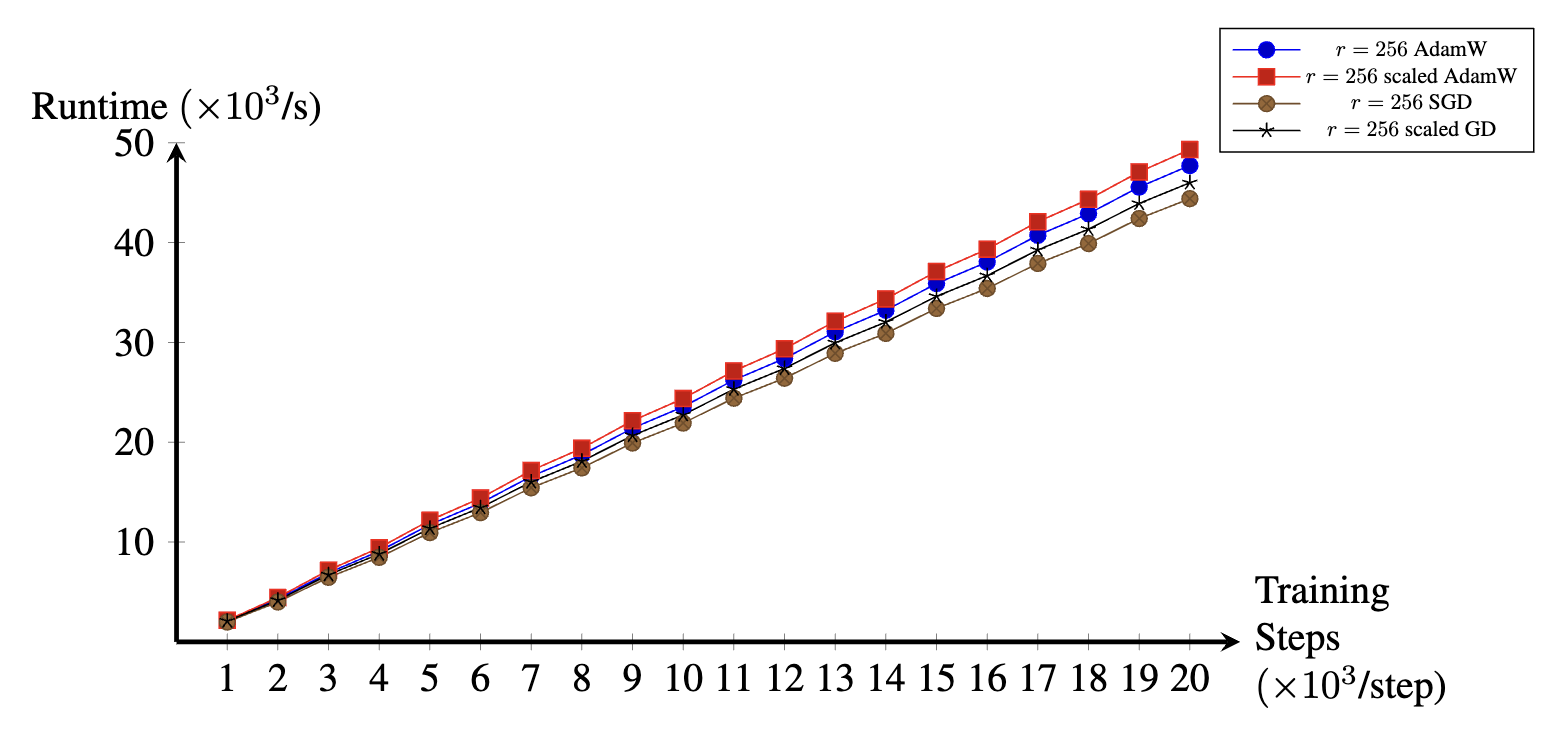}
\caption{Runtime for LoRA fine-tuning GPT-2 medium model with rank $r=256$ with different optimizers. Our scaled methods introduce marginal runtime overhead and train as fast as unscaled methods.  See Section \ref{gpt2_section} for experimental details.}\label{runtime_256}
\end{figure*}
\section{Supplementary Experiments for Language Models}\label{simulation_supp}
\subsection{GPT-2}\label{gpt2_supp}

\subsubsection{Experimental Results for Different LoRA Ranks}\label{rank_supp}
For experiments for varying LoRA ranks, we follow exact the same setting as in original LoRA project \cite{hu2021lora}. We experiment with medium-size GPT-2 \cite{Radford2019LanguageMA} model with hyperparameters listed in Table \ref{gpt2_hyper}, where the learning rates for different methods are individually tuned by grid search except for AdamW, which we follow default setting in LoRA report \cite{hu2021lora}.  We train with a linear learning
rate schedule for 5 epochs. We note that we also tune  hyperparameters $\beta_1,\beta_2$ for AdamW-type methods and find that lower $\beta_1,\beta_2$ values are beneficial to scaled AdamW method. See Table \ref{table3} for experimental results. We test with LoRA rank $r=1,4,8$ and method with scaled gradient always performs better than its non-scaled gradient counterpart for all ranks and on most evaluation metrics. We set regularization factor to be $\sigma=1e-6$.



\begin{table*}[ht!]
\centering
{
\begin{tabular}{c|cccccccccccc}
\\ \thickhline
Method   & \multicolumn{3}{c}{SGD} & \multicolumn{3}{c}{scaled GD} & \multicolumn{3}{c}{AdamW} & \multicolumn{3}{c}{scaled AdamW}\\
Rank & $1$ & $4$ & $8$\quad \vline& $1$ & $4$ & $8$\quad\vline & $1$ & $4$ & $8$ \quad\vline & $1$ & $4$ & $8$ \quad  \\
\midrule 
&\multicolumn{12}{c} {Training}\\
 \hline
 Weight decay & \multicolumn{12}{c} {0.01}\\
 Dropout Prob & \multicolumn{12}{c} {0.1}\\
 Batch Size & \multicolumn{12}{c} {8}\\
 $\#$ Epoch & \multicolumn{12}{c} {5}\\
 Warmup Steps & \multicolumn{12}{c} {500}\\
 LR Scheduler & \multicolumn{12}{c} {Linear}\\
 Label Smooth & \multicolumn{12}{c} {0.1}
\\
LR (tuned, $\times 10^{-3}$)& $60$ & \multicolumn{2}{c}{$90$} & \multicolumn{2}{c}{$20$} &  $30$   & \multicolumn{3}{c}{$0.2$} & $0.5$ & $0.8$ & $2$\\
AdamW $\beta_1$ & \multicolumn{6}{c}{$/$} & \multicolumn{3}{c}{0.9} & \multicolumn{3}{c}{0.7}\\
AdamW $\beta_2$ & \multicolumn{6}{c}{$/$} & \multicolumn{3}{c}{0.999} & \multicolumn{3}{c}{0.8}\\
LoRA $\alpha$ & \multicolumn{12}{c} {32}\\
 \midrule
 & \multicolumn{12}{c}{Inference}\\
 \hline
 Beam Size & \multicolumn{12}{c}{10}\\
 Length Penalty & \multicolumn{12}{c}{0.8}\\
 No Repeat Ngram Size & \multicolumn{12}{c}{4} \\
 \thickhline

\end{tabular}
}
\caption{Hyperparameters for GPT-2 model fine-tuning }\label{gpt2_hyper}
\end{table*}

\begin{table*}
\centering
{
\begin{tabular}{l|c|ccccc}
 \thickhline
\multirow{2}{*}{Method} & 
\multirow{2}{*}{Rank} & 
  \multicolumn{5}{c}{E2E}  \\ 
 &  &  BLEU & NIST &MET & ROUGE-L & CIDEr\\
 \hline
SGD  & 1 & 35.9 & 5.09 & 25.3 & 46.2  & 0.48 \\ 
scaled GD (ours) & 1 & 68.2 & 8.65 & 45.7 & 69.6 & 2.44\\ 
  AdamW & 1 & 69.9 & 8.80 & \textbf{46.5} & 71.4 & 2.48\\ 
  scaled AdamW (ours) & 1 & \textbf{70.1} & \textbf{8.82} &\textbf{46.5} & \textbf{71.7} & \textbf{2.51}\\ 
  \hline
SGD  & 4 &  66.6 & 8.54 & 44.2 & 68.2 & 2.32\\ 
scaled GD (ours) & 4 & 69.2 & 8.71 & 46.3 & 70.9 & 2.48 \\ 
  AdamW & 4 & 68.9 & 8.69 & 46.5 & 71.3 & 2.51\\ 
  scaled AdamW (ours) & 4 &  \textbf{69.6} & \textbf{8.77} & \textbf{46.6} & \textbf{71.8} & \textbf{2.52}\\ 
  \hline
  SGD  & 8 & 65.8 & 8.46 & 43.5 & 68.7 & 2.33 \\ 
scaled GD (ours) & 8 & 69.6 & 8.78 & 46.4 & 70.8 & 2.48 \\ 
  AdamW & 8 & 69.6 & 8.74 & \textbf{46.7} & \textbf{71.8} & \textbf{2.53}\\ 
  scaled AdamW (ours) & 8 & \textbf{70.1} & \textbf{8.82} & 46.6 & \textbf{71.8} & \textbf{2.53} \\ 
  \thickhline
\end{tabular}
}
\caption{Experiments for GPT-2 medium model on E2E NLG challenge with different LoRA ranks. Our scaled optimizers outperform unscaled optimizers for all LoRA ranks being tested and on most evaluation metrics. Moreover, scaled GD method behaves close to AdamW method. See Appendix \ref{rank_supp} for experimental details.}\label{table3}
\end{table*}

\subsubsection{Experimental Results for Different Model Sizes}\label{size_supp}
For GPT-2 model, we also experiment with different model sizes. LoRA rank $r$ is fixed to be $4$. We use the same training hyperparameters as listed in Table  \ref{gpt2_hyper}. See Table \ref{model_size} for final scores. Note our scaled gradient methods always outperform their unscaled gradient counterparts for different model sizes and on most evaluation metrics, which shows the superiority of the introduced preconditioner. Furthermore, there is usually significant performance gap between SGD method and AdamW method while our scaled GD method is able to obtain scores comparable to AdamW without requiring momentum terms. Our scaled GD method indeed closes the gap between SGD and AdamW. 


\begin{table*}[ht!]
\centering
{
\noindent\makebox[\textwidth]{\begin{tabularx}{0.93\textwidth}{l|c|c|ccccc}
\thickhline
\multirow{2}{*}{Method} & 
\multirow{2}{*}{Model} & 
\multirow{2}{*}{$\#$ Trainable Parameters} &
  \multicolumn{5}{c}{E2E}  \\ 
& &  & BLEU & NIST &MET & ROUGE-L & CIDEr\\
  \hline
SGD  & GPT-2 S & 0.15M & 54.8 & 4.56 & 34.0 & 63.3 & 1.29  \\ 
scaled GD (ours) & GPT-2 S & 0.15M & 68.5 & 8.72 & 45.5 & 69.4 & 2.40 \\ 
  AdamW & GPT-2 S & 0.15M &  69.1 & 8.75 & 46.0 & 70.5 & 2.47\\ 
  scaled AdamW (ours) & GPT-2 S & 0.15M & \textbf{69.5} & \textbf{8.80} & \textbf{46.2} & \textbf{70.9} & \textbf{2.48}\\ 
  \hline
SGD  & GPT-2 M & 0.39M & 66.6 & 8.54 & 44.2 & 68.2 & 2.32 \\ 
scaled GD (ours) & GPT-2 M & 0.39M & 69.2 & 8.71 & 46.3 & 70.9 & 2.48 \\ 
  AdamW & GPT-2 M & 0.39M & 68.9 & 8.69 & 46.5 & 71.3 & 2.51\\ 
  scaled AdamW (ours) & GPT-2 M & 0.39M & \textbf{69.6} & \textbf{8.77} & \textbf{46.6} & \textbf{71.8} & \textbf{2.52}\\ 
\thickhline
\end{tabularx}}
}
\caption{Experiments for GPT-2 models with different sizes with LoRA rank $r=4$ on E2E NLG challenge. Our scaled optimizers behave better than unscaled ones and scaled GD obtains scores comparable to AdamW. See Appendix \ref{size_supp} for experimental details. }\label{model_size}
\end{table*}
\subsubsection{Experimental Results for  Different Datasets}\label{dataset_supp}
We experiment with also WebNLG \cite{gardent-etal-2017-webnlg} and DART \cite{nan-etal-2021-dart} datasets with GPT-2 medium-size model with LoRA rank $r=4$. We use exactly the same training hyperparameters as listed in Table \ref{gpt2_hyper}. WebNLG is a popular dataset for data-to-text evaluation introduced by \cite{gardent-etal-2017-webnlg} which includes 22K examples from 14 distinct categories. Among these categories, five categories are presented only at test time and thus the evaluation is divided into ``seen" (S), ``unseen" (U), ``all" (A) three types depending on whether the five categories are included in test time or not. DART is another data-to-text dataset introduced by \cite{nan-etal-2021-dart} which involves 82K examples. The evaluation metrics being used are BLEU, MET, and TER with higher scores being better for first two metrics and the lower the better for TER. The experimental result is presented in Table \ref{dataset_exp} and we observe that scaled GD significantly improves SGD performance for both datasets on all evaluation metrics, and so does scaled AdamW for AdamW.

\begin{table*}[ht!]
\centering
{
\noindent\makebox[\textwidth]{\begin{tabularx}{0.93\textwidth}{c|ccc|ccccccccc}
\thickhline
\multirow{3}{*}{Method} & \multicolumn{3}{c|}{DART} & \multicolumn{9}{c}{WebNLG}\\
& \multirow{2}{*}{BLEU$\uparrow$} & \multirow{2}{*}{MET$\uparrow$} & \multirow{2}{*}{TER$\downarrow$} & \multicolumn{3}{c}{BLEU$\uparrow$} & \multicolumn{3}{c}{MET$\uparrow$} & \multicolumn{3}{c}{TER$\downarrow$}\\
&&&& U & S & A & U & S & A & U & S & A  
\\ 
  \hline
SGD  & 43.2  & .36  & .50  & 45.5 & 58.1 & 52.4 & .36 & .42 & .39 &  \textbf{.45} & .36 & .40 \\ 
scaled GD (ours) & \textcolor{black}{46.1} & \textcolor{black}{.38} & \textcolor{black}{.48} & 46.3 & 61.7  & 54.8 & .37 & .44 &  .41 & \textbf{.45} & .34 &  .39 \\ 
  AdamW & 47.1  &  .38 & \textbf{.47}  & 45.0 & 64.1 &  55.5 &\textbf{.38}  & \textbf{.45} &  \textbf{.42} & .47 & \textbf{.32} &  .39 \\ 
  scaled AdamW (ours) & \textbf{47.9} & \textbf{.39} & \textbf{.47}  &\textbf{46.8} & \textbf{64.2} & \textbf{56.3} &\textbf{.38} & \textbf{.45} &  \textbf{.42} & .46 & \textbf{.32} &  \textbf{.38}  \\ 
   \thickhline
\end{tabularx}}
}
\caption{Experiments for GPT-2 medium model on NLG challenge with DART dataset and WebNLG dataset. Our scaled optimizers improve unscaled optimizers uniformly on all evaluation metrics for both datasets. See Appendix \ref{dataset_supp} for experiment, datasets, and metric details. }\label{dataset_exp}
\end{table*}

\subsection{Mistral 7B}\label{mistral_append_sec}

Mistral 7B is a pretty new model up-to-date and there is no well-established code base we can follow. For quicker training, we use $4$-bit quantized version of Mistral 7B V0.1 as our base model and LoRA factors are injected to each linear layer with rank $r=16.$ We train for $5$ total epochs with batch size $8$. For fine-tuning 4-bit quantized Mistral 7B V0.1 model for GLUE benchmark, we exploit HuggingFace transformers trainer class. All training arguments are default there except for the batch size, training epoch, and optimizer-related settings which are customized for our experiments. See Table \ref{misrtal_train} for training hyperparameter choices.  The $\beta$'s and $\epsilon$ for AdamW-type methods are defaultly used in original LoRA project \cite{hu2021lora}. For learning rate choices,  we view that values ranging from $2e-5$ to $2e-4$ have been empirically used for fine-tuning Mistral 7B for other tasks. We follow \cite{Mistral_Fine_Tuning} and set $lr=5e-5$ for AdamW-type methods. For larger datasets including mnli, \textcolor{black}{qnli}, and \textcolor{black}{qqp}, $lr=5e-5$ results in  NaN loss thus we tune learning rate individually with grid search for each method. Since SGD has never been used for training Mistral 7B, we empirically find $lr=5e-3$ to be a reasonable learning rate. For larger datasets including mnli, \textcolor{black}{qnli}, and qqp, we still tune the learning rate with grid search. Learning rate scheduler, warmup steps, warmup ratios, and max grad norm are all default in HuggingFace trainer class. Weight decay value $0.01$ is what has been used in original LoRA project for GPT-2 fine-tuning.   We use the same LoRA-related parameters and model quantization configuration as in \cite{Mistral_Fine_Tuning}. 


\begin{table*}[ht!]
\centering
{
\begin{tabular}{c|cccc}
 \thickhline
Method & SGD & scaled GD & AdamW & scaled AdamW \\
 \hline
 
train batch size & \multicolumn{4}{c}{$8$} \\
seed (default) & \multicolumn{4}{c}{$42$}\\
AdamW $(\beta_1,\beta_2)$ & \multicolumn{2}{c}{$/$} & \multicolumn{2}{c}{$(0.9,0.999)$} \\
AdamW $\epsilon$ & \multicolumn{2}{c}{$/$} & \multicolumn{2}{c}{$1e-6$} \\
lr & \multicolumn{2}{c}{$5e-3$} & \multicolumn{2}{c}{$5e-5$} \\
lr (mnli) & $5e-3$ & $1e-3$ & $5e-6$ & $3e-5$ \\
lr (qqp) & $5e-3$ & $4e-3$ &  $5e-6$ & $3e-5$ \\
\textcolor{black}{lr (qnli)} & $5e-3$ & $9e-4$& \multicolumn{2}{c}{$1e-5$} \\
lr scheduler & \multicolumn{4}{c}{linear} \\
num epoch  & \multicolumn{4}{c}{$5$}  \\
warmup steps $\&$ warmup ratios & \multicolumn{4}{c}{$0$} \\
weight decay & \multicolumn{4}{c}{$0.01$} \\
max grad norm & \multicolumn{4}{c}{$1$} \\
LoRA rank & \multicolumn{4}{c}{$16$} \\
LoRA $\alpha$ & \multicolumn{4}{c}{$16$} \\
LoRA dropout  & \multicolumn{4}{c}{$0.05$} \\
Load in 4-bit & \multicolumn{4}{c}{True}\\
4bit quantization type & \multicolumn{4}{c}{nf4} \\
4bit dtype & \multicolumn{4}{c}{bfloat$16$}\\
  \thickhline
\end{tabular}
}
\caption{Hyperparameters for Mistral 7B model fine-tuning }\label{misrtal_train}
\end{table*}
\section{Supplementary Experiments for Diffusion Models}\label{diffusion_supp}
\subsection{Stable Diffusion}\label{stable_diffusion_append}
For our object generation experiment, we follow the popular custom diffusion repository \cite{Ryu2023}. We follow all default settings for both training and inference steps except for the optimizer component. We use ``a photo of $\langle V_{\text{object}} \rangle$" as all object image captions. In the original repository, AdamW is used as default optimizer with learning rate $5e-5$ for text-encoder tuning and $1e-4$ for U-Net tuning. The pretrained model being used is Stable Diffusion V1.5 \cite{Rombach_2022_CVPR}. For training procedure, we use constant learning rate scheduler with zero learning rate warmup steps, which is the default setting. Max training step is set to $4000$. For sampling procedure, we set number of inference steps to be $50$ and guidance scale to be $7$, which is used as default in the experiment notebook provided. Figure \ref{table_main} shows generation results for a yellow chair with training images containing the target chair in color blue. 
 AdamW is able to generate the target yellow chair only for learning rate $1e-6$ while our method generates desired images for all learning rates being tested.   Figure  \ref{dog_append} shows generation results for  dog object. Our method generates images better capturing the prompt, i.e, a dog wearing a hat. For large learning rate $1e-2$, AdamW only generates black images while no black images are observed for scaled AdamW generation, which again verifies the robustness of our  scaled optimizers.

\begin{figure*}[h]
 \centering
\includegraphics[width=1.0\linewidth]{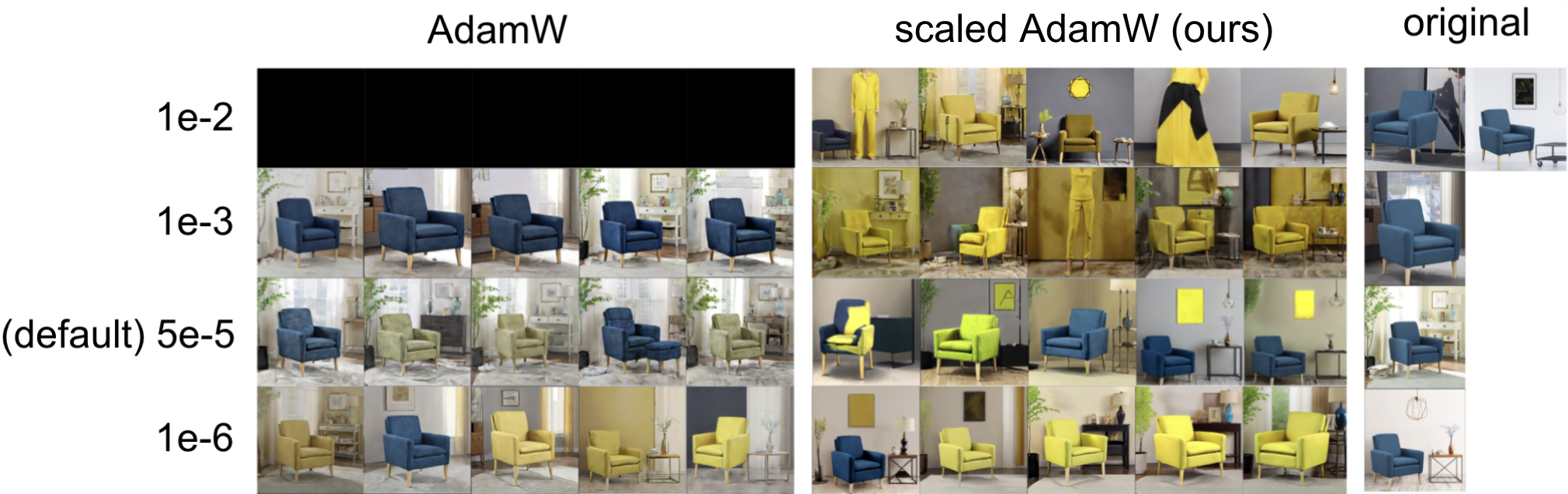}
\caption{
Generation results for prompt ``a yellow $\langle V_{\text{chair}} \rangle$'' after fine-tuning on $5$ blue chair images  of the Stable Diffusion V1.5 model. We vary text-encoder learning rates with U-Net learning rate fixed to default value $1e-4.$ No black images are observed for our method's generation and AdamW generates only black images for large learning rates. Our method (scaled AdamW) generates photos better capturing the prompt and is more robust to learning rate changes. See Appendix \ref{stable_diffusion_append} for experimental details.
}\label{table_main}
\end{figure*}

\begin{figure*}[h]
 \centering
\includegraphics[width=0.8\linewidth]{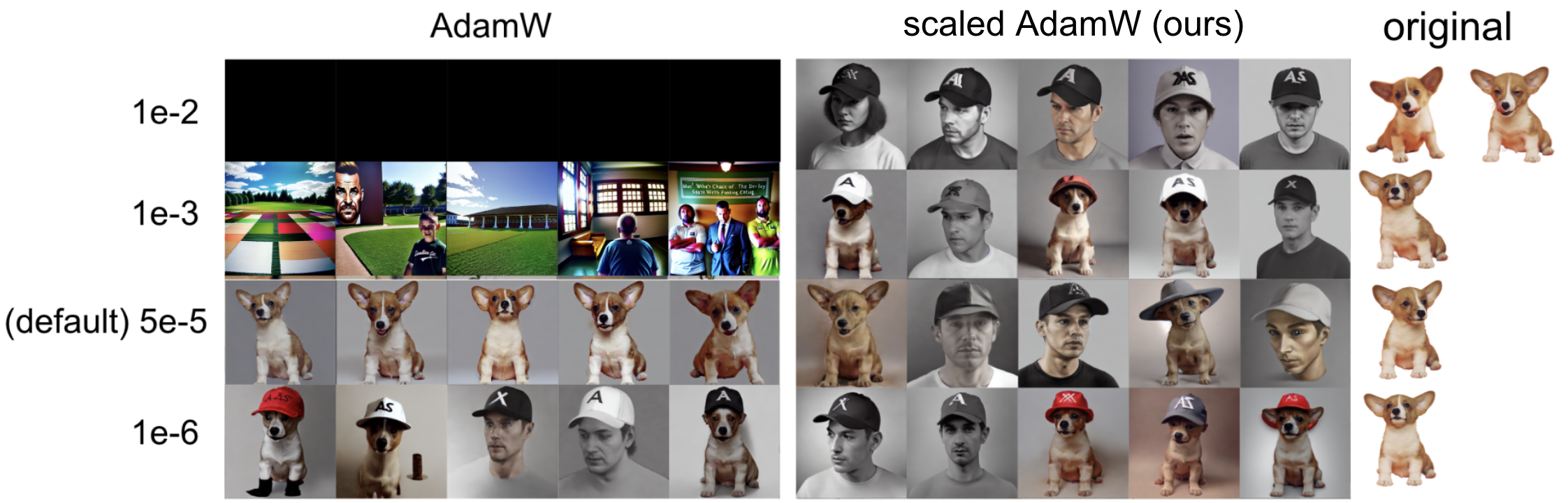}
\caption{Generation results for prompt ``$\langle V_{\text{dog}} \rangle$ wearing a hat'' after fine-tuning on $5$ dog images of the Stable Diffusion V1.5 model. See Appendix \ref{stable_diffusion_append} for experimental details. Our method (scaled AdamW) generates images better capturing the prompt, i.e., a dog wearing a hat, and is 
 more robust to learning rate changes.}\label{dog_append}
 \vspace{-0.5cm}
\end{figure*}

\subsection{Mix-of-Show}\label{mixshow_append}
For our experiment for face generation tasks, we base our experiments on Mix-of-Show \cite{gu2023mixofshow} repository which we find able to generate high-quality face images and is better for visualization comparison between different optimization methods we consider. We follow default settings for training and inference in \cite{gu2023mixofshow} with the exception that we turn off embedding tuning and only tune the text encoder and U-Net fraction where LoRA parameters are injected. The reason is that we find with embedding tuning, the effect of LoRA parameter is restricted and thus does no good to our comparison. In Mix-of-Show, Chilloutmix\footnote{\url{https://civitai.com/models/6424/chilloutmix}} is used as  pretrained model. LoRA rank is  set to $4$. DMP-Solver \cite{lu2022dpmsolver} is employed for sampling. See \cite{gu2023mixofshow} for more discussion on experimental details. Here we first fix step size $5e-4$ for both text encoder tuning and U-Net tuning and compare AdamW versus scaled AdamW with different LoRA parameter fusion coefficients. The default step size value used for Mix-of-Show is $1e-5$ and $1e-4$ for text encoder tuning and U-Net tuning respectively and default optimizer is AdamW. See Section \ref{gen_quality} for experimental results for different LoRA parameter fusion coefficients. Our scaled AdamW optimizer generates visually better images compared to AdamW optimizer. We then test with varying step size settings and demonstrate that our scaled gradient method is more robust to step size changes in Section \ref{lr_compare}.
\subsubsection{AdamW vs. scaled AdamW with Varying LoRA Parameter Fusion Coefficients}\label{gen_quality}
We experiment with Potter character and Hermione character. For Potter character,  we use 14 Potter images for training LoRA parameters with the character name replace by special token $\langle V_{\text{potter}} \rangle$ as what has been introduced in textual inversion \cite{gal2022image}, then in the sampling procedure, we use prompt with special character $\langle V_{\text{potter}} \rangle$ for generating images involving Potter character. Figure \ref{quality_1}, \ref{quality_2} and \ref{quality_3} show the generation results for three different prompts. The above two rows are for AdamW optimizer with the first row having LoRA parameter fusion coefficient $0.7$ and second row $1$ and the third and fourth rows correspond to scaled AdamW generation. LoRA parameter fusion coefficient represents $\alpha$ in $W=W_0+\alpha AB^T$ when merging LoRA weights. We observe that our scaled AdamW method is able to generate higher quality images compared to unscaled version for both $\alpha=0.7$ and $\alpha=1$. For Hermione character, we train with $15$ photos of Hermione following procedure described before. Figure \ref{hermoine_1} and \ref{hermoine_2} show the generation results. Still, our scaled AdamW optimizer produces higher quality images compared to AdamW optimizer.
 \begin{figure*}[h]
 \centering
\includegraphics[width=0.7\linewidth]{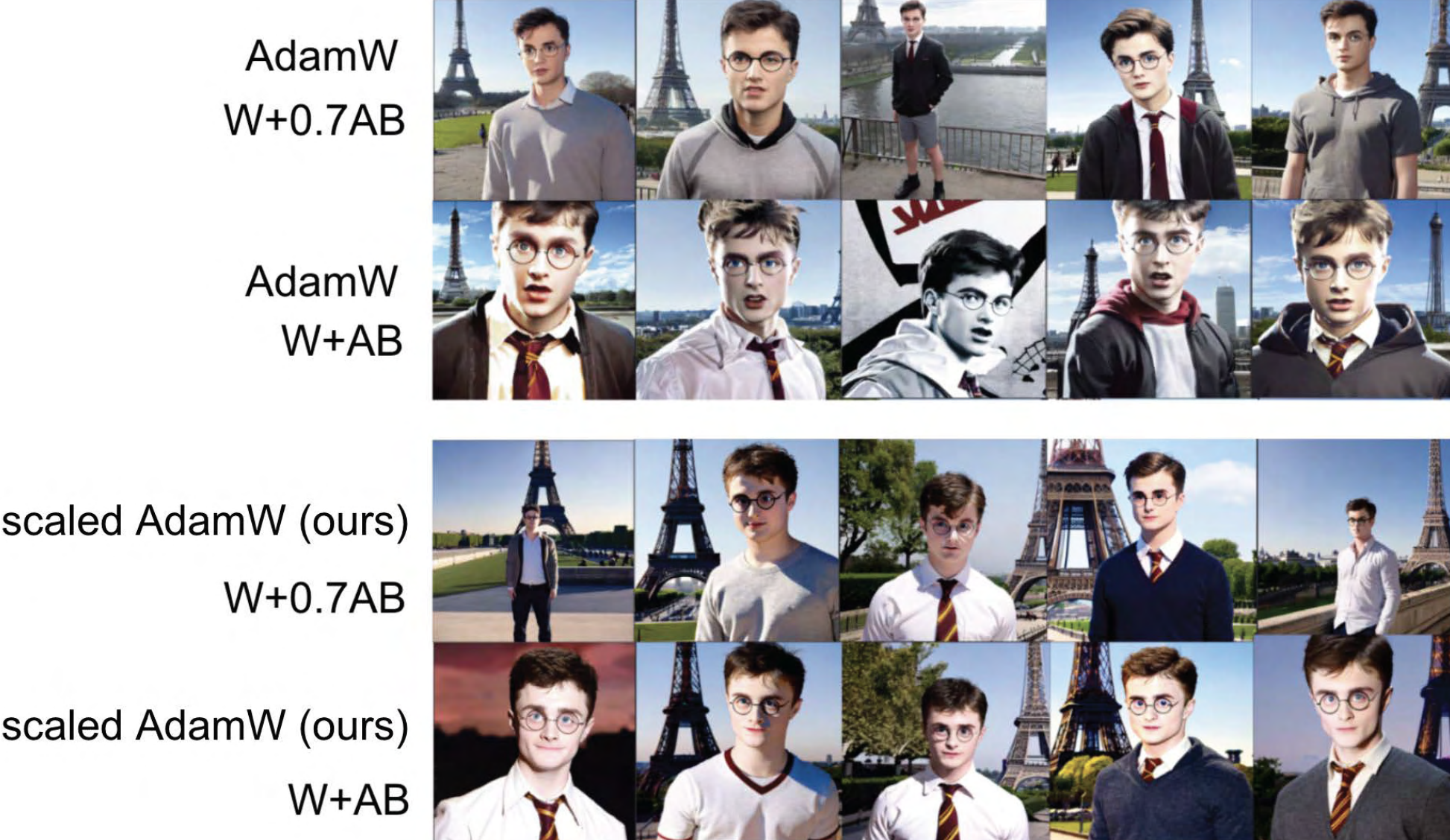}
\caption{Generation results for prompt ``a $\langle V_{\text{potter}} \rangle$ in front of eiffel tower'' after fine-tuning on $14$ Potter images. The above two rows are from AdamW optimizer and the bottom two rows are from our  scaled AdamW optimizer. The first and third rows correspond to LoRA parameter fusion coefficient $0.7$ and the second and fourth rows correspond to LoRA parameter fusion coefficient $1.0$. Our scaled AdamW method generates images of higher quality compared to AdamW. See Appendix \ref{gen_quality} for experimental details.}\label{quality_1}
\vspace{-0.2cm}
\end{figure*}
\begin{figure*}[ht!]
 \centering
\includegraphics[width=0.7\linewidth]{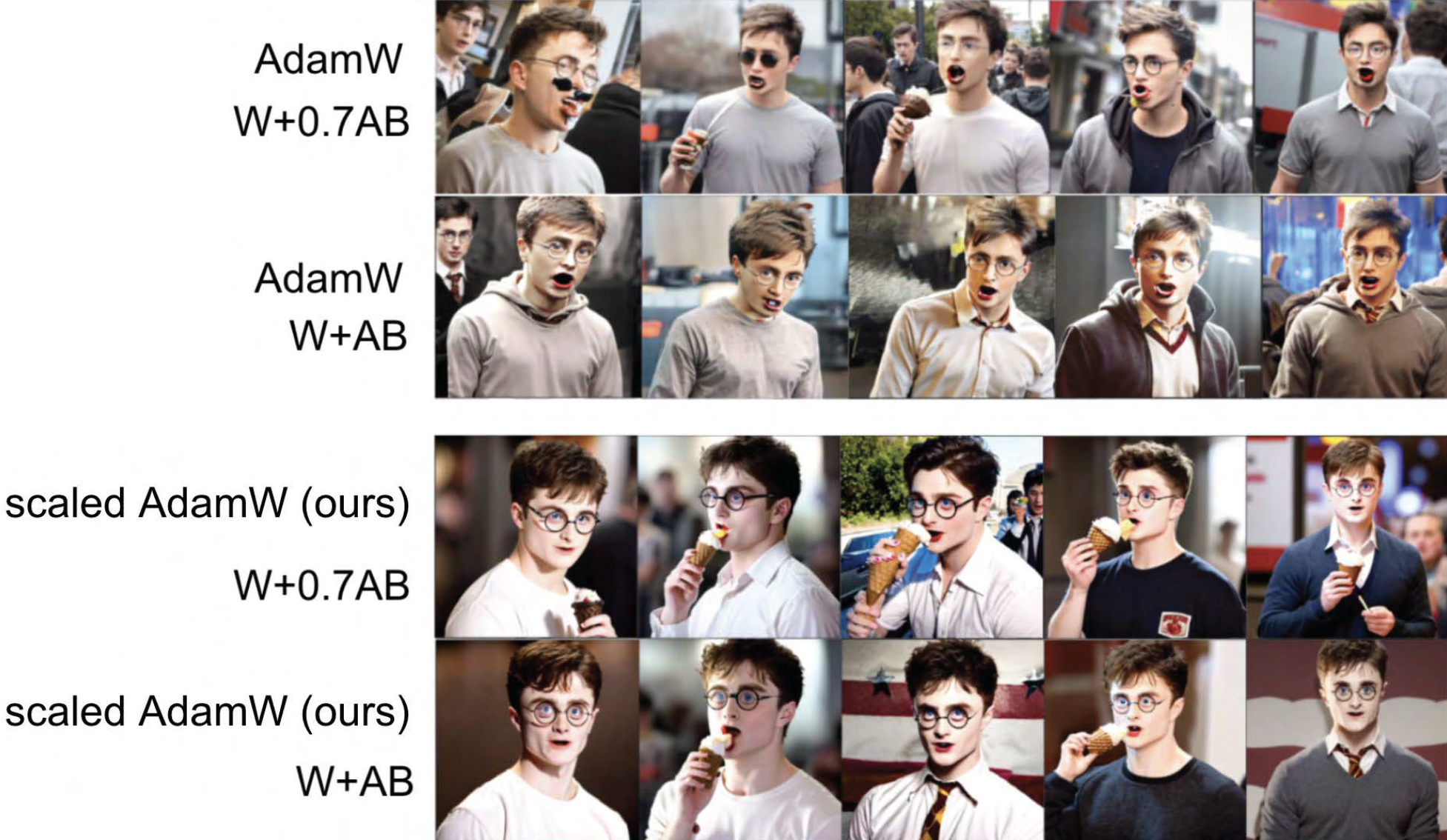}
\caption{Generation results for prompt ``$\langle V_{\text{potter}} \rangle $ eating an icecream '' after fine-tuning on $14$ Potter images. The above two rows are from AdamW optimizer and the bottom two rows are from our scaled AdamW optimizer. The first and third rows correspond to LoRA parameter fusion coefficient $0.7$ and the second and fourth rows correspond to LoRA parameter fusion coefficient $1.0$. Our scaled AdamW method generates images of higher quality compared to AdamW. See Appendix \ref{gen_quality} for experimental details.}\label{quality_2}
\end{figure*}
\begin{figure*}[ht!]
 \centering
\includegraphics[width=0.7\linewidth]{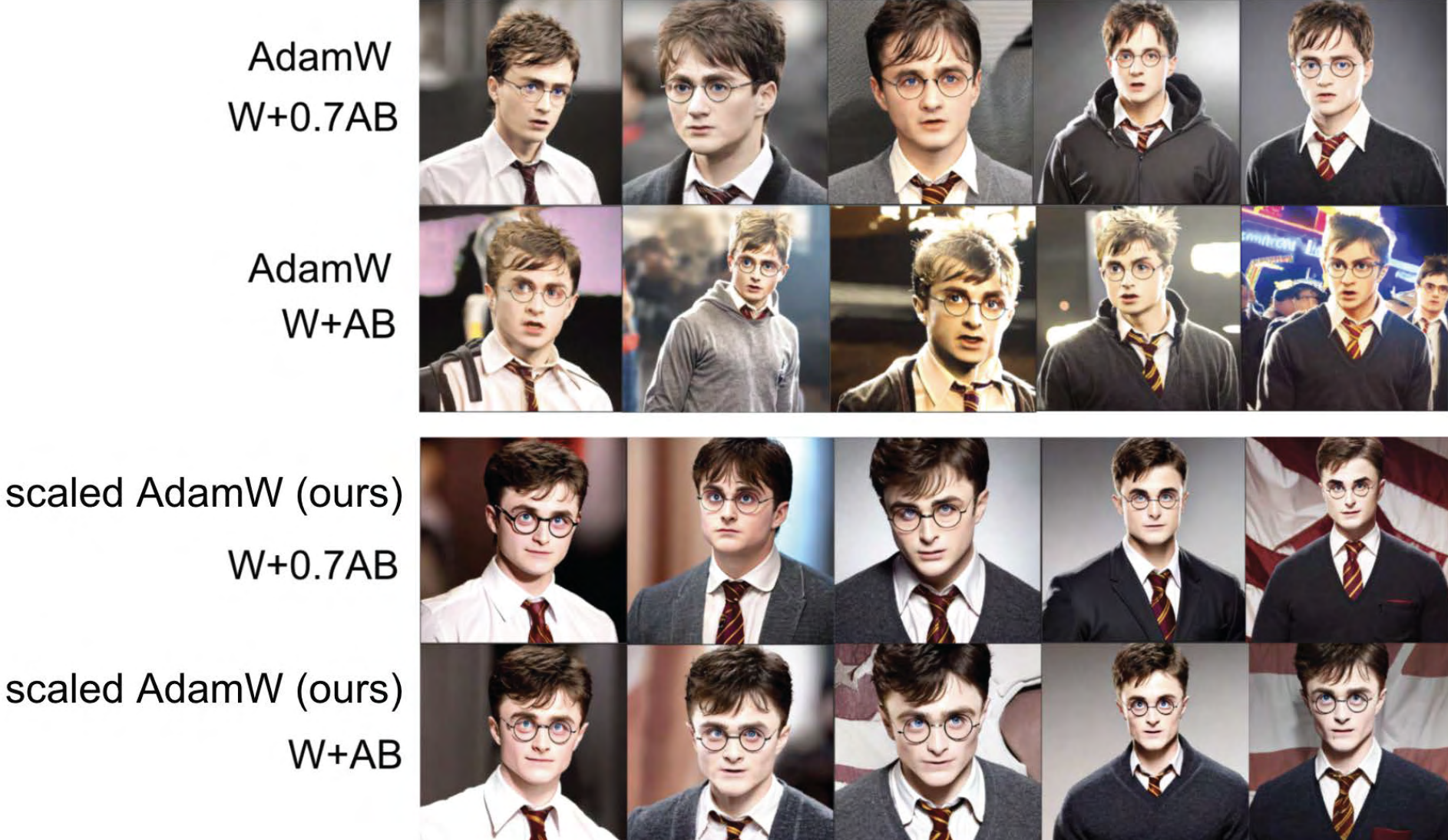}
\caption{ Generation results for prompt ``$\langle V_{\text{potter}} \rangle$'' after fine-tuning on $14$ Potter images. The above two rows are from AdamW optimizer and the bottom two rows are from our  scaled AdamW optimizer. The first and third rows correspond to LoRA parameter fusion coefficient $0.7$ and the second and fourth rows correspond to LoRA parameter fusion coefficient $1.0$. Our scaled AdamW method generates images of higher quality compared to AdamW. See Appendix \ref{gen_quality} for experimental details.}\label{quality_3}
\end{figure*}

\begin{figure*}[h]
 \centering
\includegraphics[width=0.7\linewidth]{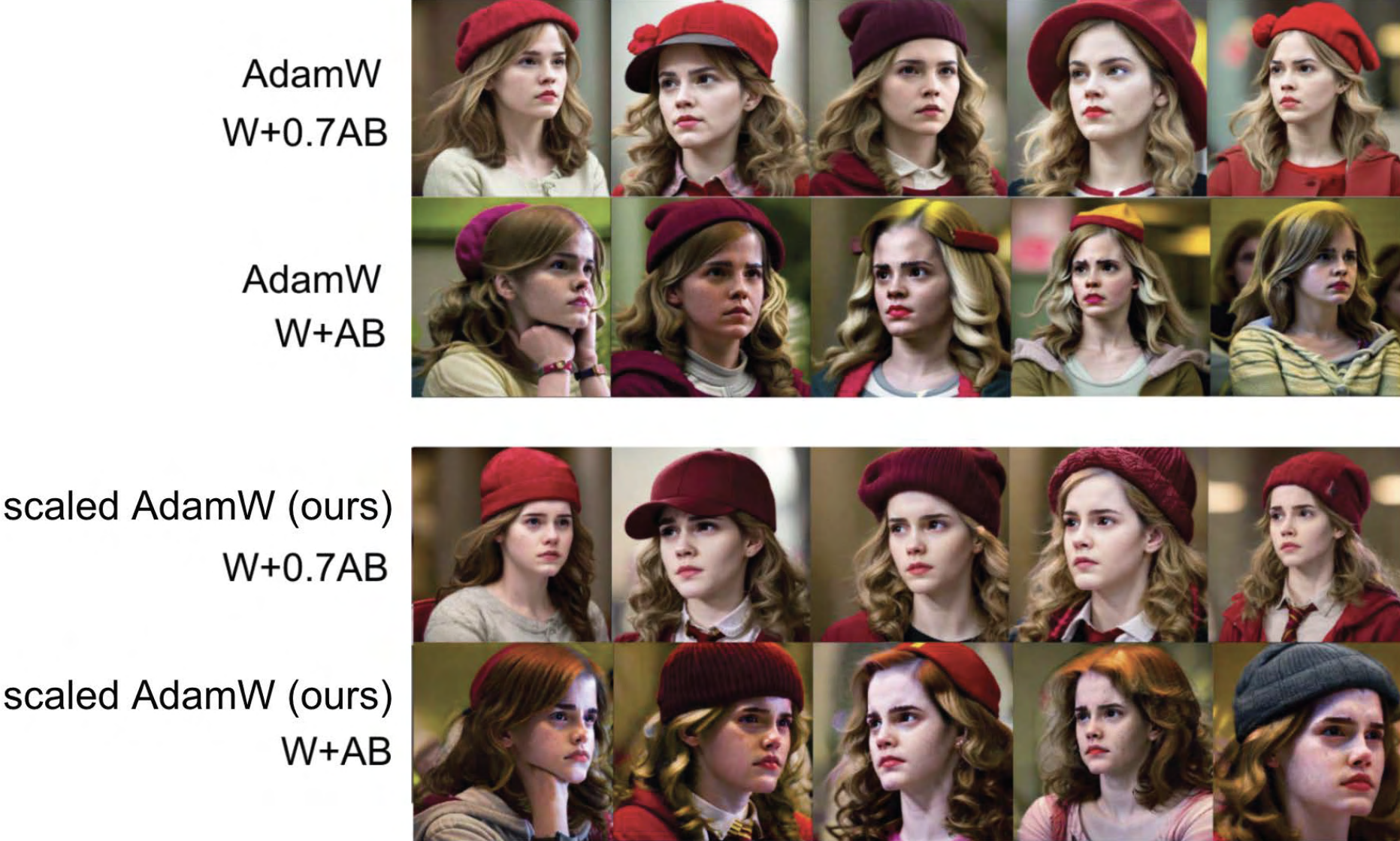}
\caption{Generation results for prompt 
 ``$\langle V_{\text{hermione}} \rangle$ wearing a red hat'' after fine-tuning on $15$ Hermione images. The above two rows are from AdamW optimizer and the bottom two rows are from our scaled AdamW optimizer. The first and third rows correspond to LoRA parameter fusion coefficient $0.7$ and the second and fourth rows correspond to LoRA parameter fusion coefficient $1.0$. Our scaled AdamW method generates images of higher quality compared to AdamW. See Appendix \ref{gen_quality} for experimental details.}\label{hermoine_1}
\end{figure*}

\begin{figure*}[h]
 \centering
\includegraphics[width=0.7\linewidth]{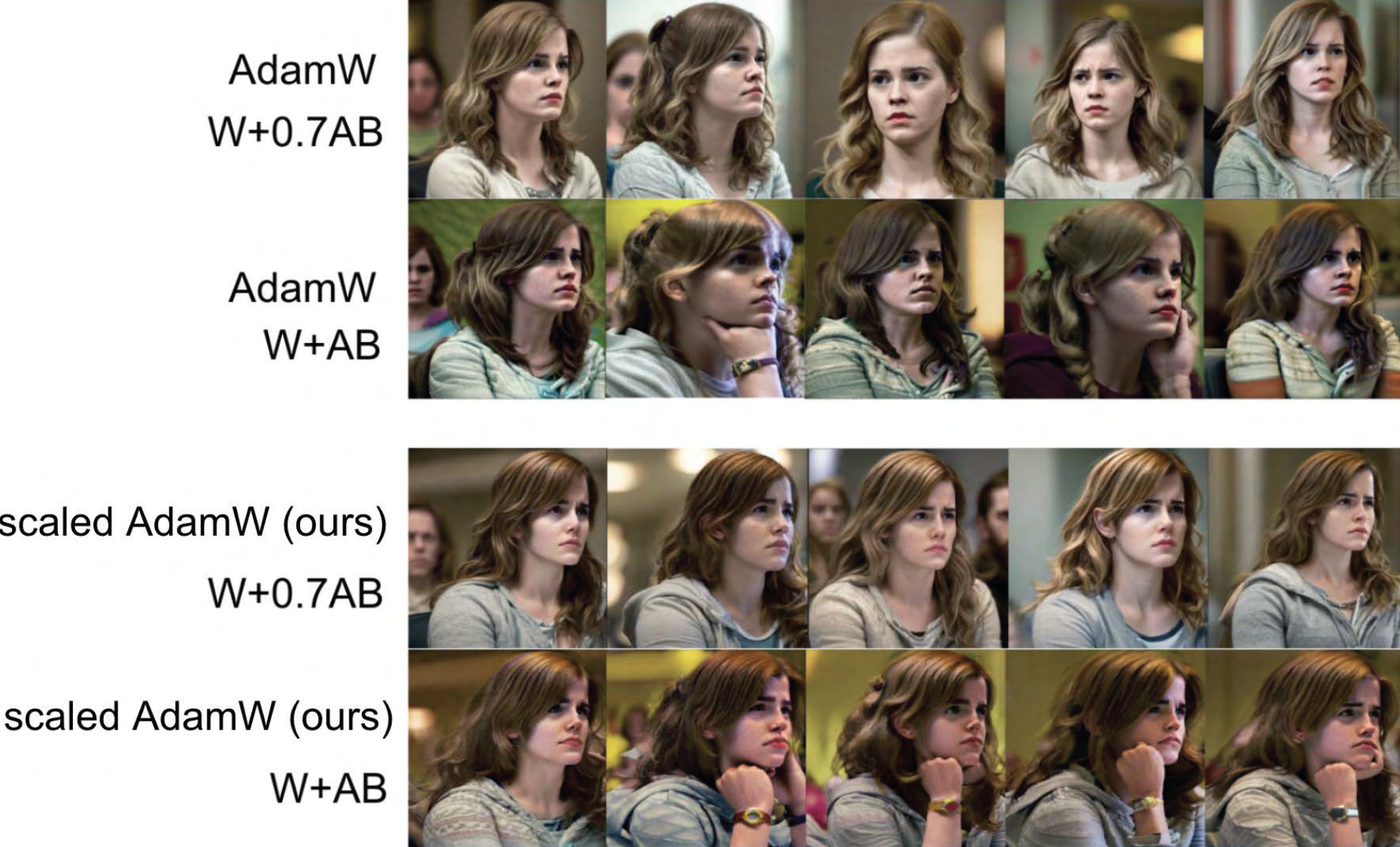}
\caption{Generation results for prompt ``$\langle V_{\text{hermione}} \rangle$'' after fine-tuning on $15$ Hermione images. The above two rows are from AdamW optimizer and the bottom two rows are from our scaled AdamW optimizer. The first and third rows correspond to LoRA parameter fusion coefficient $0.7$ and the second and fourth rows correspond to LoRA parameter fusion coefficient $1.0$. Our scaled AdamW method generates images of higher quality compared to AdamW. See Appendix \ref{gen_quality} for experimental details.}\label{hermoine_2}
\end{figure*}
\subsubsection{Varying Step Sizes}\label{lr_compare}
Here we test with varying step sizes. Note the Mix-of-Show repository uses AdamW as default optimizer with learning rate $1e-5$ for text-encoder tuning and $1e-4$ for U-Net tuning. SGD method is not used in original repository. We emprically observe that SGD requires larger learning rate compared to AdamW to generate sensible images. For this experiment, we test with three groups of learning rates, with ``Large" corresponds to $3e-1$ for SGD-type methods and $5e-4$ for AdamW-type methods; ``Medium" corresponds to $1e-1$ for SGD-type methods and $1e-4$ for AdamW-type methods; ``Small" corresponds to $5e-2$ for SGD-type methods and $1e-5$ for AdamW-type methods. We don't differentiate learning rates for U-Net and text-encoder tuning and the same learning rate is used for both. Note the default learning rate for AdamW falls between the ``Medium" learning rate and the ``Small" learning rate thus our learning rate choices are not random. Figure \ref{pencil_lr}, \ref{sit_lr} and \ref{diffusion_main2} show the generation results for three different prompts. It can be observed that our scaled optimizers are more robust to learning rate changes.
\begin{figure*}[h]
 \centering
\includegraphics[width=0.9\linewidth]{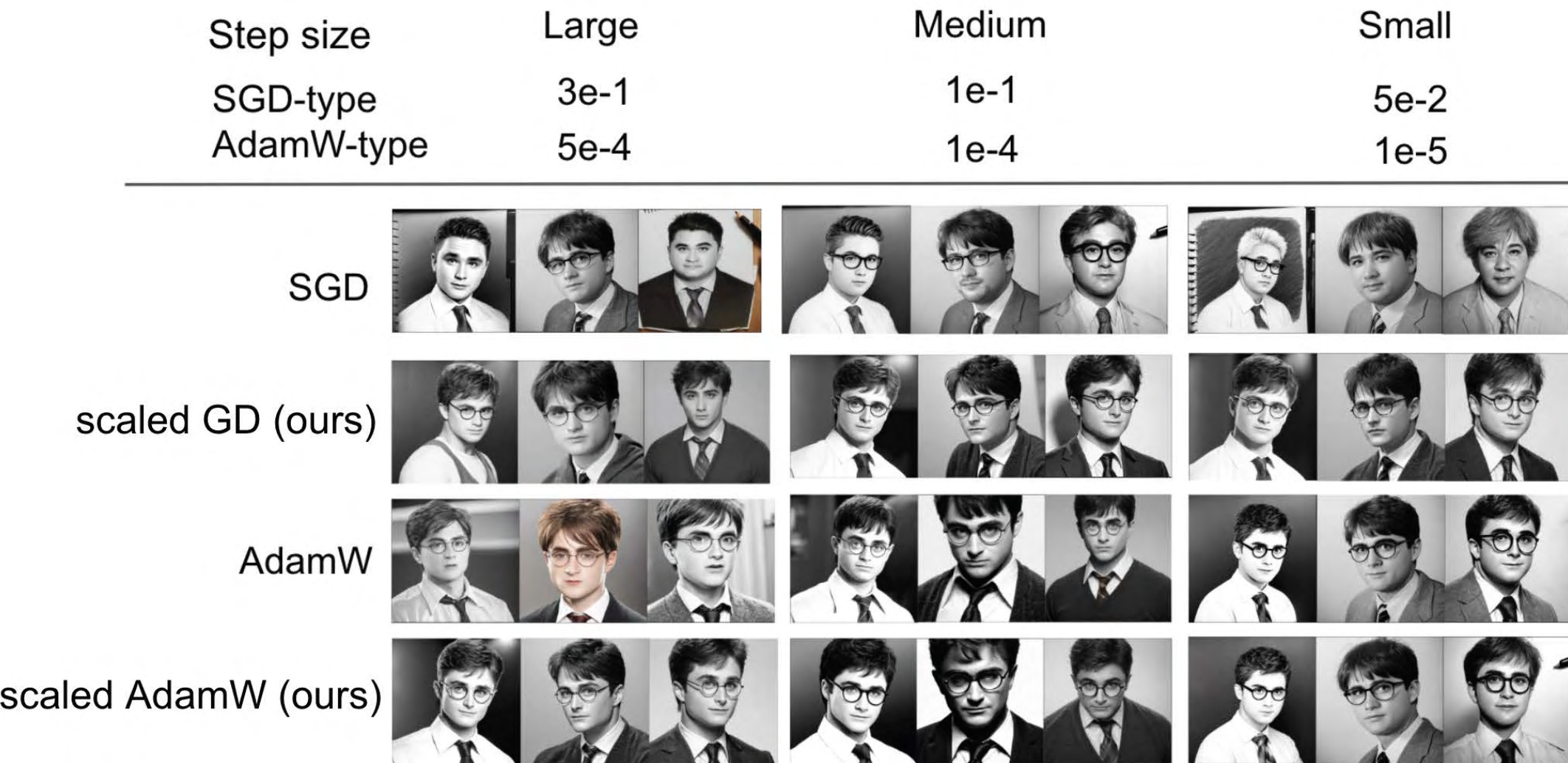}
\caption{Generation results for prompt ``a pencil sketch of $\langle V_{\text{potter}} \rangle$'' with different optimizers and different learning rates. See Appendix \ref{lr_compare} for experimental details. Default optimizer is AdamW with default learning rate $1e-4$ for U-Net tuning and $1e-5$ for text-encoder tuning. Our scaled optimizers generate better quality images and are more robust to learning rate changes.}\label{pencil_lr}
\end{figure*}

\begin{figure*}[h]
 \centering
\includegraphics[width=0.9\linewidth]{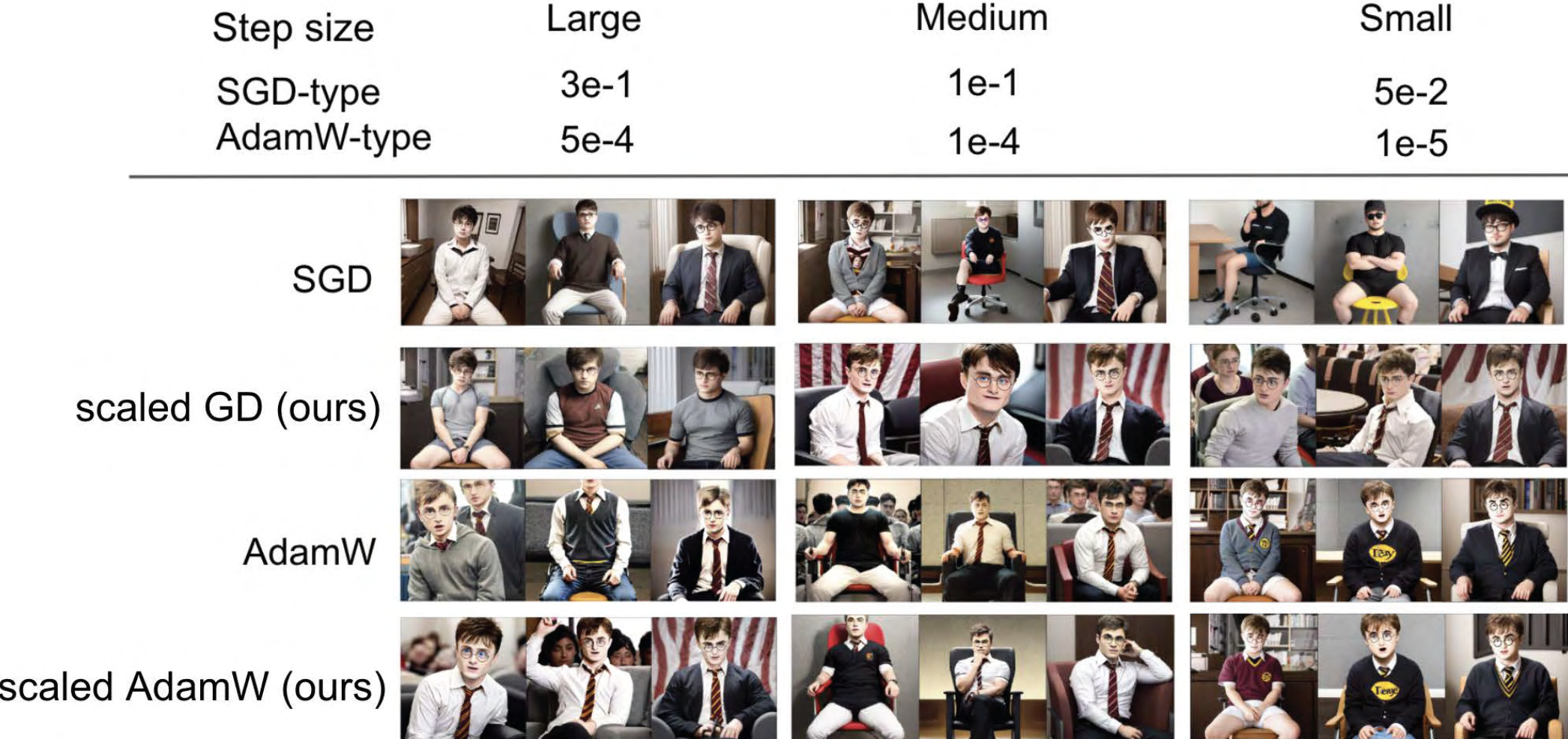}
\caption{Generation results for prompt ``$\langle V_{\text{potter}} \rangle$ sit on the chair'' with different optimizers and different learning rates. See Appendix \ref{lr_compare} for experimental details. Default optimizer is AdamW with default learning rate $1e-4$ for U-Net tuning and $1e-5$ for text-encoder tuning. Our scaled optimizers  generate better quality images and are more robust to learning rate changes.}\label{sit_lr}
\end{figure*}

 \begin{figure*}[h]
 \centering
\includegraphics[width=0.9\linewidth]{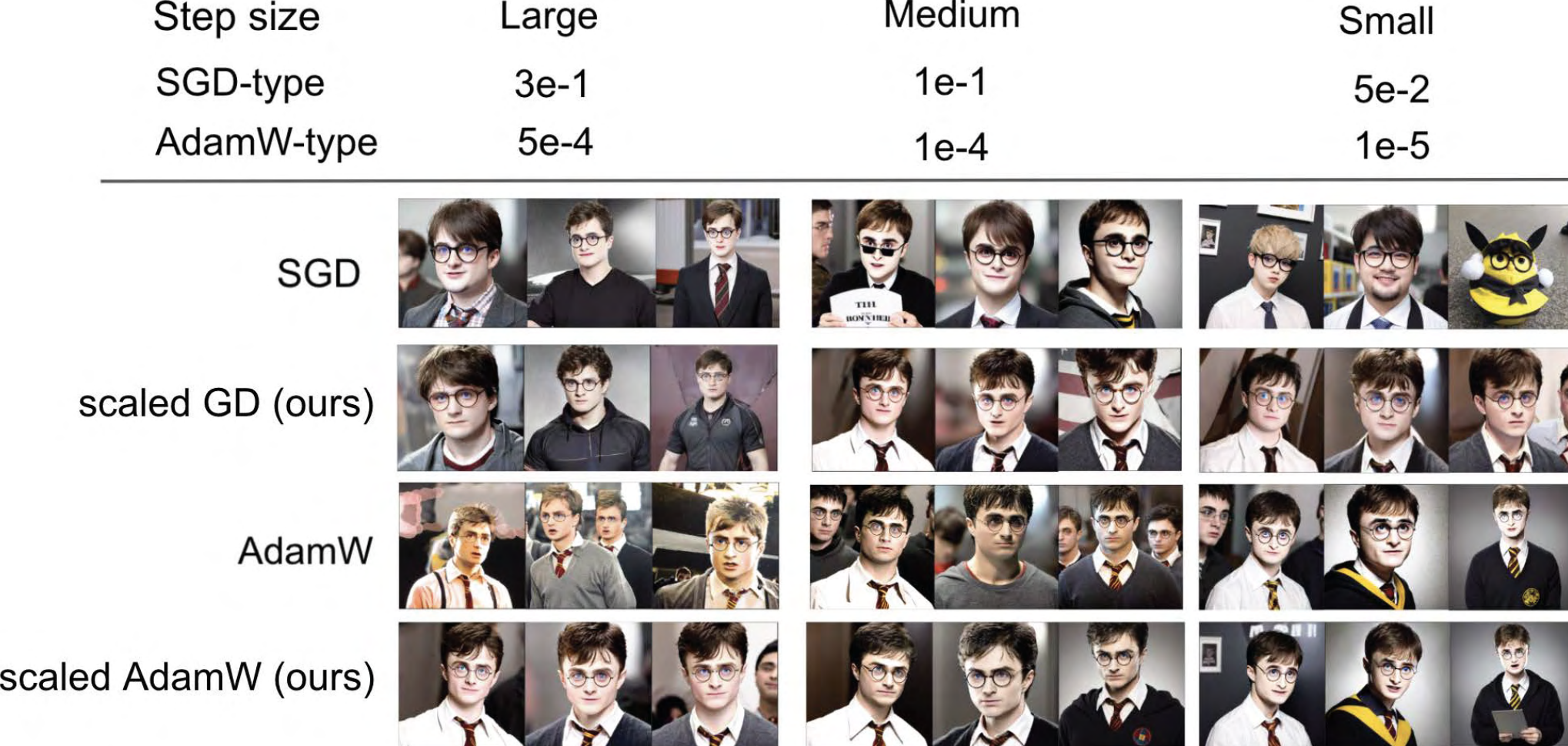}
\caption{Generation results for prompt ``a photo of $\langle V_{\text{potter}} \rangle$" with different optimizers and different learning rates. See Appendix \ref{lr_compare} for experimental details. Default optimizer is AdamW with default learning rate $1e-4$ for U-Net tuning and $1e-5$ for text-encoder tuning. Our scaled optimizers  generate better quality images and are more robust to learning rate changes.}\label{diffusion_main2}
\end{figure*}



\end{document}